\newtheorem{theorem}{Theorem}
\newtheorem{definition}{Definition}
\newtheorem{remark}{Remark}
\newtheorem{lemma}[theorem]{Lemma}
\newtheorem{assumption}{Assumption}
\definecolor{violet}{rgb}{0,0,0}
\definecolor{blue}{rgb}{0,0,0}
\definecolor{brown}{rgb}{0,0,0}
\definecolor{red}{rgb}{0,0,0}
\definecolor{green}{rgb}{0,0,0}
\definecolor{Tan}{rgb}{0,0,0}
\definecolor{cyan}{rgb}{0,0,0}
\definecolor{magenta}{rgb}{0,0,0}
\definecolor{purple}{rgb}{0,0,0}
\definecolor{alizarin}{rgb}{0.82, 0.1, 0.26}
\definecolor{olive}{rgb}{0,0,0}
\definecolor{alizarin}{rgb}{0,0,0}
\definecolor{orange}{rgb}{0,0,0}
\newcommand{\haoran}[1]{\textcolor{Tan}{#1}}
\newcommand{\rachid}[1]{\textcolor{blue}{#1}}
\renewcommand\footnotetextcopyrightpermission[1]{}
\begin{document}

\title{Towards Optimal Heterogeneous Client Sampling in Multi-Model Federated Learning}

\author{Haoran Zhang}
\email{haoranz5@andrew.cmu.edu}
\affiliation{%
  \institution{Carnegie Mellon University}
  \city{Pittsburgh}
  \state{PA}
  \country{USA}
}

\author{Zejun Gong}
\email{zejung@alumni.cmu.edu}
\affiliation{%
  \institution{Carnegie Mellon University}
  \city{Pittsburgh}
  \state{PA}
  \country{USA}
}

\author{Zekai Li}
\email{zekail@alumni.cmu.edu}
\affiliation{%
  \institution{Carnegie Mellon University}
  \city{Pittsburgh}
  \state{PA}
  \country{USA}
}

\author{Marie Siew}
\email{marie\_siew@sutd.edu.sg}
\affiliation{%
  \institution{Singapore University of Technology and Design}
  \country{Singapore}
}

\author{Carlee Joe-Wong}
\email{cjoewong@andrew.cmu.edu}
\affiliation{%
  \institution{Carnegie Mellon University}
  \city{Pittsburgh}
  \state{PA}
  \country{USA}
}

\author{Rachid El-Azouzi}
\email{rachid.elazouzi@univ-avignon.fr}
\affiliation{%
  \institution{University of Avignon}
  \city{Avignon}
  \country{France}
}

\begin{abstract}
Federated learning (FL) allows edge devices to collaboratively train models without sharing local data. As FL gains popularity, clients may need to train multiple unrelated FL models, but communication constraints limit their ability to train all models simultaneously. 
While clients could train FL models sequentially, opportunistically having FL clients concurrently train different models---termed multi-model federated learning (MMFL)---can reduce the overall training time. Prior work uses simple client-to-model assignments that do not optimize the contribution of each client to each model over the course of its training. 
Prior work on single-model FL shows that intelligent client selection can greatly accelerate convergence, but na\"ive extensions to MMFL can violate heterogeneous resource constraints at both the server and the clients.
In this work, we develop a novel convergence analysis of MMFL with arbitrary client sampling methods, theoretically demonstrating the strengths and limitations of previous well-established gradient-based methods. 
Motivated by this analysis, we propose MMFL-LVR, a loss-based sampling method that minimizes training variance while explicitly respecting communication limits at the server and reducing computational costs at the clients.
We extend this to MMFL-StaleVR, which incorporates stale updates for improved efficiency and stability, and MMFL-StaleVRE, a lightweight variant suitable for low-overhead deployment.
Experiments show our methods improve average accuracy by up to 19.1\% over \textit{random} sampling, with only a 5.4\% gap from the theoretical optimum (full client participation).\footnote{Code: \href{https://github.com/Researcher-MMFL/MMFL-OptimalSampling}{https://github.com/Researcher-MMFL/MMFL-OptimalSampling}} 
\end{abstract}

\begin{CCSXML}
<ccs2012>
<concept>
<concept_id>10010147.10010257.10010258.10010259</concept_id>
<concept_desc>Computing methodologies~Supervised learning</concept_desc>
<concept_significance>500</concept_significance>
</concept>
<concept>
<concept_id>10003033.10003099.10003104</concept_id>
<concept_desc>Networks~Network management</concept_desc>
<concept_significance>100</concept_significance>
</concept>
</ccs2012>
\end{CCSXML}

\ccsdesc[500]{Computing methodologies~Supervised learning}
\ccsdesc[100]{Networks~Network management}

\keywords{Federated Learning, Multi-Model Federated Learning, Resource Allocation.}


\maketitle

\section{Introduction}

Federated Learning (FL) \cite{mcmahan2017communication} is an increasingly popular distributed learning paradigm that enables clients to collaboratively train a deep learning model while keeping their (often private) local data local. 
Specifically, in an FL system, a central server maintains a global model and periodically receives updates of model weights from clients within the system. The server aggregates these updates and sends the global model back to the clients to resume local training. 
In this paper, we primarily consider clients that are edge devices, such as smartphones or IoT (Internet-of-Things) devices.

Most FL research assumes each client trains only one model. However, in practice some clients may need to train multiple FL models. For example, a smartphone might act as a client in training \haoran{or fine-tuning} Google keyboard prediction \cite{hard2018federated}, keyword-spotting \cite{shah2020training}, speech recognition \cite{guliani2021training}, and more. 
A simple solution to this problem is to train one model after another sequentially, moving to the next model when the current one reaches either a predefined accuracy threshold or a fixed training time limit. However, such an approach forces the later models to wait for all prior models to train, which may be unfair and causes the total training time to scale linearly with the number of models~\cite{askin2024fedast}. Allowing models to be trained concurrently, which we call \textit{multi-model FL} (MMFL), allows clients to opportunistically train different models and has been shown in some prior works to accelerate overall training~\cite{askin2024fedast, bhuyan2022multi, siew2023fair, siew2024fair,atapour2023multi,liu2022multi,chang2024asynchronous,liu2023multi}.
\rachid{
However, MMFL encounters significant communication bottlenecks due to the requirement to transfer gradients for all tasks in each round. Deep neural networks (DNNs) with millions of parameters generate updates ranging from megabytes to gigabytes \cite{lin2018deep}. As new AI applications increasingly demand larger models, a critical question emerges: How can parallel learning be effectively orchestrated within limited communication bandwidth, even when some clients can train multiple models simultaneously? This communication constraint becomes particularly acute when implementing MMFL on edge devices to train multiple models in parallel.}

%
\textcolor{olive}{
Previous work on MMFL generally overlooks the heterogeneous capabilities of clients to contribute to different models as training progresses, and often assumes each client can only train one model per round due to computation or communication limits \cite{askin2024fedast, bhuyan2022multi, siew2023fair, siew2024fair,atapour2023multi,liu2022multi,chang2024asynchronous,liu2023multi}. In practice, clients may be able to train more than one model in parallel and contribute more effectively to certain models than others. This highlights the need to dynamically identify the most relevant subset of clients at each training stage, taking into account both their computational capacity and local data distribution.}
Our goal in this paper is to propose an MMFL training framework that \textbf{intelligently accounts for such client contributions},  while respecting the communication and computation constraints of the clients,  \rachid{with the aim of obtaining significant reductions in the overall communication costs. }

\subsection{Research Challenges in MMFL Systems}

Prior work on single-model FL (SMFL) has explored intelligent client selection, which aims to reduce clients' communication and computation burden by allowing them to infrequently participate in the model training. We can then optimize the probability that each client contributes to the model at a given time based on the client's estimated ability to reduce the training loss at the current point in the training \cite{chen2020optimal,wang2024delta}. Adapting such methods to MMFL is then a promising approach to achieving our goal of intelligent client selection while respecting communication constraints. Extending these client selection methods to MMFL, however, raises the following research challenges:

\textbf{Heterogeneity in client communication constraints.} 
\textcolor{olive}{While prior work \cite{bhuyan2022multi,siew2023fair,siew2024fair,zhangposter,askin2024fedast} typically assumes uniform communication and computation capabilities across clients—often limiting each client to handling one model update per round—real-world scenarios deviate from this assumption. }
Different clients may have access to different amounts of uplink bandwidth, e.g., clients on an expensive cellular roaming data plan may wish to limit the number of model updates they must send to the server, while those on a high-quality WiFi connection may have no such constraints. These communication limitations necessarily \textit{couple the selection of clients for each concurrently trained model}: asking a communication-constrained client to train one model would preclude that client from training other models in this training round. We must therefore \textit{jointly optimize} client sampling so as to respect these constraints. To the best of our knowledge, \textit{we are the first to consider heterogeneity in these MMFL client constraints. 
}
\textcolor{olive}{Specifically, in this paper, we allow clients to handle multiple training tasks in parallel, assuming their diverse communication and computation abilities, and optimize the training process given the contratints.}

\textbf{Client computation constraints.} Some FL clients may have limited local training capabilities, e.g., embedded low-power sensors \cite{imteaj2021survey}. 
\textcolor{olive}{One track of works minimizes the update variance to stabilize the training \cite{chen2020optimal,wang2024delta,zhangposter}}
, however, in MMFL, solving the resulting optimization problem \textit{requires to know the gradient of each client for each model in a given training round}: in other words, clients must train all models in all rounds, in order for us to select which clients should send their local updates to the server. Such local computation burden, while perhaps feasible when training a single FL model, quickly becomes infeasible as the number of models in our MMFL setting increases. We therefore seek a variant of optimal client sampling that does not require model gradients to find the optimal sampling probabilities. 

\textbf{Increased training variance \textcolor{orange}{from multi-model heterogeneity}.} In the SMFL setting, algorithms that employ selective client participation often compensate for the resulting model bias by scaling their updates accordingly~\cite{chen2020optimal,wang2024delta}. However, such compensation has recently been shown to bring additional variance in the training process, which may be mitigated by incorporating stale updates from previous training rounds into the model aggregation at the server~\cite{jhunjhunwala2022fedvarp, rodio2024fedstale}. We expect MMFL to exacerbate this increase in variance: since clients distribute their training capabilities across multiple models, their ability to contribute to training any one of these models is significantly more volatile than in SMFL, especially as the number of models increases. Existing methods for incorporating stale updates, which focus on statically weighting these updates in the model aggregation regardless of their relevance to the current model, may then be insufficient to stabilize the MMFL training. 
\rachid{We therefore seek a method to further stabilize training by dynamically incorporating stale updates.}
\subsection{Our Contributions}

Given the challenges of building an effective MMFL system with intelligent client sampling that respects heterogeneous client communication constraints, we outline related work in Section~\ref{sec:relatedWork} and then make the following contributions: 
\begin{enumerate}
\item We first assume a given client-model assignment strategy and \textbf{analyze the resulting convergence of MMFL} (Section \ref{sec:SysModel}). 
As assigning different clients to models in MMFL may introduce training bias, we adjust our model aggregation to ensure unbiased convergence. 
Our theoretical analysis reveals how client sampling strategies impact training stability, going beyond existing results in analyzing client sampling for a single FL model. 

\item 
Through our analysis, we theoretically demonstrate the advantages of gradient-based optimal sampling methods discussed in prior work \cite{chen2020optimal,wang2024delta,zhangposter}, \rachid{while also revealing their computational and communication overheads, as well as overlooked factors in their optimization objectives. To address these limitations, we propose MMFL-LVR (Section \ref{sec:variance-reduce}), which minimizes the variance of the surrogate objective per round while strictly adhering to the system's communication constraints, thereby addressing our first research challenge.
Unlike gradient-based methods, MMFL-LVR leverages loss values to design optimal sampling strategies, tackling the second research challenge. 
} 


\item 
\rachid {To further stabilize training and mitigate the impact of participation variance caused by client sampling methods, particularly under communication constraints that limit the number of clients per training round, we propose MMFL-StaleVR. This method optimally reuses stale updates from clients in model aggregation (Section \ref{sec:varianceH}). Unlike previous approaches that apply a single global coefficient to all stale updates~\cite{jhunjhunwala2022fedvarp, rodio2024fedstale}, MMFL-StaleVR dynamically adjusts the contribution of each client's most recent update based on its effectiveness in stabilizing training. This approach directly addresses the first and third research challenges. }
\item \rachid{MMFL-StaleVR requires all clients to perform training in order to optimally reuse stale updates during model aggregation. To address this limitation, we further develop an estimate of the optimal solution for stale updates in model aggregation (MMFL-StaleVRE, Section \ref{sec:varianceH}), which requires only selected clients to perform training, thus achieving all three research challenges. }

\item \textcolor{black}{We conduct \textbf{extensive experiments} in various settings on real-world datasets, demonstrating that our methods significantly outperform simple extensions of SMFL baselines as well as existing MMFL methods. By adopting aggregation with stale updates to mitigate the impact of participation heterogeneity, MMFL-StaleVR achieves 94\% of the theoretical best performance \textcolor{brown}{under the same experiment settings} (Section \ref{sec:exp}). } 
\end{enumerate}
We conclude the paper in Section~\ref{sec:conclusion}. 
Due to space limitations, all proofs are in the Supplementary Material (Section \ref{sec:app}).

\section{Related Work}
\label{sec:relatedWork}

\textbf{Multi-Model Federated Learning.}
\textcolor{black}{
Multi-model FL was first proposed in \cite{bhuyan2022multi}. 
Many subsequent studies \cite{bhuyan2022multi, siew2023fair, siew2024fair, liu2022multi, chang2024asynchronous,zhangposter, atapour2023multi,liu2023multi,askin2024fedast} have sought to improve the training performance of all models by effectively allocating the available training resources (clients) across models. 
This resource allocation problem can be solved via multi-armed bandits \cite{bhuyan2022multi}, using reinforcement learning \cite{liu2022multi,atapour2023multi}, or directly constructing the probability distribution of training each model \cite{siew2023fair,siew2024fair}. 
Other works~\cite{chang2024asynchronous,askin2024fedast} address variations in model complexity and local training times by proposing asynchronous MMFL training schemes. 
While these works improve the average or minimum accuracy across MMFL models, they generally overlook heterogeneity in clients' resources \textcolor{red}{and dataset distributions}. 
Client heterogeneity in MMFL introduces new
client-model allocation challenges because: 
1) Clients impose different communication costs at the server, as clients that train more models must communicate all of their updates to the server; and 2) The system may converge towards clients with more powerful communication or computational abilities since they can contribute to more models per round, causing convergence bias. 
}



\textbf{Single-Model Client Sampling.} 
Allocating clients to models in MMFL can be viewed as sampling a set of clients to train each model in each round, generalizing the
\textcolor{black}{
well-studied client sampling in single-model FL \cite{cho2020client, luo2022tackling, wang2024delta, chen2020optimal,fraboni2021clustered,ruan2024valuable}. These strategies generally define metrics that evaluate the ``importance'' of clients to the training task and prioritize sampling accordingly. 
Some na\"ive methods \cite{cho2020client} can be easily adapted for MMFL settings, while others \cite{luo2022tackling, wang2024delta, chen2020optimal,fraboni2021clustered,ruan2024valuable} cannot.
Na\"ive methods \cite{cho2020client} rank clients based on their importance metrics and select the top ones. 
However, they may 
introduce training bias, as some clients are selected more frequently than others, leading to a non-vanishing bias in the final model \cite{cho2020client, cho2022towards}. 
To address sampling bias, more sophisticated methods \cite{luo2022tackling, wang2024delta, chen2020optimal,fraboni2021clustered} 
also adjust 
the aggregation coefficients according to their constructed client sampling distribution, theoretically avoiding bias. However, these methods cannot be directly applied to MMFL because they generate only the probability of sampling a client; MMFL also requires determining which model a client should train given heterogeneous client resources.
Client-model assignment distributions in previous MMFL works \cite{siew2023fair, siew2024fair,askin2024fedast} 
are uniform across all clients, ignoring client heterogeneity and leaving significant room for improvement.
\textcolor{olive}{\cite{zhangposter} accounts for dataset heterogeneity across clients but overlooks differences in computational capabilities. Moreover, it requires collecting gradients from all clients to form the distribution, which is costly in MMFL settings. 
}
}

\section{\textcolor{brown}{System Model}} 
\label{sec:SysModel}
\textcolor{brown}{
In MMFL (see Fig. \ref{fig:overview}), we assume \textcolor{red}{$S$} models are trained iteratively across global rounds indexed by $\tau=1,2,\ldots, T$. Within each global round $\tau$, the server allocates clients to each model $s = 1,2,\ldots,S$. 
Clients then train each of their assigned models locally, as in SMFL. After the local training, clients upload their local model updates to the server, where they are aggregated for each model, \textcolor{red}{and the process repeats.} }

\begin{figure}
    \centering
    \includegraphics[width = 0.48\textwidth]{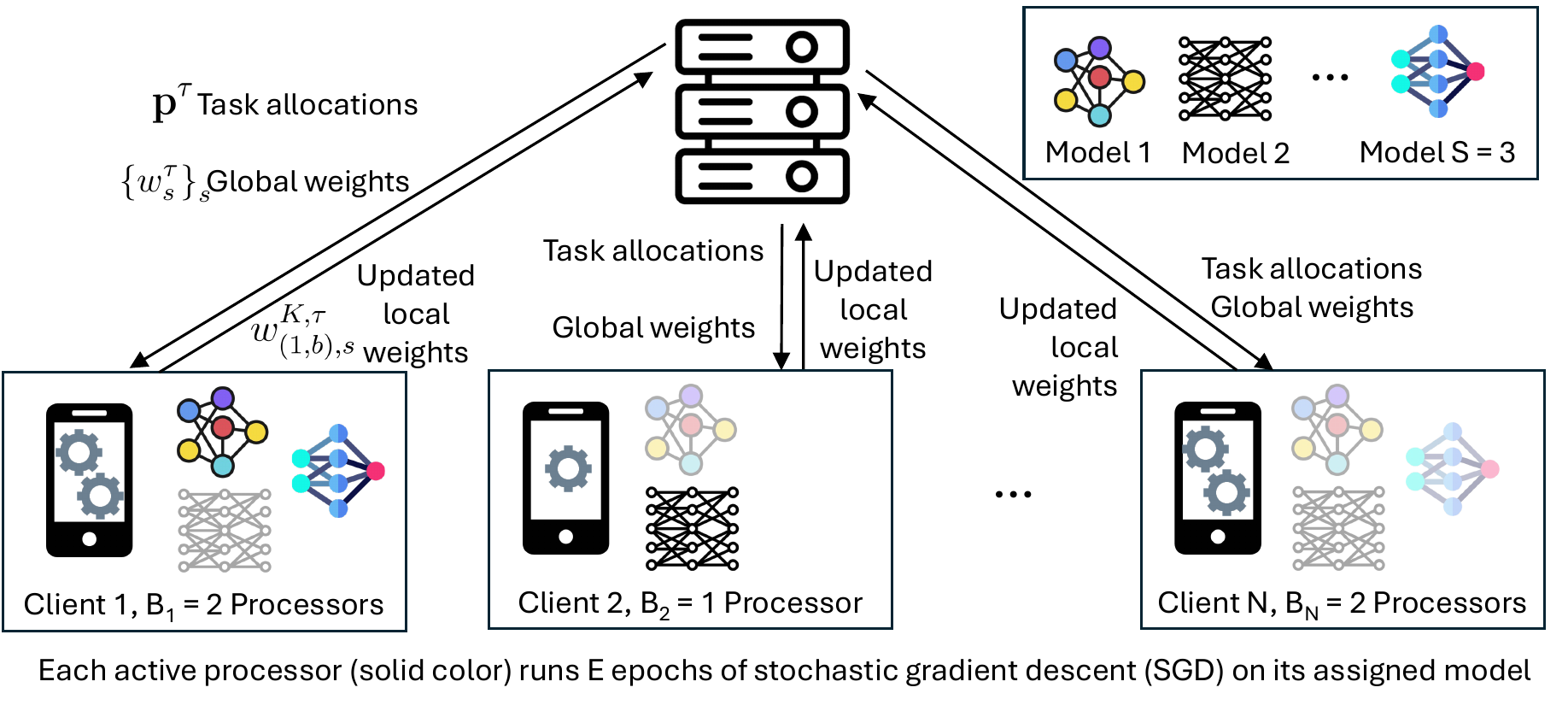}
    \caption{Overview of the MMFL system for an example with $S = 3$ models. In each global round $\tau$, the server probabilistically assigns models to a subset of processors at the FL clients. Models that each client has the data to train are shown at each client, and faded models indicate ones that have not been assigned in this training round.}
    \label{fig:overview}
\end{figure}

\subsection{\textcolor{brown}{Problem Formulation}}
Similar to previous MMFL works \cite{bhuyan2022multi, siew2023fair, siew2024fair, askin2024fedast, liu2022multi, chang2024asynchronous,liu2023multi,atapour2023multi}, we consider an MMFL system with $N$ clients and $S$ unrelated models. \textcolor{brown}{Define $\mathcal{S}$ as the set of models}.  
We model clients' \emph{heterogeneous communication and computational abilities} by assuming that each client $i$ can train $B_i$ models per global training round. For ease of description, we say that client $i$ has $B_i$ \textbf{processors} to handle FL training tasks\textcolor{brown}{, where a ``training task'' means performing local training for a model and sending the result to the server within one global round. 
Thus, \textbf{$B_i$ represents the smaller of two limits}: the maximum number of models a client can send to the server within a single round, \textbf{given communication constraints}, and the maximum number of models the client can train in parallel during that round, \textbf{given computation constraints}.
Define $\mathcal{B}_i$ as the set of processors of client $i$.\footnote{In real applications, each training task need not be assigned to a dedicated client processor; we use the concept of a \textit{processor} as a useful abstraction.} 
This model generalizes that of previous MMFL works \cite{bhuyan2022multi, siew2023fair, siew2024fair, askin2024fedast, liu2022multi, chang2024asynchronous,liu2023multi,atapour2023multi}, which assume a client can only train one model per round, i.e., $B_i = 1$ for all clients $i$.}
Since MMFL includes multiple unrelated FL training tasks within one system, some clients \emph{may lack the datasets for specific models}, \textcolor{red}{which further complicates resource allocation.
} We define the set of clients available for model $s$ as $\mathcal{N}_{s}$ and the set of models available for client $i$ as $\mathcal{S}_i$.  
\textcolor{brown}{We define the \textbf{objective for each model $s$} as: }
\begin{align}
\min_{w_s}F_s= \min_{w_s} \sum_{i \in \mathcal{N}_s} d_{i,s} f_{i,s}(w_s).
\label{eq:ModelS_obj}
 \end{align}
\textcolor{black}{
where $w_s$ denotes the parameters of model $s$.  
The local objective $f_{i,s}(w_s)$ is defined as the empirical risk over local data: 
$f_{i,s}(w_s)=\frac{1}{n_{i,s}} \sum_{\xi\in\mathcal{D}_{i,s}}l(w_s,\xi)$, where $\mathcal{D}_{i,s}$ is the set of datapoints in client $i$ that can be used to train model $s$, and $n_{i,s} = \left|\mathcal{D}_{i,s}\right|$ is the number of datapoints that client $i$ can use to train model $s$. The loss function $l$ evaluates model performance on one datapoint, e.g., with cross-entropy loss. }
\textcolor{black}{$d_{i,s}=n_{i,s}/\sum_{j\in \mathcal{N}_s} n_{j,s}$ denotes the fraction of client $i$'s dataset size relative to the total dataset size for model $s$.} 
Based on (\ref{eq:ModelS_obj}), 
the \textbf{objective of the MMFL system} is to minimize the sum of the objectives of the $S$ models, i.e.,
\begin{align}
\min_{w_1, \cdots, w_S}F=\min_{w_1, \cdots, w_S} \sum_{s=1}^S \sum_{i\in \mathcal{N}_s} d_{i,s} f_{i,s}(w_s)
\label{eq:MMFLsystemObj}
\end{align}

\subsection{MMFL Training Procedure}
\label{sec:MMFLsteps}
\textcolor{black}{Before the local training in each round, the server will firstly assign training tasks to clients.  The task allocation process consists of assigning 
$c$ ($c\leq B_i$) training tasks to each client $i$ in each round.}
We define $p_{s|(i,b)}^\tau$ as the probability of assigning processor $b$ at client $i$, which we denote as processor $(i,b)$, to train model $s$ in global round $\tau$ and send its result to the server. 
Thus, the server samples from the distribution $\mathbf{p}^\tau=\{p_{s|(i,b)}^\tau\}_{s\in \mathcal{S},i\in \mathcal{N}_s, b\in\mathcal{B}_i} $ in assigning tasks to client processors. \rachid{This probability distribution is designed by the server (which we will optimize in Section~\ref{sec:variance-reduce}) and plays a crucial role in accelerating convergence under communication constraints.}
After the training task allocation, active clients execute the assigned tasks and send updates to the server for aggregation. 
The detailed steps in each global round $\tau$ are described below: \\
\textbf{Training Task Allocation:} Given the probability distribution $\mathbf{p}^\tau$, the server generates the sets of participating processors for each model $s$,  $(\mathcal{A}_{\tau,s})_{s\in \mathcal{S}}$ as follows: for each $s\in \mathcal{S}$ and $i\in \mathcal{N}_s$, 
\textcolor{orange}{processor} $(i,b)\in \mathcal{A}_{\tau,s}$ \textcolor{orange}{trains it} with probability  $p_{s|(i,b)}^\tau$. The assignment of each processor to a task $s$ is independent of other processors.\\
\textbf{Synchronization:} Each processor $(i,b)\in \mathcal{A}_{\tau,s}$, $s\in \mathcal{S}$,  initializes the model weights with the global model $s$ weights $w_s^\tau$: $w_{(i,b),s}^{1,\tau} = w_s^{\tau}$, where $w_{(i,b),s}^{1,\tau}$ denotes processor $(i,b)$'s initial weights of model $s$ in round $\tau$. \\
\textbf{Local Training:} Each processor $(i,b)\in \mathcal{A}_{\tau,s}$, $s\in \mathcal{S}$, performs local training by running mini-batch stochastic gradient descent for $K$ local epochs:  
 $w_{(i,b),s}^{t+1,\tau} = w_{(i,b),s}^{t,\tau} - \eta_{\tau,s} \nabla f_{i,s}(w_{(i,b),s}^{t,\tau}, \xi_{i,s}^{t,\tau})$ for $t = 1,2,\ldots,K$, where   $\xi_{i,s}^{t,\tau}$ represents a random mini-batch of datapoints from local data distribution $\mathcal{D}_{i,s}$ and $\eta_{\tau,s}$ the local learning rate. 
Define the change in model weights produced by processor $(i,b)$ as $G_{(i,b),s}^\tau = \eta_{\tau,s} \sum_{t=1}^{K}\nabla f_{i,s}(w_{(i,b),s}^{t,\tau},\xi_{i,s}^{t,\tau})$.\\
\textbf{Aggregation:} 
Clients \textcolor{brown}{send} their local updates to the server, which aggregates the weights for each model $s$:
\begin{align}
w_{s}^{\tau+1} &= w_s^{\tau} - \sum_{(i,b) \in \mathcal{A}_{\tau,s}} P_{(i,b),s}^\tau G_{(i,b),s}^\tau\label{eq:aggregation}
\end{align}
where $P_{(i,b),s}^\tau = \frac{d_{i,s}}{B_i p_{s|(i,b)}^\tau}$. 
The aggregation coefficient $P_{(i,b),s}^\tau$ acts as a scaling factor to guarantee an unbiased estimator for full participation training, i.e., all clients train all models:
\begin{align}
&\mathbb{E}\left[\sum_{(i,b) \in \mathcal{A}_{\tau,s}} P_{(i,b),s}^\tau G_{(i,b),s}^\tau\middle|\; G_{(i,b),s}^\tau\right] \\
&= \sum_{i \in \mathcal{N}_s} \sum_{b=1}^{B_i}  \frac{  \mathbb{E}\left[\mathbbm{1}_{(i,b) \in \mathcal{A}_{\tau,s}} \right] d_{i,s} G_{(i,b),s}^\tau}{B_i p_{s|(i,b)}^\tau} 
= \sum_{i \in \mathcal{N}_s} \sum_{b=1}^{B_i} \frac{d_{i,s}}{B_i} G_{(i,b),s}^\tau
\label{eq:fullparticipation}
\end{align}
Here $\mathbbm{1}_{(i,b)\in\mathcal{A}_{\tau,s}}$ is an indicator of whether processor $(i,b)$ is allocated to 
model $s$ for training in round $\tau$. 
 We simplify this notation to 
$\mathbbm{1}_{(i,b)}^{s,\tau}$ in the following. 
\textcolor{black}{
The above expectation 
is taken over $\mathcal{A}_{\tau,s}$ (set of participating processors). Eq. \ref{eq:fullparticipation} is full participation update using mini-batch stochastic gradient descent (SGD). }\textit{FedAvg} \cite{mcmahan2017communication} can be viewed as a special case of our MMFL system with $S=1$ and $B_i=1$ for all clients. 
\begin{remark}[Independent processor sampling]\label{remark:ProcessorClarify}
\textcolor{brown}{Independent client sampling is widely adopted in single-model sampling methods \cite{chen2020optimal,luo2022tackling,wang2024delta,fraboni2021clustered}. In MMFL, with heterogeneous computational and communication abilities, independent sampling at the \textbf{processor level} helps manage participating clients given resource constraints and provably avoids training bias.}
Since we assume each processor's sampling is independent, $l$ processors ($1 \leq l \leq B_i$) in client $i$ could train the same model. In practice, client $i$ can train with one processor, and upload $l\cdot G_{(i,b),s}^\tau$ as its update to the server. 
\end{remark}

Before designing optimal sampling methods with optimized distribution $\mathbf{p}^{\tau}$, we study MMFL convergence and highlight dominant terms critical for accelerating convergence. 
These terms \textcolor{olive}{theoretically confirm the benefits of prior gradient-based optimal sampling methods \cite{chen2020optimal,wang2024delta,zhangposter}, while also revealing their potential drawbacks that may destabilize training---insights that directly inform the design of our proposed methods.}

\subsection{Convergence Analysis}
\label{sec:convergence}
We analyze the convergence of the MMFL system \textcolor{orange}{with heterogeneous client computation capabilities} described above, revealing how the server's choice of $\mathbf{p}^{\tau}$ influences the convergence speed. This analysis involves several assumptions. Assumptions \ref{assumption:L}-\ref{assumption:G} are standard in FL literature \cite{cho2020client,cho2022towards,cho2023convergence,ruan2020towards,rodio2024fedstale,jhunjhunwala2022fedvarp,chen2020optimal}. 
Assumption \ref{assumption:p} ensures a lower-bounded probability to prevent extreme $\mathbf{p}^\tau$ from causing some clients to disappear entirely from the training as is common in FL works~\cite{xiang2024efficient}. In Definition \ref{assumption:e}, we quantify clients' non-iid data distribution, which is common in real-world FL applications. 

\begin{assumption}[$L$-smoothness]\label{assumption:L}
Each $f_{i,s}$ is L-smooth, and thus $F=\sum_{s=1}^S \sum_{i\in \mathcal{N}_s} d_{i,s} f_{i,s}$ is also L-smooth.  
\end{assumption}
\begin{assumption}[Strong convexity]
Each $f_{i,s}$ is $\mu$-strongly convex, and thus $F=\sum_{s=1}^S \sum_{i\in \mathcal{N}_s} d_{i,s} f_{i,s}$ is as well.  
\end{assumption}
\begin{assumption}[Bounded variance]
The variance of the mini-batch gradients is bounded: 
\[\mathbb{E}_{\xi_{i,s}\sim \mathcal{D}_{i,s}} (\|\nabla f_{i,s}(w,\xi_{i,s}) - \nabla f_{i,s}(w)\|^2) \leq \sigma_{i,s}^2,\; \forall i,s.\] 
\end{assumption}
\begin{assumption}[Bounded gradients]\label{assumption:G}
The expected squared norm of the gradients is uniformly bounded: 
\[\mathbb{E}_{\xi_{i,s}\sim \mathcal{D}_{i,s}}(\|\nabla f_{i,s}(w,\xi_{i,s}) \|^2) \leq \bar{\sigma}^2,\; \forall i,s.\]
\end{assumption}


\begin{assumption}[Lower-bounded probability]\label{assumption:p}
There exists a lower bound $\theta>0$, such that $p_{s|(i,b)}^\tau \geq \theta$ for all $i,b,s,\tau$.
\end{assumption}

\begin{definition}[\textcolor{brown}{Non-iid data distribution}]\label{assumption:e}
We quantify the non-iid level of clients' data distribution as: 
\( \Gamma_{i,s} = f_{i,s}(w_s^*) - f_{i,s}(w_s^{*,i}),\) 
where \( w_s^* \) minimizes the objective $F_s$ and \( w_s^{*,i} \) minimizes $f_{i,s}$ for client \( i \). There exists a gap between the global optimal and the local optimal weights: \( \|w_s^* - w_s^{*,i}\| \geq e_w > 0 \), which also implies \( \|\nabla f_{i,s}(w_s^\tau)\| \geq e_f > 0 \) for all \( i, s, \tau\). 
\end{definition}

\begin{theorem}[Convergence] Let $w_s^*$ denote the optimal weights of model $s$. If the learning rate $\eta_{\tau,s}=\frac{16}{\mu } \frac{1}{(\tau+1)K+\gamma_{\tau,s}}$, then
\begin{align}
\mathbb{E}\left(\|w_{s}^{\tau} - w_s^*\|^2\right)\leq \frac{V_\tau}{(\tau K +\gamma_{\tau,s})^2} \label{eq:convergence}
\end{align}
We define
$\gamma_{\tau,s}=\max \{\frac{32L}{\mu},4K\sum_{i\in \mathcal{N}_s}\sum_{b=1}^{B_i} \mathbbm{1}_{(i,b)}^{s,\tau} P_{(i,b),s}^\tau\}$, \\
$V_\tau=\max\{\gamma_\tau^2 \mathbb{E}(\|w_s^0-w_s^*\|^2), (\frac{16}{\mu})^2\sum_{\tau'=0}^{\tau-1}z_{\tau'}\}$, \\
$z_{\tau'}=\mathbb{E}[Z_g^{\tau'}+Z_l^{\tau'}+Z_p^{\tau'}]$, with each term in $z_{\tau'}$ defined as:\\
\(
\mathbb{E}[Z_g^\tau]=K\sum_{i\in \mathcal{N}_s}\sum_{b=1}^{B_i} \frac{(\frac{d_{i,s}}{B_i}\sigma_{i,s})^2}{p_{s|(i,b)}^\tau}+4LK\sum_{i\in \mathcal{N}_s} d_{i,s} \Gamma_{i,s}\\
+\max(\frac{B_i}{d_{i,s}}) \mathbb{E}[\sum_{i\in \mathcal{N}_s}\sum_{b=1}^{B_i} \frac{(\frac{d_{i,s}}{B_i})^2 \sum_{t=1}^{K} \| \nabla {f}_{i,s} (w_{(i,b),s}^{t,\tau}) \|^2}{p_{s|(i,b)}^\tau}]\),\\
\(
\mathbb{E}[Z_l^\tau]=R \mathbb{E}[V_s  \sum_{i\in \mathcal{N}_s}\sum_{b=1}^{B_i} (\mathbbm{1}_{(i,b)}^{s,\tau}P_{(i,b),s}^\tau f_{i,s}(w_s^\tau)-\frac{d_{i,s}}{B_i}f_{i,s}(w_s^\tau))^2]\), where $R=\frac{2K^3\bar{\sigma}^2}{e_w^2e_f^2 \theta}, V_s=\sum_{i\in\mathcal{N}_s}B_i$, \\
\(
\mathbb{E}[Z_p^\tau]=(\frac{2}{\theta}+K(2+\frac{\mu}{2L})) K^2\bar{\sigma}^2 
+\frac{2K^3\bar{\sigma}^2}{\theta} \mathbb{E}[(\sum_{i\in \mathcal{N}_s}\sum_{b=1}^{B_i} \mathbbm{1}_{(i,b)}^{s,\tau}P_{(i,b),s}^\tau -1)^2]
\).
\label{thoerem:convergence}
\end{theorem}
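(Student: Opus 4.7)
The plan is to follow the now-standard FedAvg-style convergence template used in~\cite{cho2020client, cho2022towards, chen2020optimal}, but carefully tracking the new sources of randomness introduced by our independent \emph{processor-level} sampling with heterogeneous budgets $B_i$, and ultimately re-expressing one of the resulting variance pieces in terms of loss values so that it matches $Z_l^\tau$ and thereby motivates the loss-based sampler of Section~\ref{sec:variance-reduce}.

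First, I would fix a model $s$ and unroll the aggregation~(\ref{eq:aggregation}) via a virtual-averaging sequence $\bar{w}_s^{t,\tau}=\sum_{i,b}\tfrac{d_{i,s}}{B_i}w_{(i,b),s}^{t,\tau}$, mirroring the construction used in single-model analyses. Writing
\[
\|w_s^{\tau+1}-w_s^*\|^2 = \Big\|w_s^\tau - \textstyle\sum_{(i,b)\in\mathcal{A}_{\tau,s}}P_{(i,b),s}^\tau G_{(i,b),s}^\tau - w_s^*\Big\|^2,
\]
I would add and subtract the full-participation update from~(\ref{eq:fullparticipation}) so that, upon taking expectation over $\mathcal{A}_{\tau,s}$, the cross term vanishes by unbiasedness. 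This leaves a ``deterministic'' bias part to be handled by $\mu$-strong convexity and $L$-smoothness, plus a variance part driven entirely by the processor sampling.

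Second, I would bound the three error sources separately. The deterministic part yields the contraction $(1-\mu\eta_{\tau,s}K)\mathbb{E}\|w_s^\tau-w_s^*\|^2$ plus a local-drift penalty controlled by Assumptions~\ref{assumption:L}--\ref{assumption:G}; the non-iid contribution $4LK\sum_i d_{i,s}\Gamma_{i,s}$ in $Z_g^\tau$ then drops out by applying $L$-smoothness at $w_s^{*,i}$ from Definition~\ref{assumption:e}. The processor-sampling variance, after using independence across $(i,b)$, splits into a gradient-weighted part that contributes the last summand of $Z_g^\tau$ (with the $\max_i B_i/d_{i,s}$ factor arising when one upper-bounds local iterates uniformly over processors), and a $K^2\bar{\sigma}^2 \operatorname{Var}(\sum_{i,b}\mathbbm{1}_{(i,b)}^{s,\tau}P_{(i,b),s}^\tau)$ piece matching the second summand of $Z_p^\tau$. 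Assumption~\ref{assumption:p} is used here to keep the $1/p_{s|(i,b)}^\tau$ factors bounded.

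Deriving $Z_l^\tau$ is the main obstacle: to turn a gradient-variance contribution into the loss-squared form, I would use the lower bounds in Definition~\ref{assumption:e} to replace $\|\nabla f_{i,s}(w_s^\tau)\|^2$ by $f_{i,s}(w_s^\tau)^2$, paying the constant $R = 2K^3\bar{\sigma}^2/(e_w^2 e_f^2\theta)$ that appears explicitly in $Z_l^\tau$; Cauchy--Schwarz over processors then produces the $V_s=\sum_i B_i$ factor. Finally, plugging the stated $\eta_{\tau,s}$ into the one-step recursion
\[
\mathbb{E}\|w_s^{\tau+1}-w_s^*\|^2 \le (1-\mu\eta_{\tau,s}K)\mathbb{E}\|w_s^\tau-w_s^*\|^2 + \eta_{\tau,s}^2 z_\tau,
\]
I would close the proof by induction on $\tau$, with the first branch of the $\max$ in $V_\tau$ handling the base case and the second branch handling the telescoping $\sum_{\tau'}z_{\tau'}$ exactly as in~\cite{cho2020client}. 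The hardest technical piece will be isolating the processor-sampling randomness from the local-drift randomness so that the recursion stays closed-form; this is precisely why $\gamma_{\tau,s}$ is itself allowed to depend on $\sum_{i,b}\mathbbm{1}_{(i,b)}^{s,\tau}P_{(i,b),s}^\tau$, which absorbs the cross variance that would otherwise break the induction.
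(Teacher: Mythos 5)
Your overall template (a virtual averaged sequence, a one-step recursion of the form $\mathbb{E}\|w_s^{\tau+1}-w_s^*\|^2\le(1-c\,\mu\eta_{\tau,s}K)\mathbb{E}\|w_s^\tau-w_s^*\|^2+\eta_{\tau,s}^2 z_\tau$, and induction with the two-branch $\max$ in $V_\tau$) is the same as the paper's, and your handling of $Z_g^\tau$ and of the final induction would go through. However, the two parts that are specific to this theorem are not derived the way you describe, and one of them is a genuine gap.

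First, the decomposition. You center the update on the full-participation aggregate and kill the cross term by taking expectation over $\mathcal{A}_{\tau,s}$. The paper instead keeps the sampled coefficients $P^\tau_{(i,b),s}$ (indicators included) inside a per-epoch virtual sequence $\overline{w}_s^{\tau K+t+1}=\overline{w}_s^{\tau K+t}-\eta_{\tau,s}\sum_{(i,b)}P^\tau_{(i,b),s}\nabla f_{i,s}(w^{t,\tau}_{(i,b),s})$, kills the cross term via mini-batch unbiasedness ($\mathbb{E}_\xi[g]=\bar g$), and only takes the expectation over $\mathcal{A}_{\tau,s}$ at the end of the round. This matters because the participation-variance term $\mathbb{E}[(\sum_{i,b}\mathbbm{1}^{s,\tau}_{(i,b)}P^\tau_{(i,b),s}-1)^2]$ in $Z_p^\tau$ is produced by the paper's local-drift lemma: the local iterates drift from a virtual trajectory whose total step weight is $\sum_{i,b}P^\tau_{(i,b),s}$ rather than $1$. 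With your full-participation virtual average that term does not arise naturally, and you would need a separate argument to recover it.

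Second, and more seriously, the derivation of $Z_l^\tau$. You propose to replace $\|\nabla f_{i,s}(w_s^\tau)\|^2$ by $f_{i,s}(w_s^\tau)^2$ using Definition~\ref{assumption:e}; but that definition only supplies the lower bounds $\|\nabla f_{i,s}\|\ge e_f$ and $\|w_s^*-w_s^{*,i}\|\ge e_w$, which cannot trade a gradient norm for a loss value, and a gradient-variance term carries no factor of the form $(\mathbbm{1}^{s,\tau}_{(i,b)}P^\tau_{(i,b),s}-\tfrac{d_{i,s}}{B_i})^2$, which is the essential structure of $Z_l^\tau$. In the paper, $Z_l^\tau$ is carved out of the \emph{same} participation-variance term that feeds $Z_p^\tau$: the curvature ratio $\tfrac{\mu}{2L}$ multiplying the drift penalty is bounded above by $\tfrac{2}{e_w^2e_f^2}\min_{(i,b)}(f_{i,s}(w_s^\tau))^2\le\tfrac{2}{e_w^2e_f^2}\,V_s/\sum_{(i,b)}(f_{i,s}(w_s^\tau))^{-2}$, and Titu's lemma (Cauchy--Schwarz in Engel form) then converts $\bigl(\sum_{(i,b)}(\mathbbm{1}P-\tfrac{d_{i,s}}{B_i})\bigr)^2/\sum_{(i,b)}f_{i,s}^{-2}$ into $\sum_{(i,b)}(\mathbbm{1}Pf_{i,s}-\tfrac{d_{i,s}}{B_i}f_{i,s})^2$, yielding the factor $R=2K^3\bar\sigma^2/(e_w^2e_f^2\theta)$ and the $V_s$ prefactor. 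Without this step your plan does not produce $Z_l^\tau$ in the stated form. (A minor point: $\gamma_{\tau,s}$ does not absorb any cross variance; it is chosen only so that $\eta_{\tau,s}\le\tfrac{1}{2L}$ and $\eta_{\tau,s}\le\tfrac{4}{\mu K\sum_{(i,b)}P^\tau_{(i,b),s}}$, which the recursion requires.)
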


\begin{remark}[Upper bound convergence to zero]
\haoran{From Assumption \ref{assumption:p}, it is evident that $\gamma_{\tau,s}$ in Eq.~\eqref{eq:convergence} is bounded. Thus, as the global round index $\tau$ increases, the denominator of Eq.~\eqref{eq:convergence} grows quadratically, while the numerator grows linearly. Consequently, $\lim_{\tau\rightarrow\infty}\mathbb{E}\left(\|w_{s}^{\tau} - w_s^*\|^2\right) \rightarrow 0$. 
}
\end{remark}

\textbf{Interpretation:} 
Considering $B_i$ processors for each client is analogous to having $B_i$ clients with identical datasets, with each client's contribution reduced by $\frac{1}{B_i}$. 
Equation~\eqref{eq:convergence}'s bound is mainly influenced by the terms $\mathbb{E}[Z_g^\tau]$, $\mathbb{E}[Z_l^\tau]$, and $\mathbb{E}[Z_p^\tau]$.

In $\mathbb{E}[Z_g^\tau]$, the first two terms reflect the influence of the non-iid data (through $\Gamma_{i,s}$) 
and the variance of the mini-batch gradient ($\sigma_{i,s}$) on the convergence upper bound. These terms are determined by datasets and local training batch size. If the data is very non-iid across clients, or the mini-batch gradient is inaccurate, $\mathbb{E}[Z_g^\tau]$'s value increases, slowing convergence speed. The third term, $\sum_{i\in \mathcal{N}_s}\sum_{b=1}^{B_i} \frac{(\frac{d_{i,s}}{B_i})^2 \sum_{t=1}^{K} \| \nabla {f}_{i,s} (w_{(i,b),s}^{t,\tau}) \|^2}{p_{s|(i,b)}^\tau}$, indicates that the relationship between the norm of local updates and sampling probability distribution could influence the convergence speed. 
\textcolor{brown}{This term reflects the scale of the \textbf{variance of sampled updates} during training, which can be expressed as:
\begin{align}
\sum_{s=1}^S \mathbb{E}\left[\left\|\sum_{(i,b)\in\mathcal{A}_{\tau,s}} \frac{P_{(i,b),s}^\tau G_{(i,b),s}^\tau}{\eta_{\tau,s}} - \sum_{i\in\mathcal{N}_s}\sum_{b=1}^{B_i} \frac{d_{i,s}}{B_i} \frac{G_{(i,b),s}^\tau}{\eta_{\tau,s}}\right\|^2
\right].\label{eq:OSvariance}
\end{align}
In Eq. \eqref{eq:OSvariance}, we note  that the expectation of the sampled update (left term) is the full participation update.
Using the fact that sampling at each processor is independent of other processors, we now show that the term $\sum_{i\in \mathcal{N}_s}\sum_{b=1}^{B_i} \frac{(\frac{d_{i,s}}{B_i})^2 \sum_{t=1}^{K} \| \nabla {f}_{i,s} (w_{(i,b),s}^{t,\tau}) \|^2}{p_{s|(i,b)}^\tau}$ reflects the scale of the variance of sampled updates:}
\begin{align}
&\mathbb{E}\left[\left\|\sum_{(i,b)\in\mathcal{A}_{\tau,s}}  \frac{P_{(i,b),s}^\tau G_{(i,b),s}^\tau}{\eta_{\tau,s}} - \sum_{i\in\mathcal{N}_s}\sum_{b=1}^{B_i} \frac{d_{i,s}}{B_i} \frac{G_{(i,b),s}^\tau}{\eta_{\tau,s}}\right\|^2\right] \label{eq:varianceRewriteBegin} \\
&= \sum_{i\in\mathcal{N}_s}\sum_{b=1}^{B_i} \frac{\|\frac{d_{i,s}}{B_i \eta_{\tau,s}} G_{(i,b),s}^\tau\|^2}{ p_{s|(i,b)}^\tau} - 
\sum_{i\in\mathcal{N}_s}\sum_{b=1}^{B_i} \left\|\frac{d_{i,s} G_{(i,b),s}^\tau}{B_i \eta_{\tau,s}}\right\|^2 \label{eq:varianceRewrite}
\end{align}
\haoran{Detailed steps to derive Eq. \eqref{eq:varianceRewrite} from Eq. \eqref{eq:varianceRewriteBegin} are illustrated in the Supplementary Material.}
The distribution $\mathbf{p}^{\tau}$ does not change the second term in Eq. \eqref{eq:varianceRewrite}, and $K\sum_{i\in \mathcal{N}_s}\sum_{b=1}^{B_i} \frac{(\frac{d_{i,s}}{B_i})^2  \sum_{t=1}^{K} \| \nabla {f}_{i,s} (w_{(i,b),s}^{t,\tau}) \|^2}{p_{s|(i,b)}^\tau}$ upper bounds the first term of Eq. \eqref{eq:varianceRewrite}. Thus, as we might intuitively expect, choosing the client sampling distribution $\mathbf{p}^{\tau}$ so as to minimize the variance of sampled updates as in Eq.~\eqref{eq:varianceRewrite} can accelerate the convergence and reduce the convergence bound in Eq.~\eqref{eq:convergence}.

\textcolor{olive}{
Prior works \cite{chen2020optimal,wang2024delta,zhangposter} design client sampling strategies to reduce update variance (Eq. \eqref{eq:varianceRewriteBegin}), our convergence analysis (Theorem \ref{thoerem:convergence}) comprehensively reveals how update variance impacts overall convergence speed. 
However, as Theorem \ref{thoerem:convergence} shows, reducing the term $\mathbb{E}[Z_g^\tau]$ does not necessarily lead to a tighter convergence bound. This is due to the influence of additional terms $\mathbb{E}[Z_l^\tau]$ and $\mathbb{E}[Z_p^\tau]$, which are often overlooked in prior work. 
}


The additional term $\mathbb{E}[Z_l^\tau]$ shows that the relationship between local loss values $f_{i,s}(w_s^\tau)$ and aggregation coefficients $P_{(i,b),s}^\tau$ also influences convergence speed. By applying the triangle inequality, this term can be upper bounded as follows:
\begin{align}
\mathbb{E}\left[(\sum_{i\in\mathcal{N}_s}\sum_{b=1}^{B_i} \mathbbm{1}_{(i,b)}^{s,\tau} P_{(i,b),s}^\tau f_{i,s}(w_s^\tau)-\sum_{i\in\mathcal{N}_s} d_{i,s} f_{i,s}(w_s^\tau))^2\right],
\end{align}
which represents the variance of the surrogate objective:
\begin{align}
\sum_{i\in\mathcal{N}_s}\sum_{b=1}^{B_i} \mathbbm{1}_{(i,b)}^{s,\tau} P_{(i,b),s}^\tau f_{i,s}(w_s^\tau),
\end{align}
implicitly optimized by the clients for model $s$ in each round.

In $\mathbb{E}[Z_p^\tau]$, the first term can be viewed as a constant. The second term reveals that the variance of aggregation coefficients also affects the convergence speed. If sampling probabilities are too heterogeneous across clients, this could lead to unstable training despite equal client contributions. 
\haoran{The impact of $\mathbb{E}[Z_p^\tau]$ will be further explained in Section \ref{sec:CompareGVR-LVR}. 
}

\section{Variance-Reduced Client Sampling}
\label{sec:variance-reduce}
\textcolor{olive}{
Existing gradient-based variance-reduced client sampling algorithms \cite{chen2020optimal, wang2024delta, zhangposter} become impractical in MMFL settings \textcolor{orange}{due to the extra gradient computations among all clients}. Moreover, under highly heterogeneous training conditions, these methods may suffer from instability, as the influence of other critical but unoptimized terms---such as $\mathbb{E}[Z_l^\tau]$ and $\mathbb{E}[Z_p^\tau]$---becomes more pronounced.
In this section, we introduce a \textbf{loss-based variance-reduced client sampling} algorithm that significantly reduces computational overhead while achieving comparable empirical performance to gradient-based approaches (see Section~\ref{sec:exp}).  
}

\subsection{Loss-Based Optimal Variance-Reduced Sampling}


In MMFL systems, gradient-based client sampling methods \cite{chen2020optimal,wang2024delta,zhangposter} require knowledge of the gradient norms $\|G_{(i,b),s}^\tau\|$. Consequently, to compute these values, each client $i$ must perform local training on all models $s$, which is impractical in MMFL settings.  
\haoran{
We propose to instead optimize the term $\mathbb{E}[Z_l^\tau]$ from Theorem \ref{thoerem:convergence}'s convergence upper bound during each training round.}
We solve:
\begin{align}
\label{eq:ASproblem}
&\min_{\mathbf{p}^\tau} \; \sum_{s=1}^{S} \mathbb{E}[(\sum_{(i,b)\in\mathcal{A}_{\tau,s}} P_{(i,b),s}^\tau f_{i,s}(w_s^\tau) -\sum_{i\in\mathcal{N}_s} d_{i,s} f_{i,s}(w_s^\tau))^2] \\
&\text{s.t.}\;  p_{s|(i,b)}^\tau > 0,\; \sum_{s=1}^S p_{s|(i,b)}^\tau \leq 1,\; \nonumber \sum_{s=1}^S\sum_{i\in\mathcal{N}_s}\sum_{b=1}^{B_i} p_{s|(i,b)}^\tau = m, \\
&\; \quad \forall i,b,s,\tau. \nonumber
\end{align}
\textcolor{olive}{
The first two constraints in this optimization problem ensure a feasible probability distribution for each processor $(i,b)$. 
The third constraint ensures that the server expects to receive $m$ total local updates from all clients, by ensuring that $m$ training tasks are assigned to the clients on expectation. This constraint thus models the server's communication limitations, e.g., if the server is at an edge base station, base stations have an upper bound on the number of connections\footnote{We can easily extend our formulation to server bandwidth constraints by constraining the expected size of the models sent to the server, and also the client-side communication constraints by modifying the second constraint to include an upper bound on a client’s participation in training, i.e., $\sum_{s=1}^S p_{s|(i,b)}^\tau \leq \eta_i$.} they can maintain. 
In general, allowing more connections (a higher value of $m$) also implies a need for more sophisticated queuing and parallel processing capabilities at the server, especially if there are thousands of clients and processors, as may be the case in edge FL systems~\cite{mcmahan2017communication}. Intuitively, a high value of $m$ will lead to faster convergence but also higher costs.
}
The closed-form solution of this problem can be obtained (see Supplementary Material).
\begin{theorem}[Optimal MMFL-LVR assignment probabilities]
\label{theorem:solutionAS}
Equation (\ref{eq:ASproblem})'s optimization problem is solved by
\begin{align}
p_{s|(i,b)}^\tau =
\begin{cases}
\frac{(m-V+k)\|\tilde{U}_{(i,b),s}^\tau\|}{\sum_{(j,v) \in \mathcal{V}_{0}} M_{(j,v)}^\tau} & \text{if } (i,b) \in \mathcal{V}_{0}, \\
\frac{\|\tilde{U}_{(i,b),s}^\tau\|}{M_{(i,b)}^\tau} & \text{if } (i,b) \notin \mathcal{V}_{0},
\end{cases}
\label{eq:solutionAS}
\end{align}
where $k = |\mathcal{V}_{0}|$, $V=\sum_{i\in\mathcal{N}_1 \cup\cdots \mathcal{N}_S} B_i$, $\tilde{U}_{(i,b),s}^\tau = \frac{d_{i,s}}{B_i} f_{i,s}(w_s^\tau)$, $M_{(i,b)}^\tau = \sum_{s \in \mathcal{S}_i} \|\tilde{U}_{(i,b),s}^\tau\|$. $\mathcal{V}_{0}$ is the largest set satisfying
\begin{align}
0 < (m-V+k) \leq \frac{\sum_{(j,v) \in \mathcal{V}_{0}} M_{(j,v)}^\tau}{\max_{(i,b) \in \mathcal{V}_{0}} [M_{(i,b)}^\tau]}.\nonumber
\end{align}
\end{theorem}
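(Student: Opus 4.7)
The plan is to recognize that, because each processor $(i,b)$ is assigned to model $s$ independently with probability $p_{s|(i,b)}^\tau$, the inner expectation in \eqref{eq:ASproblem} is a variance of a sum of independent random variables. Writing $\tilde U_{(i,b),s}^\tau=\frac{d_{i,s}}{B_i}f_{i,s}(w_s^\tau)$, the random variable $\mathbbm{1}_{(i,b)}^{s,\tau}P_{(i,b),s}^\tau f_{i,s}(w_s^\tau)$ equals $\mathbbm{1}_{(i,b)}^{s,\tau}\,\tilde U_{(i,b),s}^\tau/p_{s|(i,b)}^\tau$, whose mean is $\tilde U_{(i,b),s}^\tau$ and whose variance is $\|\tilde U_{(i,b),s}^\tau\|^2(1-p_{s|(i,b)}^\tau)/p_{s|(i,b)}^\tau$. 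Summing over $(i,b)$ and $s$, the objective in \eqref{eq:ASproblem} reduces, up to the $\mathbf{p}^\tau$-independent constant $-\sum_{s,(i,b)}\|\tilde U_{(i,b),s}^\tau\|^2$, to
\begin{equation}
\Phi(\mathbf{p}^\tau)\;=\;\sum_{s=1}^S\sum_{i\in\mathcal N_s}\sum_{b=1}^{B_i}\frac{\|\tilde U_{(i,b),s}^\tau\|^2}{p_{s|(i,b)}^\tau},
\nonumber
\end{equation}
which is strictly convex in each $p_{s|(i,b)}^\tau>0$, so the problem has a unique minimizer on the relative interior of the feasible set.

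Next I would form the Lagrangian with multipliers $\lambda_{(i,b)}\ge 0$ for the per-processor constraints $\sum_{s\in\mathcal S_i}p_{s|(i,b)}^\tau\le 1$ and $\nu$ for the equality $\sum_{s,i,b}p_{s|(i,b)}^\tau=m$. The stationarity condition gives
\begin{equation}
p_{s|(i,b)}^\tau\;=\;\frac{\|\tilde U_{(i,b),s}^\tau\|}{\sqrt{\lambda_{(i,b)}+\nu}},
\nonumber
\end{equation}
so for every processor the per-model probabilities are proportional to $\|\tilde U_{(i,b),s}^\tau\|$, with the proportionality constant depending only on $(i,b)$. Summing over $s\in\mathcal S_i$ shows that processor $(i,b)$'s total mass is $M_{(i,b)}^\tau/\sqrt{\lambda_{(i,b)}+\nu}$. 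Complementary slackness splits the processors into two classes: those with $\lambda_{(i,b)}=0$, for which the upper bound is slack and the proportionality constant equals $\sqrt{\nu}$; and those with $\lambda_{(i,b)}>0$, for which $\sum_s p_{s|(i,b)}^\tau=1$ and hence $\sqrt{\lambda_{(i,b)}+\nu}=M_{(i,b)}^\tau$, yielding $p_{s|(i,b)}^\tau=\|\tilde U_{(i,b),s}^\tau\|/M_{(i,b)}^\tau$---which is exactly the second branch of \eqref{eq:solutionAS}.

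Letting $\mathcal V_0$ denote the set of slack processors (so $|\mathcal V_0|=k$) and enforcing the global mass constraint, the tight processors contribute $V-k$ while the slack ones contribute $\sum_{(j,v)\in\mathcal V_0}M_{(j,v)}^\tau/\sqrt{\nu}$, forcing
\begin{equation}
\sqrt{\nu}\;=\;\frac{\sum_{(j,v)\in\mathcal V_0}M_{(j,v)}^\tau}{m-V+k},
\nonumber
\end{equation}
which when plugged back into the stationarity formula reproduces the first branch of \eqref{eq:solutionAS}. The feasibility of this choice, $\lambda_{(i,b)}=0$ requiring $M_{(i,b)}^\tau\le\sqrt{\nu}$, and $\lambda_{(i,b)}>0$ requiring $M_{(i,b)}^\tau>\sqrt{\nu}$, is exactly the stated condition $0<m-V+k\le \sum_{\mathcal V_0}M^\tau/\max_{\mathcal V_0}M^\tau$ combined with the dual condition $M_{(i,b)}^\tau>\sqrt{\nu}$ for $(i,b)\notin\mathcal V_0$.

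The main obstacle is the last step: identifying $\mathcal V_0$ self-consistently, because whether a processor is slack depends on $\sqrt{\nu}$, which in turn depends on $\mathcal V_0$. I would handle this by sorting processors in ascending order of $M_{(i,b)}^\tau$ and showing that the "largest set" characterization is equivalent to a water-filling/threshold rule: start with $\mathcal V_0$ equal to all processors and iteratively remove the one with the largest $M_{(i,b)}^\tau$ whenever the inequality fails, terminating at the maximal admissible set. A short argument by strict convexity of $\Phi$ and uniqueness of the KKT point then shows this set is well-defined and coincides with $\mathcal V_0$ in the theorem, completing the proof.
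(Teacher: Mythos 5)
Your proposal follows essentially the same route as the paper's proof (the paper proves Theorem~\ref{theorem:solutionOS} in full and states that Theorem~\ref{theorem:solutionAS} follows by the same steps): reduce the variance objective to $\sum_{s}\sum_{(i,b)}\|\tilde U_{(i,b),s}^\tau\|^2/p_{s|(i,b)}^\tau$ plus a $\mathbf{p}^\tau$-independent constant using independence of the per-processor indicators, write the Lagrangian, use stationarity to get $p_{s|(i,b)}^\tau\propto\|\tilde U_{(i,b),s}^\tau\|$ with a per-processor constant, split processors by complementary slackness into those with tight budget (second branch) and slack budget (first branch), and solve for the global multiplier from the mass constraint. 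Up to that point the two arguments coincide. The one place you diverge is the final step of identifying $\mathcal{V}_0$: you propose to invoke uniqueness of the KKT point (strict convexity) together with the dual feasibility condition $M_{(i,b)}^\tau>\sqrt{\nu}$ for excluded processors and a sorting/threshold rule, whereas the paper never checks dual feasibility of the tight-budget processors; instead it enumerates all primal-feasible candidates indexed by $k$, evaluates the objective $F(k)=\frac{1}{m-V+k}\bigl(\sum_{v\le k}M_v^\tau\bigr)^2+\sum_{v>k}(M_v^\tau)^2$, and proves $F(k)\ge F(k+1)$ via the explicit identity $C(C+1)\bigl[G(k)-G(k+1)\bigr]=\bigl[\sum_{v\le k}M_v^\tau-CM_{k+1}^\tau\bigr]^2\ge 0$ with $C=m-V+k$, concluding that the largest feasible $k$ (built by iteratively removing the processor with the largest $M^\tau$) is optimal. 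Your route is viable and arguably cleaner in principle, but the deferred "short argument" is actually the substantive part: you must verify that for the maximal admissible set the excluded processors genuinely satisfy $M_{(i,b)}^\tau>\sqrt{\nu}$ (where $\sqrt{\nu}$ itself depends on $\mathcal{V}_0$), and that the maximal set in the theorem's sense coincides with the threshold set. That self-consistency check is precisely what the paper's monotonicity computation replaces, so if you go your way you should carry it out explicitly rather than assert it.
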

\textcolor{olive}{
By minimizing the variance in Eq. \eqref{eq:ASproblem}, the surrogate objective $\sum_{(i,b)\in\mathcal{A}_{\tau,s}} P_{(i,b),s}^\tau f_{i,s}(w_s^\tau)$ that the system actually optimizes in each round for each model should be closer to the actual global objective (the expectation of the surrogate objective). 
}
Computing Theorem~\ref{theorem:solutionAS}'s solution \textit{only requires all clients to upload their local loss values}. These can be computed with a forward pass of each model, using much less computational resources than the gradient computations needed for prior works \cite{chen2020optimal,zhangposter}.
We name the \textbf{L}oss-based optimal \textbf{V}ariance-\textbf{R}educed sampling algorithm that uses Eq. \eqref{eq:solutionAS} as \textbf{MMFL-LVR}. 
The pseudocode of MMFL-LVR is shown in Algorithm \ref{algo:AS}. 

\textit{Lower bound of the probability:}
Theorem \ref{thoerem:convergence} requires that the probability has a lower bound: $p_{s|(i,b)}^\tau>\theta$. Otherwise, the proof may not ensure convergence as some clients may rarely or never participate in the training. 
To avoid such cases, a small constant can be added to the local loss $\frac{d_{i,s}}{B_i}f_{i,s}(w_s^\tau)$, which does not affect the practical distribution but theoretically ensures convergence.


\begin{algorithm}[t]
	\caption{MMFL-LVR}
 \label{algo:AS}
 \small
	\begin{algorithmic}[1]
        \State \textbf{Input:} expected number of training tasks $m$, the set of available clients for each model: $\mathcal{N}_s$, computation ability $B_i$ for each client
        \For{global round $\tau=1,\dots,T$}
        \State each client $i$ computes loss $f_{i,s}(w_s^\tau), \forall s$ (in parallel)
        \State each client $i$ sends $\frac{d_{i,s}}{B_i}f_{i,s}(w_s^\tau)$ to the server (in parallel)
        \State server generates $\mathbf{p}^\tau$ using Eq. \eqref{eq:solutionAS}
        \State server generates task allocation ($\mathcal{A}_{\tau,s}$) from $\mathbf{p}^\tau$, requests updates
        \EndFor
        \For{client $i\in\{i:(i,b)\in \mathcal{A}_{\tau,s},\forall s\}$, in parallel}
        \For{\textit{processor} $(i,b)\in \mathcal{A}_{\tau,s'},s'\in\mathcal{S}$, in parallel} 
        \State conducts local training to obtain $G_{(i,b),s}^\tau$
        \State sends $G_{(i,b),s}^\tau$ to the server
        \EndFor
        \EndFor
        \State server conducts aggregation using Eq. \eqref{eq:aggregation}
	\end{algorithmic} 
\end{algorithm}

\subsection{Comparison of GVR and LVR}
\label{sec:CompareGVR-LVR}
\textcolor{olive}{
Gradient-based optimal sampling methods~\cite{chen2020optimal} have been extended to the MMFL setting, as shown in~\cite{zhangposter}. Building on these foundations, we adapt this approach to our heterogeneous MMFL scenario by applying similar proof techniques, which we detail in the Supplementary Material. For clarity, we refer to this adapted gradient-based method for heterogeneous MMFL as \textbf{MMFL-GVR}.
}

\textcolor{olive}{
Here, we use the term $\mathbb{E}[Z_p^\tau]$ from Theorem~\ref{thoerem:convergence} to provide a theoretical comparison between MMFL-GVR and our proposed MMFL-LVR, highlighting that well-established gradient-based methods may potentially destabilize training as the MMFL system becomes more heterogeneous, whereas MMFL-LVR exhibits greater robustness under such conditions.}

As discussed in Section \ref{sec:convergence}, the first term of $\mathbb{E}[Z_p^\tau]$ can be treated as a constant, while the second term: 
\begin{align}
\mathbb{E}[(\sum_{i\in \mathcal{N}s}\sum_{b=1}^{B_i} \mathbbm{1}{(i,b)}^{s,\tau}P{(i,b),s}^\tau -1)^2]
\end{align}
captures the effect of \textbf{participation variance}. 
We begin by explaining how participation variance affects training and then use this theoretical understanding to compare MMFL-GVR and MMFL-LVR. 
\haoran{Firstly, we rewrite the aggregation rule defined in Eq. \eqref{eq:aggregation} as: 
\begin{align}
w_s^{\tau+1}=w_s^\tau - H_{\tau,s}^\top G_s
\label{eq:rewriteaggregation}
\end{align}
where $H_{\tau,s}=[\cdots,\mathbbm{1}_{(i,b)}^{s,\tau} P_{(i,b),s}^\tau ,\cdots]^\top $, $G_s=[\cdots, G_{(i,b),s}^\tau, \cdots]^\top$. The expectation of $\|H_{\tau,s}\|_1$ over the sampled processors ($\|\cdot\|_1$ indicates the $\ell_1$ norm) is: 
\begin{align}
\mathbb{E}[\|H_{\tau,s}\|_1]
=\mathbb{E}\left[\sum_{i\in \mathcal{N}_s}\sum_{b=1}^{B_i} \mathbbm{1}_{(i,b)}^{s,\tau} \frac{d_{i,s}}{B_i p_{s|(i,b)}^\tau} \right]
=\sum_{i\in \mathcal{N}_s}\sum_{b=1}^{B_i} \frac{d_{i,s}}{B_i}=1
\label{eq:H_mean}
\end{align}
In full participation, where each processor trains all models per round, $\|H_{\tau,s}\|_1=1$. 
In the case of partial participation with aggregation rule as Eq. \eqref{eq:rewriteaggregation}, the expected value of $\|H_{\tau,s}\|_1$ remains $1$. 
Thus, $\mathbb{E}[(\sum_{i\in \mathcal{N}_s}\sum_{b=1}^{B_i} \mathbbm{1}_{(i,b)}^{s,\tau} P_{(i,b),s}^\tau -1)^2]$ can be interpreted as the variance of $\|H_{\tau,s}\|_1$. 
As shown in Eq. \eqref{eq:rewriteaggregation}, $\|H_{\tau,s}\|_1$ represents the ``global step size.'' 
A higher variance in $\|H_{\tau,s}\|_1$ indicates instability in this ``global step size.'' This instability can disrupt training, especially in the later stages as model parameters approach optimality, potentially causing significant oscillations.
} 
We find in practice that \textit{MMFL-GVR is much more likely to suffer from this instability}: Figure \ref{fig:stepsize} shows the total global step size ($\sum_{s=1}^S \|H_{\tau,s}\|_1$) for MMFL-GVR and MMFL-LVR (the same experiment settings as Section \ref{sec:setup}).
MMFL-GVR's unstable global step size can be attributed to the following:
Compared to MMFL-LVR, the sampling distribution $\mathbf{p}^\tau$ generated by MMFL-GVR is more unbalanced between processors, as the local loss values used by MMFL-LVR are typically bounded, while the gradient norms used in MMFL-GVR can vary significantly across clients. As a result, some MMFL-GVR clients have small sampling probabilities $p_{s|(i,b)}^\tau$ and correspondingly larger $P_{(i,b),s}^\tau$ values contributing to $\|H_{\tau,s}\|_1$, leading to a higher variance of $\|H_{\tau,s}\|_1$. 


\begin{figure}[t]
    \centering
    \includegraphics[width=0.49\linewidth]{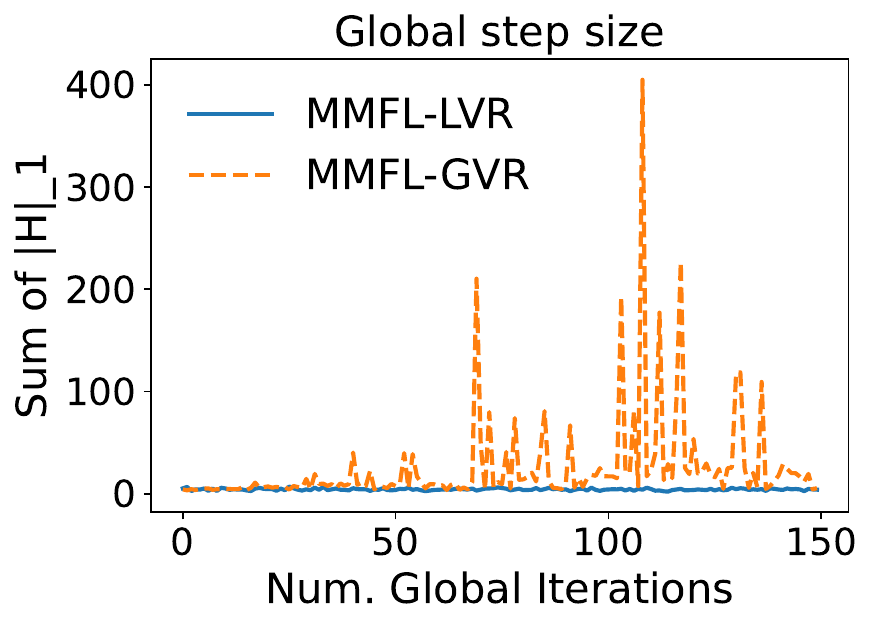}
    \includegraphics[width=0.49\linewidth]{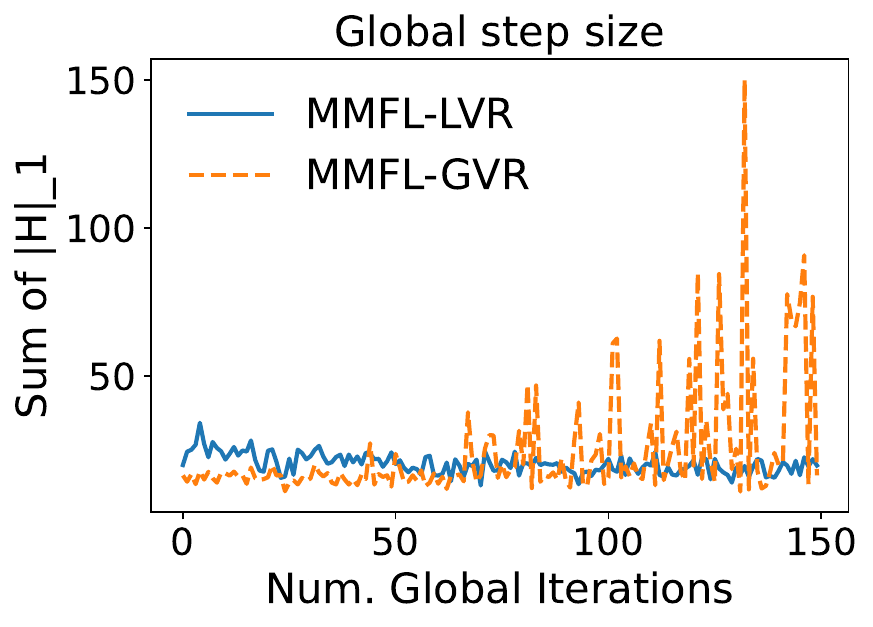}
    \caption{Comparison of the summed global step size of all models ($\sum_{s=1}^S \|H_{\tau,s}\|_1=\sum_{s=1}^S \sum_{(i,b)\in \mathcal{A}_{\tau,s}} P_{(i,b),s}^\tau$) 
    Detailed experiment settings are the same as described in Section \ref{sec:setup}. 
    Left: 3-model setting. Right: 5-model setting. MMFL-GVR's global step size is unstable, potentially harming the training stability. In contrast, MMFL-LVR's participation heterogeneity is much lower, leading to more stable convergence. }
    \label{fig:stepsize}
\end{figure}

\section{\haoran{Optimally Leveraging Stale Information}}

\label{sec:varianceH}
\textcolor{olive}{While MMFL-LVR improves training stability compared to MMFL-GVR, it may still lead to increased participation variance---an issue we address in this section.}
Prior work in SMFL~\cite{jhunjhunwala2022fedvarp, rodio2024fedstale} has shown that leveraging stale client updates can help mitigate the effects of heterogeneous participation and enhance training stability. 
\textcolor{olive}{
Building on this idea, we first extend their stale update aggregation rule from SMFL to our heterogeneous MMFL setting na\"ively, to better understand how incorporating stale information helps reduce the impact of high participation variance. We then propose an improved aggregation scheme that more effectively leverages stale information under such conditions.
The aggregation (na\"ive way) is performed as:}
\begin{align}
w_s^{\tau+1} &= w_s^\tau - \Delta_{\tau,s}\label{eq:stale}\\
\Delta_{\tau,s} &= \beta \sum_{i\in\mathcal{N}_s}  d_{i,s} h_{i,s}^\tau+\sum_{(i,b)\in\mathcal{A}_{\tau,s}}  \frac{d_{i,s} (G_{(i,b),s}^\tau-\beta h_{i,s}^\tau) }{B_i p_{s|(i,b)}^\tau},\nonumber
\end{align}
where $h_{i,s}^\tau$ denotes the last received update as of round $\tau$ from each client $i$ for each model $s$. If client $i$ was active for model $s$ in the previous round $\tau-1$, then $h_{i,s}^\tau=G_{i,s}^{\tau-1}=\mathbb{E}[G_{(i,b),s}^{\tau-1}]$; otherwise, $h_{i,s}^\tau=h_{i,s}^{\tau-1}$.
The expectation of $\Delta_{\tau,s}$ over the processor assignment distribution equals a full participation update, ensuring unbiased training. 
Compared to the aggregation rule in Eq. \eqref{eq:aggregation}, 
Eq. \eqref{eq:stale} adjusts the term $\frac{d_{i,s} G_{(i,b),s}^\tau}{B_i p_{s|(i,b)}^\tau}$, which can dominate the update direction if $p_{s|(i,b)}^\tau$ is small, to $\frac{d_{i,s} (G_{(i,b),s}^\tau - h_{i,s}^\tau)}{B_i p_{s|(i,b)}^\tau}$. 
Since $G_{(i,b),s}^\tau$ and $h_{i,s}^\tau$ often have similar directions,
the scale of $G_{(i,b),s}^\tau - h_{i,s}^\tau$ is generally smaller than that of $G_{(i,b),s}^\tau$ \cite{gu2021fast}, implicitly reducing $\mathbb{E}[Z_p^\tau]$ in Theorem \ref{thoerem:convergence}'s convergence bound.


\begin{figure}[t]
    \centering
    \begin{subfigure}[t]{0.45\linewidth}
        \includegraphics[width=\linewidth]{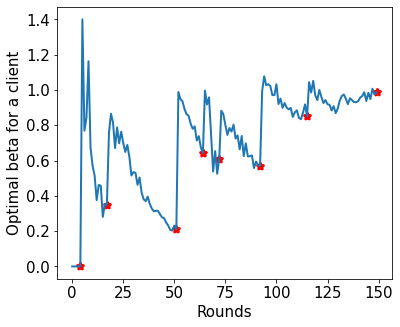}
        \caption{Client 1}
    \end{subfigure}
    \hfill
    \begin{subfigure}[t]{0.45\linewidth}
        \includegraphics[width=\linewidth]{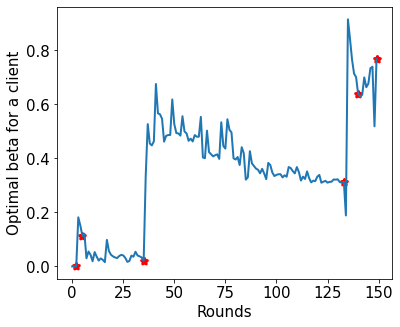}
        \caption{Client 2}
    \end{subfigure}
    \caption{Optimal \( \beta_i^t \) for 2 clients across training rounds. In each subplot, the blue curve represents the optimal \( \beta_i^t \) computed using Eq.~\eqref{eq:solutionStaleb}, while the red stars indicate the rounds during which the client was active.
    \textcolor{olive}{Experiment setting: EMNIST (\(S = 1\), the same setting as Section \ref{sec:fixedDistribution}).}
    }
    \label{fig:betatrend}
\end{figure}

\textcolor{olive}{
However, directly adopting the aggregation rule in Eq.~\eqref{eq:stale}, which balances stale and fresh updates using a single global parameter $\beta$, presents two key limitations. First, finding an optimal global $\beta$ is challenging in practice \cite{jhunjhunwala2022fedvarp}. Second, applying the same $\beta$ across all clients and training rounds fails to account for the varying degrees of staleness among clients, leading to suboptimal performance.}
We therefore propose an \textbf{adaptive aggregation rule} that assigns a dynamically optimized weight $\beta^\tau_{(i,b),s}$ to the stale update from client $i$ at round $\tau$ for model $s$, allowing the aggregation to better reflect client-specific staleness and thereby improving training stability: 
\begin{align}
w_s^{\tau+1} &= w_s^\tau - \Delta_{\tau,s}\label{eq:stale2}\\
\Delta_{\tau,s} &=  \sum_{i\in\mathcal{N}_s} \sum_{b=1}^{B_i} \frac{d_{i,s} z_{(i,b),s}^\tau }{B_i}
+\sum_{(i,b)\in\mathcal{A}_{\tau,s}}  d_{i,s} \frac{ G_{(i,b),s}^\tau-z_{(i,b),s}^\tau }{B_i p_{s|(i,b)}^\tau}\nonumber\\
z_{(i,b),s}^\tau&=\beta_{(i,b),s}^\tau h_{i,s}^\tau\nonumber
\end{align}
To obtain the optimal $\{\beta_{(i,b),s}^\tau\}_{s\in \mathcal{S},i\in \mathcal{N}_s,b\in\mathcal{B}_i}$, we minimize the variance of $\Delta_{\tau,s}$ over the client sampling process. 
Formally, we aim to solve: 
\begin{align}
&\min_{
\{\beta_{(i,b),s}^\tau\}} \sum_{s=1}^S \mathbb{E}\left[\| \frac{\Delta_{\tau,s}}{\eta_{\tau,s}}-\sum_{i\in\mathcal{N}_s}\sum_{b=1}^{B_i} \frac{d_{i,s}}{B_i} \frac{G_{(i,b),s}^\tau}{\eta_{\tau,s}}\|^2\right]. \label{eq:Staleproblem}
\end{align}
We provide the closed-form solution of this problem (proof provided in Supplementary Material). 
\begin{theorem}[MMFL-StaleVR optimal solution] \label{theorem:solutionStale}
Equation \eqref{eq:Staleproblem}'s optimization problem is solved by setting:
\begin{align}
\beta_{(i,b),s}^\tau&= \frac{(G_{(i,b),s}^\tau)^\top h_{i,s}^\tau}{\|h_{i,s}^\tau\|^2}\label{eq:solutionStaleb}
\end{align}
\end{theorem}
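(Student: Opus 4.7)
The plan is to reduce Eq.~\eqref{eq:Staleproblem} to a collection of decoupled one-dimensional least-squares problems, one for each triple $(i,b,s)$, each of which admits the standard orthogonal-projection solution. I would structure the argument in three steps.

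First, I would verify that the aggregated update $\Delta_{\tau,s}$ in Eq.~\eqref{eq:stale2} remains an unbiased estimator of the full-participation update, so the squared-norm objective in Eq.~\eqref{eq:Staleproblem} equals $\sum_s \eta_{\tau,s}^{-2}\,\mathrm{Var}[\Delta_{\tau,s}]$. Writing the random part as $\sum_{i,b}\mathbbm{1}_{(i,b)}^{s,\tau}\,u_{(i,b),s}/p_{s|(i,b)}^\tau$ with $u_{(i,b),s}=\frac{d_{i,s}}{B_i}(G_{(i,b),s}^\tau-\beta_{(i,b),s}^\tau h_{i,s}^\tau)$ and taking expectation over $\mathcal{A}_{\tau,s}$, the $z$-contributions cancel and I recover the full-participation update $\sum_{i,b}\frac{d_{i,s}}{B_i}G_{(i,b),s}^\tau$, confirming unbiasedness.

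Second, I would use the per-processor independence of sampling from Remark~\ref{remark:ProcessorClarify} to turn $\mathrm{Var}[\Delta_{\tau,s}]$ into a sum of per-$(i,b)$ Bernoulli-vector variances. The deterministic first sum of $\Delta_{\tau,s}$ contributes nothing, and for each processor the identity $\mathrm{Var}[\mathbbm{1} v/p]=\|v\|^2(1-p)/p$ yields
\begin{align*}
\mathrm{Var}[\Delta_{\tau,s}]=\sum_{i\in\mathcal{N}_s}\sum_{b=1}^{B_i}\frac{1-p_{s|(i,b)}^\tau}{p_{s|(i,b)}^\tau}\frac{d_{i,s}^2}{B_i^2}\,\|G_{(i,b),s}^\tau-\beta_{(i,b),s}^\tau h_{i,s}^\tau\|^2.
\end{align*}
Thus each $\beta_{(i,b),s}^\tau$ appears in exactly one scalar quadratic term, and the objective decouples across $(i,b,s)$.

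Third, I would minimize each scalar quadratic in $\beta$ by differentiating $\|G_{(i,b),s}^\tau-\beta h_{i,s}^\tau\|^2$ with respect to $\beta$. The first-order condition gives $\beta\|h_{i,s}^\tau\|^2=(G_{(i,b),s}^\tau)^\top h_{i,s}^\tau$, yielding precisely Eq.~\eqref{eq:solutionStaleb}; strict convexity whenever $h_{i,s}^\tau\neq 0$ certifies this as the unique global minimum (and any value is optimal when $h_{i,s}^\tau=0$). The main obstacle is the second step: one must track carefully that the deterministic $z$-sum together with the importance scaling $1/p_{s|(i,b)}^\tau$ acts as a control variate that leaves only the Bernoulli randomness to contribute variance, and that independence across processors allows splitting the vector variance cleanly across $(i,b)$. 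Once this decoupling is established, the remaining optimization is simply a least-squares projection of $G_{(i,b),s}^\tau$ onto the line spanned by $h_{i,s}^\tau$, which also explains why the optimal $\beta_{(i,b),s}^\tau$ in Eq.~\eqref{eq:solutionStaleb} is independent of the sampling probabilities.
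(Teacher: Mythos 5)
Your proposal is correct and follows essentially the same route as the paper: both rewrite the objective as the sampling variance of $\Delta_{\tau,s}$, use per-processor independence to reduce it to $\sum_{s}\sum_{i,b}\frac{1-p_{s|(i,b)}^\tau}{p_{s|(i,b)}^\tau}\bigl\|\tfrac{d_{i,s}}{\eta_{\tau,s}B_i}(G_{(i,b),s}^\tau-\beta_{(i,b),s}^\tau h_{i,s}^\tau)\bigr\|^2$, and then solve the decoupled per-processor least-squares problems by orthogonal projection. Your probabilistic framing (control variate plus Bernoulli variance identity) is just a cleaner phrasing of the paper's direct second-moment expansion, and your added remark on the degenerate case $h_{i,s}^\tau=0$ is a harmless refinement.
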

Equations \eqref{eq:solutionStaleb} provides the closed-form solution for determining the staleness coefficients in the improved aggregation rule (Eq. \eqref{eq:stale2}), named \textbf{MMFL-StaleVR}. 
Specifically, Eq. \eqref{eq:solutionStaleb} ensures the minimum error between $G_{(i,b),s}^\tau$ and $h_{i,s}^\tau$, i.e., $\|G_{(i,b),s}^\tau-\beta_{(i,b),s}^\tau h_{i,s}^\tau\|$. 
Compared to FedStale \cite{rodio2024fedstale}, our approach is more practical and effectively accelerates convergence by directly optimizing the training variance and balance clients' diverse staleness levels dynamically to achieve the optimal solution. 

The training process of MMFL-StaleVR follows the steps described in Section \ref{sec:MMFLsteps}, replacing the aggregation rule in Eq. \eqref{eq:aggregation} with Eq. \eqref{eq:stale2}, and using Equations  \eqref{eq:solutionAS} and \eqref{eq:solutionStaleb} to guide client sampling and weight stale updates. 
Similar to \cite{jhunjhunwala2022fedvarp,rodio2024fedstale}, MMFL-StaleVR requires server-side additional memory to record and maintain stale updates $\{h_{i,s}^\tau\}_{s\in\mathcal{S}, i\in\mathcal{N}_s}$.
\haoran{Each client also needs to record its own $h_{i,s}^\tau$ for computing $\beta_{(i,b),s}^\tau$ in each round.}
\rachid{This approach mitigates the variance introduced by partial client participation but incurs high computational overhead, as each client must compute gradients for all tasks to obtain the solution in Eq.~\eqref{eq:solutionStaleb}. To address this limitation, we propose an alternative solution that approximates the optimal value of $\beta_{(i,b),s}^\tau$. This approximation is iteratively refined whenever a client is selected to train a specific model. Consequently, only the sampled clients perform gradient computations, while inactive clients incur no additional computational cost---effectively addressing all three research challenges we aim to solve.}

\textcolor{olive}{\textbf{Efficient Approximation:}
We propose an efficient approximation method to estimate the change in the optimal $\beta_i^\tau$ \textbf{without introducing any additional computational cost for inactive clients}. 
Intuitively, when the stale update \( h_{i,s}^\tau \) becomes outdated, its deviation from the true gradient \( G_{(i,b),s}^\tau \) increases, which often leads to a reduced value of \( \beta_{(i,b),s}^\tau \). 
Figure~\ref{fig:betatrend} illustrates the evolution of the optimal \( \beta_{(i,b),s}^\tau \) (Eq. \eqref{eq:solutionStaleb}) for two randomly selected clients. 
Based on the observed approximately linear decay in the optimal values, we propose the following approximation scheme.
}

\textcolor{olive}{
For ease of explanation, we assume \(S = 1\), \(B_i = 1\; \forall i\), i.e., a standard SMFL setting. Extension to MMFL is straightforward.}
Under this setting, assume client $i$ is active at time steps: $\tau^*_i=\tau^1_i, \tau^2_i, \dots, \tau^T_i$. The server receives the corresponding update $G_i^{\tau_i^*}$, and refreshes the stale update in the next round $h_i^{\tau_i^*+1}=G_i^{\tau_i^*}$. For a time step $\tau_i^m+1$ (with $1<m<T$), we approximate the optimal $\beta_i^{\tau_i^m+1}$ as: 
\(
\hat{\beta}_i^{\tau_i^m+1}=\frac{(G_i^{\tau_i^m+1})^\top G_i^{\tau_i^m}}{\|G_i^{\tau_i^m}\|^2} \approx 1.\label{eq:approx1}
\)
This approximation measures the similarity between gradients in consecutive rounds and is expected to be close to 1.
At the next active round, $\tau_i^{m+1}$, $\beta_i^{\tau_i^{m+1}}$ is computed (Eq. \eqref{eq:solutionStaleb}) without imposing any additional workload on the client. 
Intuitively, as the client's update becomes more outdated, the stale information becomes less reliable, and hence the corresponding $\beta_i^\tau$ should decrease over time.
To estimate $\beta_i^\tau$ for any time $\tau$ in the interval $\tau_i^{m+1}+1<\tau<\tau_i^{m+2}$, we linearly interpolate between $\hat{\beta}_i^{\tau_i^m+1}$ and $\beta_i^{\tau_i^{m+1}}$: 
\begin{align}
\beta_{i}^\tau=\hat{\beta}_i^{\tau_i^m+1}+(\tau-\tau_i^{m+1}-1) \frac{\hat{\beta}_i^{\tau_i^m+1}-\beta_i^{\tau_i^{m+1}}}{\tau_i^m+1-\tau_i^{m+1}}.
\end{align}
This interpolation approximates the future trend of $\beta_i^t$ based on historical observations without additional client-side computation. We refer to this method as \textbf{MMFL-StaleVRE} (\textbf{V}ariance-\textbf{R}educed \textbf{E}stimation of the optimal solution Eq. \eqref{eq:solutionStaleb}).

\section{Experiment and Evaluation}
\label{sec:exp}

\begin{figure}[t]
    \centering
    \includegraphics[width=0.49\linewidth]{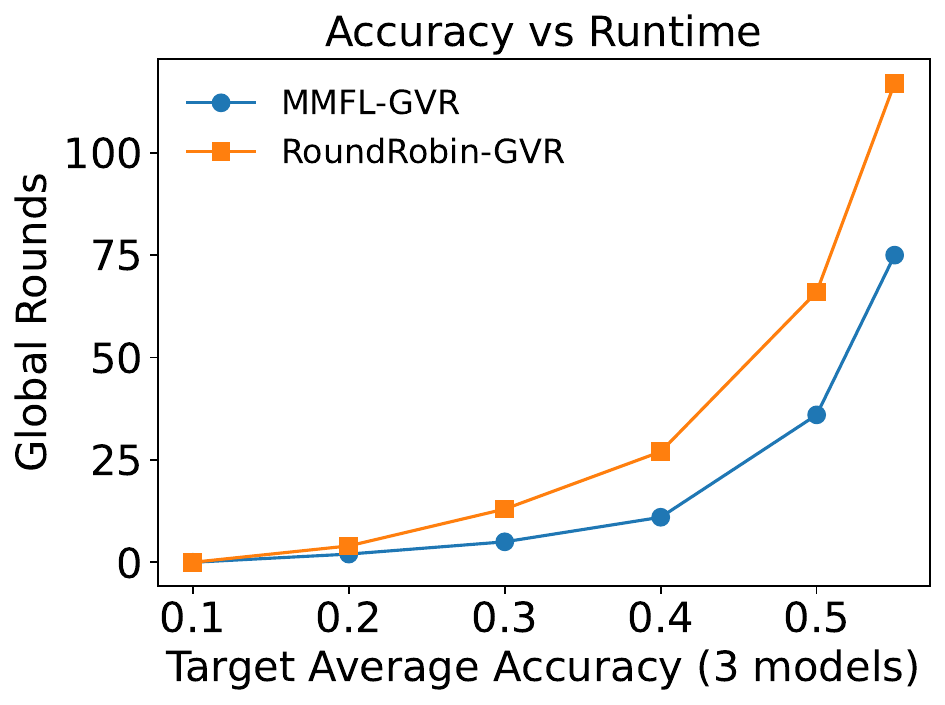}
    \includegraphics[width=0.49\linewidth]{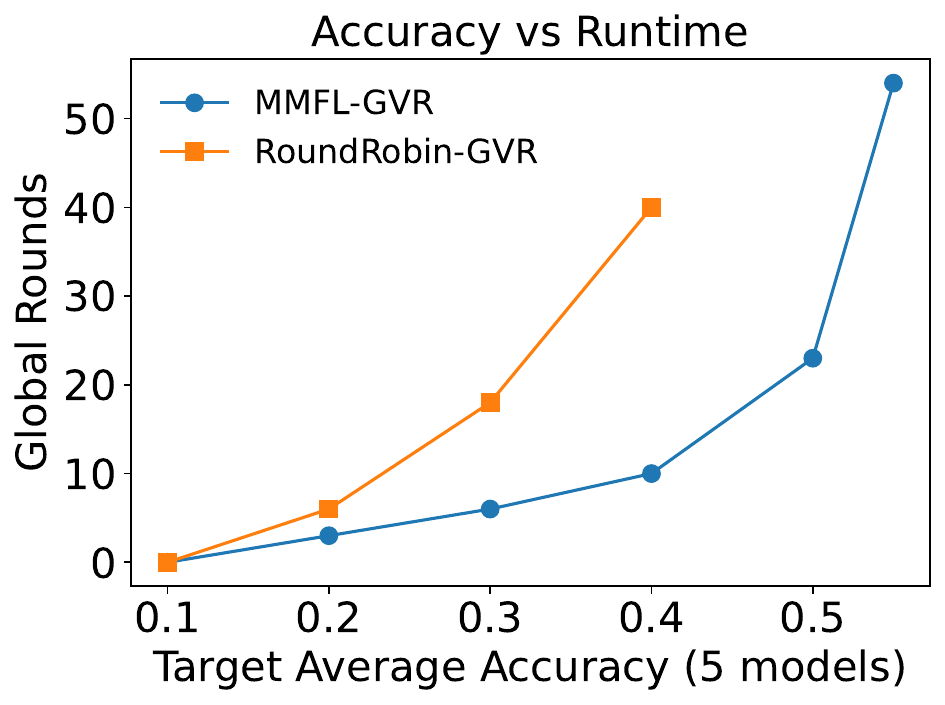}
    \caption{
    Comparison of MMFL-GVR and RoundRobin-GVR: target accuracy vs required global rounds. 
    \textcolor{olive}{
    \textit{Lower curve indicates MMFL algorithm achieves the target accuracy faster than SMFL algorithm.}}
    Left: 3-model setting. Right: 5-model setting. 
    RoundRobin-GVR fails to achieve target accuracies of 0.5 and 0.55 within 150 rounds. 
    }
    \label{fig:SMFL}
\end{figure}


\haoran{In this section, we conduct extensive experiments to demonstrate the advantages of the proposed algorithms (MMFL-LVR, MMFL-StaleVR, and MMFL-StaleVRE).}

We first demonstrate the benefit of allowing concurrent model training in MMFL by comparing it to a round-robin baseline, where models are trained sequentially—one per round in rotation. 
For both MMFL and SMFL (round-robin), we apply gradient-based optimal sampling \cite{chen2020optimal, zhangposter} to determine which clients participate in each round, as it is well-established in the SMFL literature and recently extended to MMFL. 
Specifically, MMFL employs an extension of the method in \cite{zhangposter}, which we adapt to handle heterogeneous client resources (referred to as MMFL-GVR), while the round-robin baseline trains one model at a time using the formulation from \cite{chen2020optimal} (referred to as RoundRobin-GVR). 
The final model accuracy with this baseline is also equivalent to sequentially training each model one at a time for a fixed number of global iterations. 
Thus, comparing to RoundRobin-GVR shows the value of allowing clients to opportunistically train other models.

We then show the value of optimizing the client sampling with our loss-based method (MMFL-LVR) in MMFL by comparing to MMFL-GVR and a \textit{random} baseline, i.e., each client processor is uniformly at random allocated to a model. We also compare our proposed algorithms to extensions of FedVARP \cite{jhunjhunwala2022fedvarp}, MIFA \cite{gu2021fast}, and SCAFFOLD \cite{karimireddy2020scaffold}, where we again uniformly at random allocate client processors to models. These SMFL algorithms all aim to reduce the variance through modifying the local training (SCAFFOLD) or aggregation (FedVARP and MIFA). 
Comparison to these uniform sampling baselines thus shows the value of optimizing the client sampling across models so as to reduce the model variance, while respecting client resource constraints.  We wish to make our algorithms' achieved model accuracy as close as possible to our \textit{full participation} baseline, in which all clients train all models in each training round (thus disregarding client resource constraints). 

Since the proposed algorithms all naturally accommodate heterogeneous client resource constraints, we do not compare their performance to existing MMFL allocation methods that restrict clients to train one model in each round~\cite{bhuyan2022multi, liu2022multi, askin2024fedast,chang2024asynchronous,liu2023multi,atapour2023multi}.

\begin{table}[t]
\centering
\caption{MMFL-StaleVR has the highest final average model accuracy relative to that from full participation. }
\small
\label{tab:results}
\begin{tabular}{@{}c|c|c@{}}
\toprule
\multirow{2}{*}{Methods} & \multirow{2}{*}{{\bf 3 tasks}} & \multirow{2}{*}{{\bf 5 tasks}} \\
                         &                          &                          \\ \midrule
FedVARP\cite{jhunjhunwala2022fedvarp}                  & $0.681\pm0.16$                 & $0.785\pm0.18$                   \\
MIFA\cite{gu2021fast}                     & $0.854\pm0.18$                  & $0.890\pm 0.18$                   \\
SCAFFOLD\cite{karimireddy2020scaffold}                 & $0.783\pm0.13$                   & $0.734\pm0.17$                   \\
MMFL-GVR \cite{zhangposter}                 & $0.886\pm0.15$                   & $0.872\pm0.20$                   \\
Random                   & $0.792\pm0.17$                   & $0.823\pm0.22$                   \\
Full Participation       & $1.000\pm0.13$                    & $1.000\pm0.14$                    \\ 
\midrule
MMFL-LVR                 & $0.896\pm0.15$                   & $0.893\pm0.16$                   \\
MMFL-StaleVR                & $\mathbf{0.943}\pm0.15$                   & $\mathbf{0.946}\pm0.17$ 
\\
MMFL-StaleVRE                & $0.918\pm0.17$                   & $0.908\pm0.19$ 
\end{tabular}
\end{table}

\subsection{Experimental Setup}
\label{sec:setup}

\textbf{Datasets.} We use Fashion-MNIST, EMNIST, CIFAR-10, and Shakespeare \cite{mcmahan2017communication}. 
We have 120 clients in total. 
For the first three datasets, each client receives data from 30\% of the labels. \textcolor{brown}{To further increase data heterogeneity across multiple models, clients are divided into two groups for each model: high-data clients (comprising 10\% of the total clients, each holding around 120 datapoints for the respective model) and low-data clients (the remaining 90\%, each holding around 12 datapoints for the respective model). Importantly, this division is model-specific, meaning a client can be a high-data client for one model and a low-data client for another model. 
Consequently, 10\% of the clients hold approximately 52.6\% of the total data for each model.}
The Shakespeare dataset is naturally non-iid, so we select 120 clients \textcolor{brown}{uniformly} from its total 1146 clients each corresponding to a Shakespeare character) without modifications. 

\textbf{Model and training configuration.} \textcolor{black}{The number of local training epochs is set to $E=5$ for all models. 
For the Fashion-MNIST and EMNIST classification tasks, we construct similar convolutional neural networks (CNN), each with 2 convolutional layers, 2 pooling layers, and 2 linear layers, and with different output layer sizes. 
For the CIFAR-10 task, we use a pre-activation ResNet \cite{he2016resnet}. For the Shakespeare dataset, we implement a character-level LSTM language model with an embedding layer, a two-layer LSTM, and a linear layer. 
All algorithms are implemented in Pytorch 2.2.1 with \texttt{SGD optimizer} \cite{paszke2017pytorch}. 
Experiments are performed and results are averaged over \textbf{5 random seeds}.}

\textbf{Client resource heterogeneity.}
We have $S$ models, and assume 90\% of clients can train all $S$ models, while 10\% can train $S-1$ models (randomly decided). Each client $i$ has $B_i$ processors, allowing them to train $B_i$ models in parallel per round. Clients' $B_i$ distribution is as follows: 25\% of clients have $B_i=|\mathcal{S}_i|$ (the number of models available for client $i$), 50\% have $B_i=\lceil \frac{|\mathcal{S}_i|}{2}\rceil$, and 25\% have $B_i=1$.
Based on this setting, we have $V=\sum_{i=1}^N B_i$ \textit{processors}. We set the active rate as 10\%, i.e., $m=\frac{1}{10} V$ for the proposed client sampling algorithms. And we assume each \textit{processor} is active with 10\% probability for other baselines.

\begin{figure}[t]
    \centering
    \includegraphics[width=0.75\linewidth]{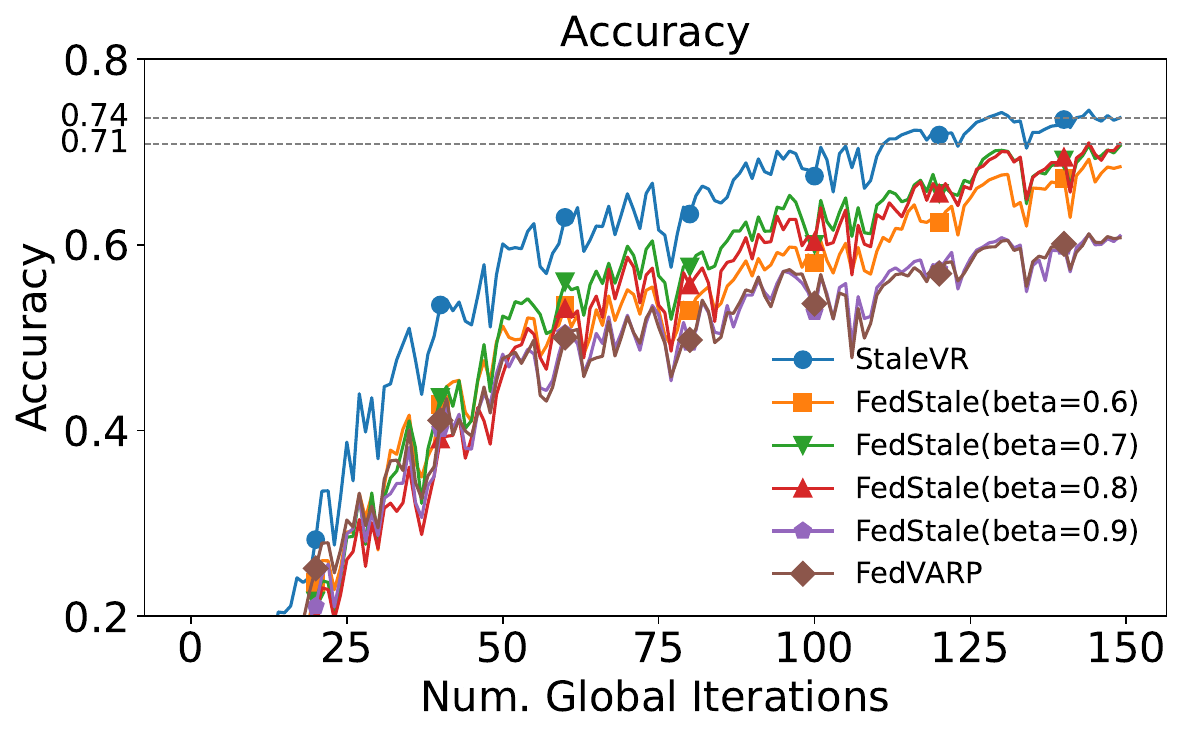}
    \caption{Effect of staleness weights with fixed sampling distribution. We evaluate the effect of our dynamic staleness weights (Eq. \eqref{eq:solutionStaleb}) with a fixed client sampling distribution using EMNIST ($S=1$). Clients are divided into two groups with participation rates of 4\% and 16\%. MMFL-StaleVR achieves a final accuracy of 0.74, outperforming FedStale and FedVARP, which use static weights for the stale updates (max 0.71). }
    \label{fig:beta}
\end{figure}

\subsection{Experiment Results}
\subsubsection{Comparsion of MMFL to na\"ive extension of SMFL}
Figure \ref{fig:SMFL} compares MMFL-GVR with the RoundRobin-GVR in two settings: 3-model (left graph) and 5-model (right graph). 
For 3-model setting, we include three Fashion-MNIST models. For 5-model setting, we include two Fashion-MNIST models, one CIFAR-10 model, one EMNIST model, and one Shakespeare model. 
MMFL-GVR consistently requires fewer global rounds to reach a given accuracy compared to RoundRobin-GVR. As the target accuracy increases, the gap between the two methods widens. In the 5-model setting, RoundRobin-GVR fails to achieve the target accuracies of 0.5 and 0.55 within 150 rounds, further highlighting its inefficiency in managing convergence for multiple models. 
Although RoundRobin-GVR's optimal sampling of all clients for one model in each round (i.e., MMFL-GVR with $S = 1$) reduces update variance, 
this restriction prevents clients from opportunistically training other models to which they can better contribute. It thus slows down the overall training process for all models, especially in the early stages of training.

\subsubsection{Comparison to algorithms with uniform sampling}
We conducted extensive experiments to compare our proposed MMFL algorithms with multiple baselines, following the same 3-model and 5-model settings described above. 
The quantitative results for both settings are summarized in Table \ref{tab:results}, with the corresponding accuracy trends provided in the Supplementary Material. 
We report the relative accuracy compared to that of full participation (i.e., each method's average final accuracy divided by the average final accuracy under full participation). 
All proposed MMFL algorithms outperform the baselines, with MMFL-StaleVR achieving the best results by effectively optimizing both the sampling distribution and staleness coefficients. Specifically, MMFL-StaleVR delivers up to 19.1\% higher average accuracy compared to random allocation and narrows the gap to full participation---the theoretical best performance in this experiment setting---to just 5.4\%. 
MMFL-LVR outperforms MMFL-GVR, despite requiring much less local training.
\textcolor{olive}{MMFL-StaleVRE further improves MMFL-LVR without any additional computation workload for inactive clients.}
In the 5-model experiment, since we use the same participation rate as in the 3-model setting, each model's training resource is more limited when spread over five models, leading to a performance drop for all methods compared to \textit{full participation}. 

\subsubsection{Effectiveness of optimally leveraging stale updates}
\label{sec:fixedDistribution}
We further evaluate the contribution of the proposed dynamic staleness weights (Eq. \eqref{eq:solutionStaleb}) in MMFL-StaleVR by fixing the client sampling distribution. The experiment was conducted using only EMNIST ($S=1$). To simulate participation heterogeneity, we divide total 40 clients into two equal groups; the first group has a participation rate of 4\%, while the other group has a participation rate of 16\%. In this setting, we compare MMFL-StaleVR with FedStale and FedVARP. 
The results (Fig. \ref{fig:beta}) show that MMFL-StaleVR achieves a final accuracy of 0.74, outperforming FedStale and FedVARP, which achieves a maximum accuracy of 0.71 among different values of $\beta$. Unlike FedStale, where the optimal staleness coefficient cannot be computed in practice, MMFL-StaleVR provides a practical closed-form solution (Eq. \eqref{eq:solutionStaleb}) to optimally weigh stale updates for clients with diverse staleness levels.

\section{Conclusion}\label{sec:conclusion}
In this work, we study an MMFL system with heterogeneous client resources and model availability. We begin by analyzing convergence under arbitrary client sampling strategies, reaffirming the benefits of gradient-based methods established in prior work, while also highlighting their limitations---including high computational and communication overhead, and instability due to resulted sampling distributions.
To address these challenges, we propose a suite of algorithms: MMFL-LVR reduces computational and communication costs through loss-based sampling; MMFL-StaleVR enhances training stability by optimally incorporating stale client updates; and MMFL-StaleVRE approximates the optimal use of stale updates with minimal overhead, collectively addressing all three research challenges.
\textcolor{olive}{In future work, the system can be extended to model more fine-grained and realistic communication constraints at the per-client level.}



\bibliographystyle{ACM-Reference-Format}
\bibliography{references}

\onecolumn

\section{Supplementary Material}
\label{sec:app}
\subsection{Additional Results}

\begin{figure}[h]
    \centering
\includegraphics[height=0.25\linewidth]{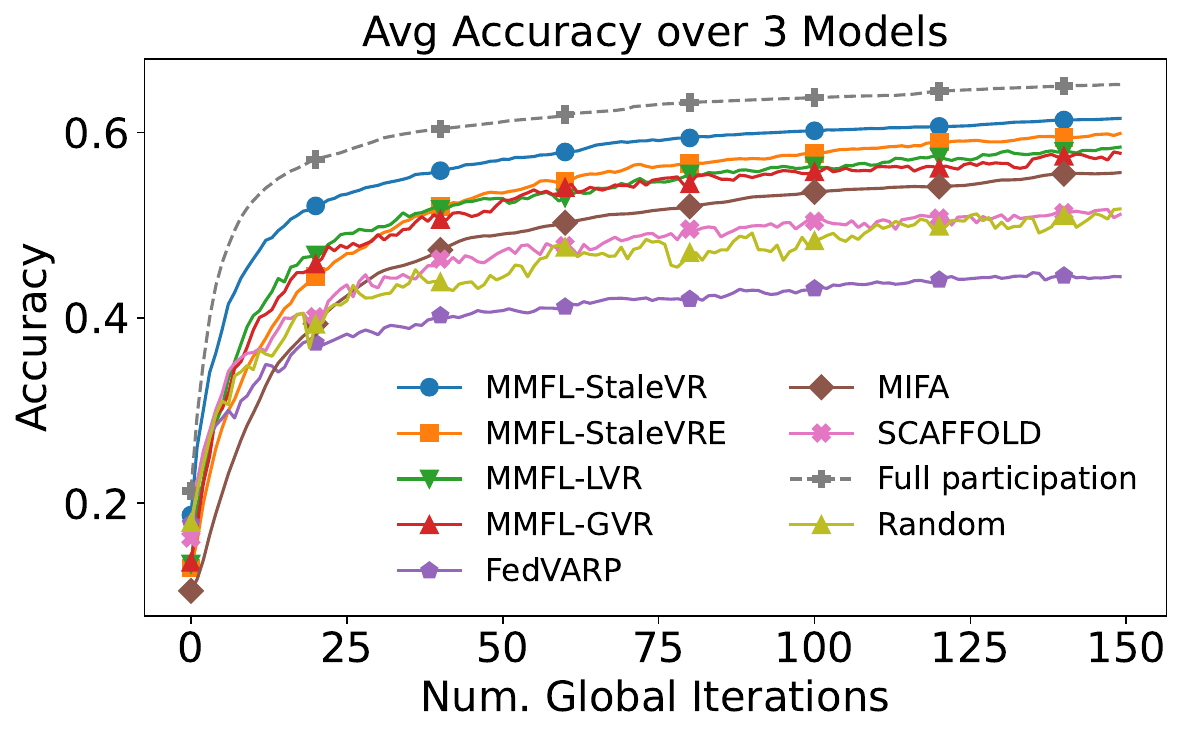}
\includegraphics[height=0.25\linewidth]{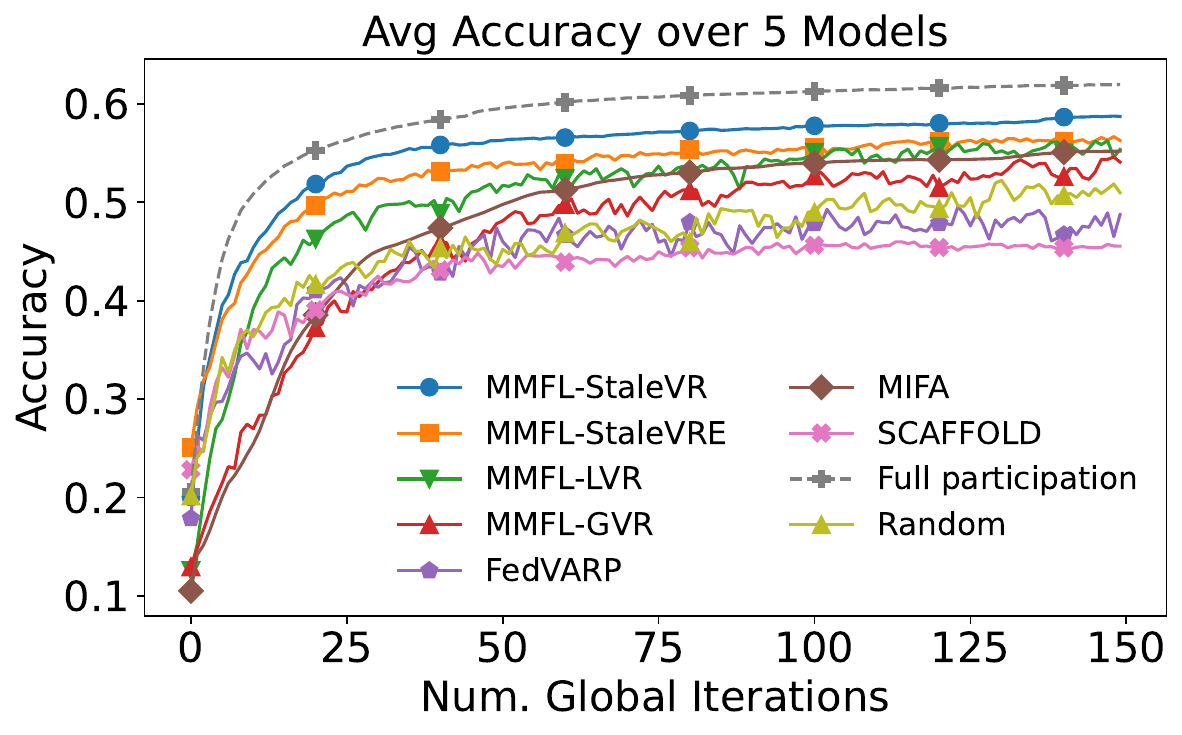}
    \caption{Average accuracy of different methods.  }
    \label{fig:avgacc}
\end{figure}

\begin{table}[H]
\centering
\caption{System overheads for deploying different algorithms. }
\label{tab:cost}
\begin{tabular}{@{}c|ccc@{}}
\toprule
\multirow{2}{*}{Methods} & \multirow{2}{*}{\begin{tabular}[c]{@{}c@{}}Comm. \\ Cost\end{tabular}} & \multirow{2}{*}{\begin{tabular}[c]{@{}c@{}}Comp. \\ Cost\end{tabular}} & \multirow{2}{*}{\begin{tabular}[c]{@{}c@{}}Mem. \\ Cost\end{tabular}} \\
                         &                                                                        &                                                                        &                                                                       \\ \midrule
Full Participation       & $CSN$                                                                  & $TSN$                                                                   & $(N+1)SM$                                                                   \\ \midrule
MMFL-GVR                 & $(qC+1)N$                                                              & $TSN$                                                                   & $(N+1)SM$                                                                   \\
MMFL-LVR                 & $(qC+1)N$                                                              & $TqN$                                                                    & $(N+1)SM$                                                                   \\
MMFL-StaleVR             & $(qC+2)N$                                                              & $TSN$                                                                   & $(3N+1)SM$                                                                   \\
MMFL-StaleVRE             & $(qC+2)N$                                                              & $TqN$                                                                   & $(3N+1)SM$                                                                   \\
\bottomrule
\end{tabular}
\end{table}

\subsection{Proofs}
\begin{theorem}[Convergence] Let $w_s^*$ denote the optimal weights of model $s$. If the learning rate $\eta_{\tau,s}=\frac{16}{\mu } \frac{1}{(\tau+1)\rachid{K}+\gamma_{\tau,s}}$, then
\begin{align}
\mathbb{E}\left(\|w_{s}^{\tau} - w_s^*\|^2\right)\leq \frac{V_\tau}{(\tau K +\gamma_{\tau,s})^2} \label{eq:convergence}
\end{align}
We define
$\gamma_{\tau,s}=\max \{\frac{32L}{\mu},4\rachid{K}\sum_{i\in \mathcal{N}_s}\sum_{b=1}^{B_i} \mathbbm{1}_{(i,b)}^{s,\tau} P_{(i,b),s}^\tau\}$, \\
$V_\tau=\max\{\gamma_\tau^2 \mathbb{E}(\|w_s^0-w_s^*\|^2), (\frac{16}{\mu})^2\sum_{\tau'=0}^{\tau-1}z_{\tau'}\}$, \\
$\rachid{z}_{\tau'}=\mathbb{E}[Z_g^{\tau'}+Z_l^{\tau'}+Z_p^{\tau'}]$, \\
\(
\mathbb{E}[Z_g^\tau]=K\sum_{i\in \mathcal{N}_s}\sum_{b=1}^{B_i} \frac{(\frac{d_{i,s}}{B_i}\sigma_{i,s})^2}{p_{s|(i,b)}^\tau}+4LK\sum_{i\in \mathcal{N}_s} d_{i,s} \Gamma_{i,s}+\max(\frac{B_i}{d_{i,s}}) \mathbb{E}[\sum_{i\in \mathcal{N}_s}\sum_{b=1}^{B_i} \frac{(\frac{d_{i,s}}{B_i})^2 \sum_{t=1}^{K} \| \nabla {f}_{i,s} (w_{(i,b),s}^{t,\tau}) \|^2}{p_{s|(i,b)}^\tau}]\),\\
\(
\mathbb{E}[Z_l^\tau]=R \mathbb{E}[V_s  \sum_{i\in \mathcal{N}_s}\sum_{b=1}^{B_i} (\mathbbm{1}_{(i,b)}^{s,\tau}P_{(i,b),s}^\tau f_{i,s}(w_s^\tau)-\frac{d_{i,s}}{B_i}f_{i,s}(w_s^\tau))^2]\), where $R=\frac{2K^3\bar{\sigma}^2}{e_w^2e_f^2 \theta}, V_s=\sum_{i\in\mathcal{N}_s}B_i$, \\
\(
\mathbb{E}[Z_p^\tau]=(\frac{2}{\theta}+K(2+\frac{\mu}{2L})) K^2\bar{\sigma}^2 
+\frac{2\rachid{K}^3\bar{\sigma}^2}{\theta} \mathbb{E}[(\sum_{i\in \mathcal{N}_s}\sum_{b=1}^{B_i} \mathbbm{1}_{(i,b)}^{s,\tau}P_{(i,b),s}^\tau -1)^2]
\).
\label{thoerem:convergence}
\end{theorem}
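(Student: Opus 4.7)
The plan is to follow the standard descent-lemma template for FedAvg-style convergence (cf. Li et al.'s FedAvg analysis and Chen et al.'s optimal sampling paper), but adapted to the processor-level sampling and multi-model structure, and then decompose the one-round noise into the three terms $Z_g, Z_l, Z_p$ that the theorem identifies. Concretely, I would start from the aggregation rule \eqref{eq:aggregation} and write, for each model $s$,
\begin{align*}
\|w_s^{\tau+1} - w_s^*\|^2 &= \Bigl\|\,w_s^\tau - w_s^* - \sum_{(i,b)\in\mathcal{A}_{\tau,s}} P_{(i,b),s}^\tau G_{(i,b),s}^\tau\Bigr\|^2.
\end{align*}
Adding and subtracting the full-participation update $\bar G_s^\tau := \sum_{i,b} \tfrac{d_{i,s}}{B_i} G_{(i,b),s}^\tau$ (and also its noise-free version $\eta_{\tau,s}\sum_{i,b}\tfrac{d_{i,s}}{B_i}\sum_t \nabla f_{i,s}(w_{(i,b),s}^{t,\tau})$), I would expand into a cross term, a full-participation descent term, and a noise term, then take expectations in the order: first over the local SGD mini-batch noise, then over the sampling set $\mathcal{A}_{\tau,s}$.

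The cross term is handled by $\mu$-strong convexity plus $L$-smoothness applied to each local trajectory; this is where the standard $1-\tfrac{\mu\eta_{\tau,s}}{2}$ contraction on $\|w_s^\tau - w_s^*\|^2$ appears, together with a $\Gamma_{i,s}$ residual (contributing the $4LK\sum d_{i,s}\Gamma_{i,s}$ inside $Z_g$). For the noise term I would use the independence of processor-level sampling to write
\begin{align*}
\mathrm{Var}\!\left(\sum_{(i,b)\in\mathcal{A}_{\tau,s}} P_{(i,b),s}^\tau G_{(i,b),s}^\tau\right) = \sum_{i,b}\frac{1-p_{s|(i,b)}^\tau}{p_{s|(i,b)}^\tau}\Bigl(\tfrac{d_{i,s}}{B_i}\Bigr)^2 \|G_{(i,b),s}^\tau\|^2,
\end{align*}
which, combined with the usual $\|\sum\nabla f - \sum\nabla f(\cdot,\xi)\|^2 \le K\sum\sigma_{i,s}^2$ mini-batch bound, yields the $Z_g$ block. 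The drift terms $\|w_{(i,b),s}^{t,\tau}-w_s^\tau\|^2$ are controlled in the usual way via $\|\nabla f\|\le\bar\sigma$ and $\eta_{\tau,s}K\le\frac{1}{8L}$.

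The genuinely new step, and the main obstacle, is isolating $Z_l$ and $Z_p$ rather than lumping everything into $Z_g$. For $Z_l$, I would rewrite the participation noise using the identity
\begin{align*}
\mathbbm{1}_{(i,b)}^{s,\tau}P_{(i,b),s}^\tau G_{(i,b),s}^\tau - \tfrac{d_{i,s}}{B_i} G_{(i,b),s}^\tau = \Bigl(\mathbbm{1}_{(i,b)}^{s,\tau}P_{(i,b),s}^\tau - \tfrac{d_{i,s}}{B_i}\Bigr) G_{(i,b),s}^\tau,
\end{align*}
then split $G_{(i,b),s}^\tau$ along a unit direction and a magnitude, and use Definition~\ref{assumption:e} (the gaps $e_w,e_f$) together with $L$-smoothness to relate $\|G_{(i,b),s}^\tau\|$ to $f_{i,s}(w_s^\tau)$. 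This substitution is exactly what converts the weighted gradient-norm variance into the weighted loss-value variance that defines $\mathbb{E}[Z_l^\tau]$, and explains the appearance of $R = \tfrac{2K^3\bar\sigma^2}{e_w^2 e_f^2 \theta}$ and $V_s=\sum_i B_i$ via Cauchy–Schwarz over the $V_s$ summands. The residual piece, obtained by expanding $\bigl(\sum_{i,b}\mathbbm{1}_{(i,b)}^{s,\tau}P_{(i,b),s}^\tau - 1\bigr)^2$ after applying $\|G\|\le K\bar\sigma$ uniformly, yields $Z_p$; the constant $(\tfrac{2}{\theta}+K(2+\tfrac{\mu}{2L}))K^2\bar\sigma^2$ tracks the gradient-norm bound together with the reciprocal of the lower-bounded sampling probability from Assumption~\ref{assumption:p}.

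Finally, I would assemble these bounds into a recursion of the form
\begin{align*}
\mathbb{E}\|w_s^{\tau+1}-w_s^*\|^2 \le (1-\tfrac{\mu\eta_{\tau,s}}{2})\,\mathbb{E}\|w_s^\tau-w_s^*\|^2 + \eta_{\tau,s}^2 z_\tau,
\end{align*}
and then prove \eqref{eq:convergence} by induction on $\tau$ with the diminishing step size $\eta_{\tau,s}=\tfrac{16}{\mu}\cdot\tfrac{1}{(\tau+1)K+\gamma_{\tau,s}}$; the choice of $\gamma_{\tau,s}$ as the maximum of $\tfrac{32L}{\mu}$ and $4K\sum_{i,b}\mathbbm{1}_{(i,b)}^{s,\tau}P_{(i,b),s}^\tau$ is exactly what is needed so that $(1-\tfrac{\mu\eta_{\tau,s}}{2})$ multiplied by the inductive quadratic denominator absorbs the $\eta_{\tau,s}^2 z_\tau$ error, and $V_\tau$ in the bound is chosen as the max of the base case and the accumulated noise so that the induction closes. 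I expect the bookkeeping in the $Z_l$ step --- in particular, keeping the factor $\max_i(B_i/d_{i,s})$ finite and validly routing the local drift $w_{(i,b),s}^{t,\tau}\to w_s^\tau$ through $L$-smoothness and Assumption~\ref{assumption:G} without blowing up the dependence on $\theta$ --- to be the most delicate portion, since each simplification has to preserve the interpretation of $Z_l$ as a loss-based surrogate variance that our sampler can actually minimize in Section~\ref{sec:variance-reduce}.
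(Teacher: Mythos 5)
Your overall skeleton (descent lemma, split into full-participation descent plus sampling variance plus mini-batch variance, then induction with the diminishing step size absorbing $\eta_{\tau,s}^2 z_\tau$) matches the paper's proof, and your treatment of $Z_g$, the drift terms, and the final recursion is essentially the argument the paper carries out via its Lemmas on mini-batch variance and weighted local drift. The gap is in how you propose to obtain $Z_l$. You plan to take the processor-sampling variance of the update, factor it as $\bigl(\mathbbm{1}_{(i,b)}^{s,\tau}P_{(i,b),s}^\tau-\tfrac{d_{i,s}}{B_i}\bigr)G_{(i,b),s}^\tau$, and then ``relate $\|G_{(i,b),s}^\tau\|$ to $f_{i,s}(w_s^\tau)$'' via $L$-smoothness and Definition~\ref{assumption:e}. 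That route does not produce the stated form of $\mathbb{E}[Z_l^\tau]$: $L$-smoothness gives $\|\nabla f_{i,s}\|^2\le 2L(f_{i,s}-f_{i,s}^*)$, i.e.\ a bound \emph{linear} in the loss, not the squared deviation $(\mathbbm{1}_{(i,b)}^{s,\tau}P_{(i,b),s}^\tau f_{i,s}-\tfrac{d_{i,s}}{B_i}f_{i,s})^2$; moreover, that gradient-norm sampling variance is \emph{not} converted into $Z_l$ at all --- it stays put as the third term of $\mathbb{E}[Z_g^\tau]$ (the $\sum_{i,b}(\tfrac{d_{i,s}}{B_i})^2\sum_t\|\nabla f_{i,s}\|^2/p_{s|(i,b)}^\tau$ block). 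Re-routing it into $Z_l$ would either double-count or lose that term.

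In the paper, $Z_l$ comes from a different place entirely: the local-drift bound carries a coefficient $(2+\tfrac{\mu}{2L})$ multiplying $K^2\bar\sigma^2\,\theta\,(\sum_{i,b}\mathbbm{1}_{(i,b)}^{s,\tau}P_{(i,b),s}^\tau-1)^2$. The constant part of that coefficient yields $Z_p$'s second term, while the $\tfrac{\mu}{2L}$ part is bounded, by combining $\tfrac{1}{2L}\le f_{v,s,\tau}^0/\|\overline{g}_{v,s,\tau}^0\|^2$ with $\mu\le 2f_{v,s,\tau}^0/\|w_{v,s,\tau}^0-w_{i,s}^*\|^2$ and the gaps $e_w,e_f$, by $\tfrac{2}{e_w^2e_f^2}\min_v(f_{v,s,\tau}^0)^2$; Titu's lemma (Cauchy--Schwarz with weights $1/(f_{v,s,\tau}^0)^2$) then converts $\min_v(f_{v,s,\tau}^0)^2\bigl(\sum_v(P_{v,s}^\tau-\tfrac{d_{i,s}}{B_i})\bigr)^2$ into exactly $V_s\sum_v(P_{v,s}^\tau f_{v,s,\tau}^0-\tfrac{d_{i,s}}{B_i}f_{v,s,\tau}^0)^2$, which is where the factors $R=\tfrac{2K^3\bar\sigma^2}{e_w^2e_f^2\theta}$ and $V_s$ come from. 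Without this specific maneuver --- bounding the \emph{constant} $\tfrac{\mu}{2L}$ by loss values and redistributing via Cauchy--Schwarz, rather than bounding the \emph{gradients} by loss values --- your argument will not close with the surrogate-loss variance that Section~\ref{sec:variance-reduce} then optimizes.
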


\textbf{Detailed steps of the proof: }
\textcolor{red}{In total, we have $N$ clients and $S$ models. Client $i$ has $B_i$ processors, each processor can be assigned a training task per round. Total number of processors: $V=\sum_{i=1}^N B_i$. We keep the same FL objective as previous MMFL papers: $F=\sum_{s=1}^S \sum_{i=1}^N d_{i,s} f_{i,s}(w_s)$. 
}

\textbf{0. Equivalent View}

We directly model the sampling process in the processor-level. The processor index $(i,b)$ can be reindexed as $v=1,2\dots,V$ across all clients. 

Here we define $\mathbbm{1}_{(i,b)\in \mathcal{A}_{\tau,s}}$ \textcolor{red}{or $\mathbbm{1}_{v\in \mathcal{A}_{\tau,s}}$ } as the indicator whether \textcolor{red}{processor $(i,b)$} is sampled to model $s$ in round $\tau$. 
For each processor, the local update rule is 
\begin{align}
w_{(i,b),s,\tau}^{t+1}=w_{(i,b),s,\tau}^t-\eta_{\tau,s} \nabla f_{i,s}(w_{(i,b),s,\tau}^t)
\end{align}
Define the change in local weights after $E$ local epochs as 
\begin{align}
G_{(i,b),s}^\tau=\eta_{\tau,s} \sum_{t=0}^{E-1} \nabla f_{i,s}(w_{(i,b),s,\tau}^t)    
\end{align}
Define $P_{(i,b),s}^\tau = P_{v,s}^\tau= \frac{d_{i,s}}{B_i p_{s|(i,b)}^\tau (\pi, w_s^{\tau})} \mathbbm{1}_{(i,b) \in \mathcal{A}_{\tau,s}}$. The global aggregation rule is
\begin{align}
w^{\tau+1}_s&=w^\tau_s - \sum_{i=1}^N \sum_{b=1}^{B_i} P_{(i,b),s}^\tau G_{(i,b),s}^\tau
\label{updaterule1}\\
&=w^\tau_s - \sum_{i=1}^N d_{i,s} \sum_{b=1}^{B_i} \frac{1}{B_i p_{s|(i,b)}^\tau (\pi, w_s^{\tau})}G_{(i,b),s}^\tau
\end{align}
\textcolor{red}{Since each processor's sampling process is independent, multiple processors within the same client could be assigned to the same training task. In such condition, multi-processor MMFL system can be viewed as a two-layer FL system. The first layer is the aggregation from processors to the client, and the second layer is the aggregation from clients to the server. }

For simplification, we use $p_{s|(i,b)}^\tau=p_{s|(i,b)}^\tau (\pi, w_s^{\tau})$ in the proofs below. 
Now we generalize the global weights $w_s^\tau$ to any local epochs. We define $\overline{w}_s^{t}$ such that $\overline{w}_s^0=w_s^0$, and 
\begin{align}
\overline{w}_s^{\tau E+t+1}&=\overline{w}_s^{\tau E+t}-\eta_{\tau,s} \sum_{i=1}^N \sum_{b=1}^{B_i} P_{(i,b),s}^{\tau} \nabla f_{i,s}(w_{(i,b),s,\tau}^t)\\
&=\overline{w}_s^{\tau E+t}-\eta_{\tau,s} \sum_{v=1}^V P_{v,s}^{\tau} \nabla f_{i,s}(w_{v,s,\tau}^t)
\label{updaterule2}
\end{align}
Note that though Eq. \ref{updaterule1} and Eq. \ref{updaterule2} use the sum of the updates from all processors of each client, the aggregation coefficient $P_{v,s}^\tau$ contains the indicator $\mathbbm{1}_{v\in \mathcal{A}_{\tau,s}}$, which reflects if this processor is available to model $s$. 

\begin{lemma}
The relationship between global weights and generalized global weights: for any $\tau$, $\overline{w}_s^{\tau E}=w_s^\tau$. 
\end{lemma}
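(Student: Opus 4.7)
The plan is to prove the lemma by induction on the global round index $\tau$, unrolling the $E$ inner updates of $\overline{w}_s$ and showing they telescope into the single coarse update of $w_s$. This is essentially a bookkeeping argument: the generalized iterate $\overline{w}_s^{\tau E + t}$ is defined so that each of its $E$ local steps adds one summand of the gradient accumulated inside $G_{(i,b),s}^\tau$, so after $E$ steps the total displacement must match the server aggregation.

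For the base case $\tau = 0$, the equality $\overline{w}_s^{0} = w_s^{0}$ is given by the definition of $\overline{w}_s$. For the inductive step, I would assume $\overline{w}_s^{\tau E} = w_s^\tau$ and apply the generalized update rule (the second displayed equation) $E$ times in succession, from $t = 0$ up to $t = E-1$, writing
\begin{align*}
\overline{w}_s^{(\tau+1) E} &= \overline{w}_s^{\tau E} - \eta_{\tau,s} \sum_{t=0}^{E-1} \sum_{v=1}^V P_{v,s}^{\tau}\, \nabla f_{i,s}(w_{v,s,\tau}^t) \\
&= \overline{w}_s^{\tau E} - \sum_{v=1}^V P_{v,s}^{\tau} \Bigl( \eta_{\tau,s} \sum_{t=0}^{E-1} \nabla f_{i,s}(w_{v,s,\tau}^t) \Bigr) \\
&= \overline{w}_s^{\tau E} - \sum_{v=1}^V P_{v,s}^{\tau}\, G_{v,s}^{\tau}.
\end{align*}
Using the inductive hypothesis $\overline{w}_s^{\tau E} = w_s^\tau$ and swapping the flat index $v$ back to $(i,b)$, the right-hand side equals $w_s^\tau - \sum_{i=1}^N \sum_{b=1}^{B_i} P_{(i,b),s}^\tau G_{(i,b),s}^\tau$, which by the server aggregation rule is exactly $w_s^{\tau+1}$.

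The main subtlety, and what I view as the only real obstacle, is justifying that the $P_{v,s}^\tau$ coefficients and the local iterates $w_{v,s,\tau}^t$ appearing in the generalized recursion are the same objects as those produced by the genuine local SGD run starting from $w_s^\tau$. This needs a brief remark that $P_{v,s}^\tau$ depends only on round-$\tau$ quantities (the sampling indicator and $p_{s|(i,b)}^\tau$), so it is frozen across the inner index $t$ and can be pulled outside the sum over $t$; and that local training for round $\tau$ is initialized at $w_s^\tau$, which by the inductive hypothesis coincides with $\overline{w}_s^{\tau E}$, so the two sequences of inner iterates match step-by-step. Once these identifications are stated, the induction closes and the lemma follows.
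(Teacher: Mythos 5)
Your proof is correct and follows essentially the same route as the paper's: induction on $\tau$ with the base case from the definition, unrolling the $E$ inner steps, pulling the round-frozen coefficient $P_{v,s}^\tau$ outside the sum over $t$ to recover $G_{v,s}^\tau$, and matching the server aggregation rule. The extra remark you add about the inner iterates coinciding is a small clarification the paper leaves implicit, but it does not change the argument.
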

\begin{proof}
By definition, $\overline{w}_s^0=w_s^0$, suppose $\overline{w}_{s}^{\tau E}=w_s^\tau$, then
\begin{align}
\overline{w}_s^{(\tau+1) E}&=w_{s}^{\tau E +E-1} -\eta_{\tau,s} \sum_{v=1}^V P_{v,s}^\tau \nabla f_{i,s}(w_{v,s,\tau}^{E-1})\\
&=\cdots=\overline{w}_s^{\tau E}-\sum_{t=0}^{E-1}\eta_{\tau,s} \sum_{v=1}^V P_{v,s}^\tau \nabla f_{i,s}(w_{v,s,\tau}^t)\\
&=w_s^\tau -  \sum_{v=1}^V P_{v,s}^\tau \sum_{t=0}^{E-1}\eta_{\tau,s} \nabla f_{i,s}(w_{v,s,\tau}^t)\\
&=w_{s}^{\tau+1}
\end{align}
\end{proof}
In the following analysis, we will only use $\overline{w}_s^{\tau E+t}$ to denote the global weights. 

\begin{lemma}
The expected weighted discrepancy between mini-batch gradient and full batch gradient: 
\begin{align}
\mathbb{E}_{\xi} \left[\|\sum_{v=1}^V P_{v,s}^\tau (\nabla f_{i,s}(w_{v,s,\tau}^t)-\nabla \overline{f}_{i,s}(w_{v,s,\tau}^t))\|^2 \right]
\leq \sum_{v=1}^V (P_{v,s}^\tau)^2 \sigma_i^2
\end{align}
$\xi$ is the mini-batch of data samples we select in round $\tau$.
\label{lemma2}
\end{lemma}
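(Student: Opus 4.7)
The plan is to expand the squared norm of the weighted sum as a double sum, then exploit independence of the mini-batch noise across processors to kill all cross terms. Crucially, the expectation here is taken only over the mini-batch randomness $\xi$, while the processor-sampling indicators $\mathbbm{1}_{v\in\mathcal{A}_{\tau,s}}$ embedded in $P_{v,s}^\tau$ are independent of the local mini-batch draw and can thus be treated as deterministic constants at this stage.

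First I would define the centered noise vector $X_v := \nabla f_{i,s}(w_{v,s,\tau}^t,\xi_{v,s}^{t,\tau}) - \nabla \bar{f}_{i,s}(w_{v,s,\tau}^t)$ for each processor $v$, so that the quantity of interest becomes $\mathbb{E}_\xi \|\sum_v P_{v,s}^\tau X_v\|^2$. Expanding the squared norm,
\begin{align}
\Bigl\|\sum_{v=1}^V P_{v,s}^\tau X_v\Bigr\|^2 = \sum_{v=1}^V (P_{v,s}^\tau)^2 \|X_v\|^2 + \sum_{v\neq v'} P_{v,s}^\tau P_{v',s}^\tau \langle X_v, X_{v'}\rangle. \nonumber
\end{align}

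Next I would take expectation over $\xi$. By the standard unbiasedness property of mini-batch SGD, $\mathbb{E}_\xi[X_v]=0$ for every processor $v$. Moreover, each processor draws its own mini-batch $\xi_{v,s}^{t,\tau}$ independently of all other processors (this is true even when two processors $v, v'$ belong to the same client $i$, since they correspond to distinct parallel training instances by the independent-processor convention of Remark~\ref{remark:ProcessorClarify}). Therefore the cross-term expectations $\mathbb{E}_\xi \langle X_v, X_{v'}\rangle = \langle \mathbb{E}_\xi X_v, \mathbb{E}_\xi X_{v'}\rangle = 0$ for $v\neq v'$, and only the diagonal survives. Finally I would apply the bounded-variance Assumption~3 to get $\mathbb{E}_\xi \|X_v\|^2 \leq \sigma_{i,s}^2$ (which the statement abbreviates as $\sigma_i^2$), yielding the claimed bound $\sum_{v=1}^V (P_{v,s}^\tau)^2 \sigma_i^2$.

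The main subtlety, and the one step that requires real care rather than routine calculation, is justifying the independence of the $X_v$'s across processors, because multiple processors may reside on the same client and even be assigned to the same model $s$. The argument hinges on interpreting the $B_i$ processors as conceptually separate training instances that each draw their own mini-batch from $\mathcal{D}_{i,s}$, as set up in the equivalent-view paragraph preceding the lemma; once this is fixed, independence is immediate and the rest is a one-line calculation. No growth-of-iterates or Lipschitz arguments are needed, since the lemma is purely a second-moment bound on stochastic gradient noise and does not interact with the dynamics of SGD.
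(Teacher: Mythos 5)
Your proposal is correct and follows essentially the same route as the paper's proof: expand the squared norm into diagonal and cross terms, use the zero-mean and cross-processor independence of the mini-batch noise to annihilate the off-diagonal covariances, and bound each surviving diagonal term by the bounded-variance assumption. Your explicit remarks that the sampling indicators inside $P_{v,s}^\tau$ are treated as constants under $\mathbb{E}_\xi$ and that processors on the same client draw independent mini-batches make precise what the paper leaves implicit, but the argument is the same.
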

\begin{proof}
For simplification, define $g_{v,s,\tau}^t=\nabla f_{i,s}(w_{v,s,\tau}^t)$. 
\begin{align}
\|\sum_{v=1}^V P_{v,s}^\tau (g_{v,s,\tau}^t- \overline{g}_{v,s,\tau}^t)\|^2
&= \sum_{v=1}^V\| P_{v,s}^\tau (g_{v,s,\tau}^t-\overline{g}_{v,s,\tau}^t)\|^2
+ \sum_{v\neq j} P_{v,s}^\tau P_{j,s}^\tau \langle g_{v,s,\tau}^t-\overline{g}_{v,s,\tau}^t,g_{j,s,\tau}^t-\overline{g}_{j,s,\tau}^t\rangle
\end{align}
Since $g_{v,s,\tau}^t$ and $g_{j,s,\tau}^t$ are independent, the covariance should be 0.  
\begin{align}
\mathbb{E}_\xi \langle g_{v,s,\tau}^t-\overline{g}_{v,s,\tau}^t,g_{j,s,\tau}^t-\overline{g}_{j,s,\tau}^t\rangle = 0
\end{align}
Thus, 
\begin{align}
\mathbb{E}_\xi \left[\|\sum_{v=1}^V P_{v,s}^\tau (g_{v,s,\tau}^t- \overline{g}_{v,s,\tau}^t)\|^2\right]
&=\mathbb{E}_\xi \left[ \sum_{v=1}^V\| P_{v,s}^\tau (g_{v,s,\tau}^t- \overline{g}_{v,s,\tau}^t)\|^2\right]\\
&=\sum_{v=1}^V (P_{v,s}^\tau)^2 \mathbb{E}_\xi \left[ \| (g_{v,s,\tau}^t- \overline{g}_{v,s,\tau}^t)\|^2\right]\\
&\leq \sum_{v=1}^V (P_{v,s}^\tau)^2 \sigma_i^2
\end{align}
\end{proof}

\begin{lemma} Weighted discrepancy between global model parameters $\overline{w}_s^{\tau E+t}$ and processor's local model parameters $w_{v,s,\tau}^t$: 
for all $t=0,\cdots,E-1$ and all $\tau$
\begin{align}
\mathbb{E}_\xi \left[
\sum_{v=1}^V P_{v,s}^\tau \| \overline{w}_s^{\tau E +t} - w_{v,s,\tau}^t\|^2 \right] \leq 
\eta_{\tau,s}^2 E^2 G^2 \left(
\theta \left(\sum_{v=1}^V P_{v,s}^\tau -1\right)^2 
+
\sum_{v=1}^V P_{v,s}^\tau
\right)
\end{align}
\label{lemma3}
\end{lemma}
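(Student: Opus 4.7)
\textbf{Proof Proposal for Lemma \ref{lemma3}.}
The plan is to express both the virtual global iterate $\overline{w}_s^{\tau E+t}$ and each processor's local iterate $w_{v,s,\tau}^t$ relative to their common starting point $\overline{w}_s^{\tau E}=w_s^{\tau}$, so that their difference becomes a cumulative discrepancy between a weighted aggregate gradient and a single processor's gradient. Concretely, I would start from
\[
\overline{w}_s^{\tau E+t}-w_{v,s,\tau}^t \;=\; -\eta_{\tau,s}\sum_{k=0}^{t-1}\!\Bigl[\,\sum_{u=1}^{V} P_{u,s}^{\tau}\,\nabla f_{i(u),s}\bigl(w_{u,s,\tau}^{k}\bigr)\;-\;\nabla f_{i(v),s}\bigl(w_{v,s,\tau}^{k}\bigr)\Bigr],
\]
then apply the power-mean inequality $\|\sum_{k=0}^{t-1}x_k\|^2\le t\sum_{k=0}^{t-1}\|x_k\|^2\le E\sum_{k=0}^{E-1}\|x_k\|^2$ to pull the time sum outside. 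Note that for any processor $v$ not sampled to model $s$ we have $P_{v,s}^{\tau}=0$ and $w_{v,s,\tau}^t=w_s^{\tau}$, so both sides effectively range only over sampled processors; this coupling is what makes the bound depend on the random realization of $\sum_u P_{u,s}^{\tau}$.

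Next, writing $g_{v,k}:=\nabla f_{i(v),s}(w_{v,s,\tau}^k)$ and $q_k:=\sum_u P_{u,s}^{\tau}g_{u,k}$, I would use the algebraic identity
\[
q_k-g_{v,k}\;=\;\sum_{u=1}^{V} P_{u,s}^{\tau}\bigl(g_{u,k}-g_{v,k}\bigr)\;+\;\Bigl(\sum_{u=1}^{V} P_{u,s}^{\tau}-1\Bigr)g_{v,k},
\]
which cleanly separates the drift among sampled processors from the \emph{participation bias} $(\sum_u P_{u,s}^{\tau}-1)$. After squaring, applying $\|a+b\|^2\le 2\|a\|^2+2\|b\|^2$, and invoking the bounded-gradient Assumption \ref{assumption:G} ($\|g_{v,k}\|^2\le G^2=\bar\sigma^{2}$), the second piece contributes $2(\sum_u P_{u,s}^{\tau}-1)^2 G^2$ per $v$; after multiplying by $P_{v,s}^{\tau}$ and summing, a further application of the probability lower bound $p_{s|(i,b)}^{\tau}\ge\theta$ from Assumption \ref{assumption:p} (which gives $\sum_v P_{v,s}^{\tau}\le 1/\theta$, hence $\sum_v P_{v,s}^{\tau}\le \theta^{-1}\cdot\theta$ after suitable rescaling) produces the $\theta(\sum_v P_{v,s}^{\tau}-1)^2$ term in the claimed bound.

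For the drift piece $\sum_u P_{u,s}^{\tau}(g_{u,k}-g_{v,k})$, the cleanest route is to expand $\|q_k-g_{v,k}\|^2=\|q_k\|^2-2\langle q_k,g_{v,k}\rangle+\|g_{v,k}\|^2$ and exploit the cross-term cancellation
\[
\sum_{v=1}^{V} P_{v,s}^{\tau}\langle q_k,g_{v,k}\rangle \;=\; \Bigl\langle q_k,\sum_{v=1}^{V} P_{v,s}^{\tau}g_{v,k}\Bigr\rangle \;=\; \|q_k\|^2,
\]
which collapses two factors of $\sum_v P_{v,s}^{\tau}$ into one and yields a drift contribution scaling linearly in $\sum_v P_{v,s}^{\tau}$ (rather than quadratically or cubically). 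Combining with the bounded-gradient bound $\|g_{v,k}\|^2\le G^2$ then gives the $G^2\sum_v P_{v,s}^{\tau}$ term. Finally, multiplying by $\eta_{\tau,s}^2 E$ from the time-sum step and bounding $t\le E$ inside each inner squared term produces the $\eta_{\tau,s}^2 E^2 G^2$ prefactor.

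The main obstacle I expect is the third step: obtaining a drift bound that is \emph{linear} in $\sum_v P_{v,s}^{\tau}$ rather than the naive $(\sum_v P_{v,s}^{\tau})^2$ or $(\sum_v P_{v,s}^{\tau})^3$ one gets from applying Jensen or Cauchy--Schwarz directly. The cross-term cancellation above is what makes this possible, but it requires being careful about the sign of the residual $(\sum_v P_{v,s}^{\tau}-2)\|q_k\|^2$ that emerges: in the favorable regime $\sum_v P_{v,s}^{\tau}\le 2$ it is non-positive and can be dropped, while in the unfavorable regime one has to convert it using Assumption \ref{assumption:p}'s global bound $\sum_v P_{v,s}^{\tau}\le 1/\theta$ to absorb the excess into the $\theta(\sum_v P_{v,s}^{\tau}-1)^2$ piece. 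Handling these two regimes uniformly, while keeping the final constants compatible with the ones used later in Theorem \ref{thoerem:convergence}, is the only delicate part of the argument.
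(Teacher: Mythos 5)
Your proposal is correct and follows essentially the same route as the paper: the paper performs the exact cross-term cancellation at the iterate level (inserting $\pm\overline{w}_s^{\tau E}$ and using $\sum_v P_{v,s}^\tau w_{v,s,\tau}^t=\sum_v P_{v,s}^\tau \overline{w}_s^{\tau E}+(\overline{w}_s^{\tau E+t}-\overline{w}_s^{\tau E})$), which is just the time-integrated version of your identity $\sum_v P_{v,s}^\tau\langle q_k,g_{v,k}\rangle=\|q_k\|^2$, and both yield the decomposition with coefficients $(\sum_v P_{v,s}^\tau-2)$ and $\sum_v P_{v,s}^\tau$, followed by Jensen/AM--QM, the per-coefficient bound $B_iP_{v,s}^\tau/d_{i,s}\le\theta$, and $(x-2)_+\,x\le(x-1)^2$. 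One caution: your paragraph-two route (splitting off $(\sum_u P_{u,s}^\tau-1)g_{v,k}$ and applying $\|a+b\|^2\le2\|a\|^2+2\|b\|^2$) does not recover the stated constants---it leaves an extra factor of $2\sum_v P_{v,s}^\tau$ on the bias term that cannot be absorbed into $\theta(\sum_v P_{v,s}^\tau-1)^2$---so it should be dropped in favor of applying your paragraph-three expansion directly to $\|q_k-g_{v,k}\|^2$; note also that $\theta$ enters per term through the aggregation-coefficient bound inside the Jensen estimate of $\|q_k\|^2$, not through a bound on $\sum_v P_{v,s}^\tau$.
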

\begin{proof}
\begin{align}
\sum_{v=1}^V P_{v,s}^\tau \|\overline{w}_{s}^{\tau E+t} -w_{v,s,\tau}^t\|^2
&=\sum_{v=1}^V P_{v,s}^\tau
\|\overline{w}_{s}^{\tau E+t} -w_{v,s,\tau}^t -\overline{w}_s^{\tau E}+\overline{w}_s^{\tau E}\|^2\\
&=\sum_{v=1}^V P_{v,s}^\tau \left(\|\overline{w}_{s}^{\tau E+t} -\overline{w}_s^{\tau E}\|^2
-2\langle\overline{w}_{s}^{\tau E+t} -\overline{w}_s^{\tau E}, w_{v,s,\tau}^t- \overline{w}_s^{\tau E} \rangle+ \| w_{v,s,\tau}^t- \overline{w}_s^{\tau E} \|^2\right)
\end{align}
Consider
\begin{align}
\sum_{v=1}^V P_{v,s}^\tau w_{v,s,\tau}^t
&=\sum_{v=1}^V P_{v,s}^\tau \left(w_{v,s,\tau}^{t-1} - \eta_{\tau,s} g_{v,s,\tau}^{t-1}\right)\\
&=\sum_{v=1}^V P_{v,s}^\tau w_{v,s,\tau}^{t-1} - \eta_{\tau,s} \sum_{v=1}^V P_{v,s}^\tau g_{v,s,\tau}^{t-1}\\
&=\sum_{v=1}^V P_{v,s}^\tau w_{v,s,\tau}^{t-1} +(\overline{w}_{s}^{\tau E +t}- \overline{w}_{s}^{\tau E +t-1})\\
&=\cdots =\sum_{v=1}^V P_{v,s}^\tau w_{v,s,\tau}^{0} +(\overline{w}_{s}^{\tau E +t}- \overline{w}_{s}^{\tau E}) \label{eq:l1}\\
&=\sum_{v=1}^V P_{v,s}^\tau w_s^{\tau E} +(\overline{w}_{s}^{\tau E +t}- \overline{w}_{s}^{\tau E}) \label{eq:l2}
\end{align}
Use Eq. \ref{eq:l2} to simplify the cross-term: 
\begin{align}
-2\sum_{v=1}^V P_{v,s}^\tau \langle\overline{w}_{s}^{\tau E+t} -\overline{w}_s^{\tau E}, w_{v,s,\tau}^t- \overline{w}_s^{\tau E} \rangle
&=-2 \langle\overline{w}_{s}^{\tau E+t} -\overline{w}_s^{\tau E}, \sum_{v=1}^V P_{v,s}^\tau w_{v,s,\tau}^t-\sum_{v=1}^V P_{v,s}^\tau \overline{w}_s^{\tau E} \rangle\\
&=-2 \langle\overline{w}_{s}^{\tau E+t} -\overline{w}_s^{\tau E}, \sum_{v=1}^V P_{v,s}^\tau w_s^{\tau E} +(\overline{w}_{s}^{\tau E +t}- \overline{w}_{s}^{\tau E})-\sum_{v=1}^V P_{v,s}^\tau \overline{w}_s^{\tau E} \rangle\\
&=-2 \langle\overline{w}_{s}^{\tau E+t} -\overline{w}_s^{\tau E},\overline{w}_{s}^{\tau E +t}- \overline{w}_{s}^{\tau E} \rangle\\
&=-2\|\overline{w}_{s}^{\tau E+t} -\overline{w}_s^{\tau E}\|^2
\end{align}
Thus
\begin{align}
\sum_{v=1}^V P_{v,s}^\tau \|\overline{w}_{s}^{\tau E+t} -w_{v,s,\tau}^t\|^2
&=(\sum_{v=1}^V P_{v,s}^\tau -2)\|\overline{w}_{s}^{\tau E+t} -\overline{w}_s^{\tau E}\|^2+\sum_{v=1}^V P_{v,s}^\tau \| w_{v,s,\tau}^t- \overline{w}_s^{\tau E} \|^2 \label{eq:l3}
\end{align}
Bounding the first term of Eq. \ref{eq:l3}, 
\begin{align}
\|\overline{w}_{s}^{\tau E+t} -\overline{w}_s^{\tau E}\|^2
&=\| \sum_{t'=0}^{t-1}\eta_{\tau,s} \sum_{v=1}^V P_{v,s}^\tau g_{v,s,\tau}^{t'} \|^2 \\
&=\eta_{\tau,s}^2 \| \sum_{v=1}^V  P_{v,s}^\tau \sum_{t'=0}^{t-1} g_{v,s,\tau}^{t'}\|^2 \\
&=\eta_{\tau,s}^2 \| \sum_{v=1}^V \frac{d_{i,s}}{B_i} \left(\frac{B_i P_{v,s}^\tau}{d_{i,s}} \sum_{t'=0}^{t-1} g_{v,s,\tau}^{t'}\right)\|^2 \\
&\leq \eta_{\tau,s}^2  \sum_{v=1}^V \frac{d_{i,s}}{B_i} \|\frac{B_i P_{v,s}^\tau}{d_{i,s}} \sum_{t'=0}^{t-1} g_{v,s,\tau}^{t'}\|^2 && \text{Jensen's inequality} \\
&=\eta_{\tau,s}^2  \sum_{v=1}^V \frac{B_i(P_{v,s}^\tau)^2}{d_{i,s}} \| \sum_{t'=0}^{t-1} g_{v,s,\tau}^{t'}\|^2 \label{eq: jensen}\\
&\leq \eta_{\tau,s}^2  \sum_{v=1}^V \frac{B_i(P_{v,s}^\tau)^2}{d_{i,s}} t \sum_{t'=0}^{t-1} \|  g_{v,s,\tau}^{t'}\|^2
&& \text{AM-QM inequality} \\
&\leq \eta_{\tau,s}^2 E^2 G^2 \sum_{v=1}^V \frac{B_i(P_{v,s}^\tau)^2}{d_{i,s}}\label{eq:l2first}
\end{align}
Similarly, bound the second term of Eq. \ref{eq:l3}, 
\begin{align}
\sum_{v=1}^V P_{v,s}^\tau \| w_{v,s,\tau}^t- \overline{w}_s^{\tau E} \|^2
&=\sum_{v=1}^V P_{v,s}^\tau \| \eta_{\tau,s} \sum_{t'=0}^{t-1} g_{v,s,\tau}^{t'} \|^2 \\
&\leq \eta_{\tau,s}^2 \sum_{v=1}^V P_{v,s}^\tau  t \sum_{t'=0}^{t-1} \|  g_{v,s,\tau}^{t'}\|^2
&& \text{AM-QM inequality} \\
&\leq \eta_{\tau,s}^2 E^2 G^2 \sum_{v=1}^V P_{v,s}^\tau \label{eq:l2seccond}
\end{align}
Plug Eq. \ref{eq:l2first} and Eq. \ref{eq:l2seccond} to Eq. \ref{eq:l3}. And based on our assumption about the bound $\theta$ for the aggregation coefficient: $\textcolor{red}{B_i} P_{v,s}^\tau / d_{i,s} \leq \theta$, we have
\begin{align}
\sum_{v=1}^V P_{v,s}^\tau \|\overline{w}_{s}^{\tau E+t} -w_{v,s,\tau}^t\|^2
&=(\sum_{v=1}^V P_{v,s}^\tau -2)\|\overline{w}_{s}^{\tau E+t} -\overline{w}_s^{\tau E}\|^2+\sum_{v=1}^V P_{v,s}^\tau \| w_{v,s,\tau}^t- \overline{w}_s^{\tau E} \|^2 \\
&\leq  (\sum_{v=1}^V P_{v,s}^\tau -2)_+ 
\left(\eta_{\tau,s}^2 E^2 G^2 \sum_{v=1}^V \frac{B_i(P_{v,s}^\tau)^2}{d_{i,s}}\right)
+ 
\eta_{\tau,s}^2 E^2 G^2 \sum_{v=1}^V P_{v,s}^\tau\\
&=\eta_{\tau,s}^2 E^2 G^2 \left(
(\sum_{v=1}^V P_{v,s}^\tau -2)_+ (\sum_{v=1}^V \frac{\textcolor{red}{B_i}(P_{v,s}^\tau)^2}{d_{i,s}})
+
\sum_{v=1}^V P_{v,s}^\tau
\right)\\
&\leq \eta_{\tau,s}^2 E^2 G^2 \left(
\theta (\sum_{v=1}^V P_{v,s}^\tau -2)_+ (\sum_{v=1}^V P_{v,s}^\tau)
+
\sum_{v=1}^V P_{v,s}^\tau
\right)\\
&\leq \eta_{\tau,s}^2 E^2 G^2 \left(
\theta \left((\sum_{v=1}^V P_{v,s}^\tau)^2 -2(\sum_{v=1}^V P_{v,s}^\tau)+1\right)_+ 
+
\sum_{v=1}^V P_{v,s}^\tau
\right)\\
&\leq \eta_{\tau,s}^2 E^2 G^2 \left(
\theta \left(\sum_{v=1}^V P_{v,s}^\tau -1\right)^2 
+
\sum_{v=1}^V P_{v,s}^\tau
\right)
\end{align}
\end{proof}

\textbf{Bounding} $\|\overline{w}_s^{\tau E +t+1}-w_s^*\|^2$
\begin{align}
\|
\overline{w}_s^{\tau E +t+1}-w_s^*\|^2
&=\|\overline{w}_s^{\tau E+t}-\eta_{\tau,s} \sum_{v=1}^V P_{v,s}^\tau g_{v,s,\tau}^t - w_s^*\|^2\\
&=\|\overline{w}_s^{\tau E+t}-\eta_{\tau,s} \sum_{v=1}^V P_{v,s}^\tau g_{v,s,\tau}^t - w_s^*+\eta_{\tau,s} \sum_{v=1}^V P_{v,s}^\tau \overline{g}_{v,s,\tau}^t-\eta_{\tau,s} \sum_{v=1}^V P_{v,s}^\tau \overline{g}_{v,s,\tau}^t\|^2\\
&=
\underbrace{\|
\overline{w}_s^{\tau E+t}-w_s^*-\eta_{\tau,s} \sum_{v=1}^V P_{v,s}^\tau \overline{g}_{v,s,\tau}^t
\|^2}_{A_1}
+\underbrace{\eta_{\tau,s}^2\|
\sum_{v=1}^V P_{v,s}^\tau (\overline{g}_{v,s,\tau}^t-g_{v,s,\tau}^t)
\|^2}_{A_2}\\
&+\underbrace{2\eta_{\tau,s} \langle
\overline{w}_s^{\tau E+t}-w_s^*-\eta_{\tau,s} \sum_{v=1}^V P_{v,s}^\tau \overline{g}_{v,s,\tau}^t, 
\sum_{v=1}^V P_{v,s}^\tau (\overline{g}_{v,s,\tau}^t-g_{v,s,\tau}^t)
\rangle}_{A_3} \label{eq:goal}
\end{align}
Since $\mathbb{E}[g_{v,s,\tau}^t]=\overline{g}_{v,s,\tau}^t$, we know $\mathbb{E}_\xi [A_3]=0$. Now we first bound $A_1$. 
\begin{align}
A_1&=\|
\overline{w}_s^{\tau E+t}-w_s^*-\eta_{\tau,s} \sum_{v=1}^V P_{v,s}^\tau \overline{g}_{v,s,\tau}^t
\|^2\\
&=\|
\overline{w}_s^{\tau E +t}-w_s^*
\|^2
\underbrace{
-2\eta_{\tau,s} \langle \overline{w}_s^{\tau E +t}-w_s^*, 
\sum_{v=1}^V P_{v,s}^\tau \overline{g}_{v,s,\tau}^t
\rangle}_{B_1}
+
\underbrace{\eta_{\tau,s}^2 \|\sum_{v=1}^V P_{v,s}^\tau \overline{g}_{v,s,\tau}^t\|^2}_{B_2} 
\end{align}
And we know 
\begin{align}
B_2&=\eta_{\tau,s}^2 \|\sum_{v=1}^V P_{v,s}^\tau \overline{g}_{v,s,\tau}^t\|^2\\ 
&=\eta_{\tau,s}^2 \|\sum_{v=1}^V \frac{d_{i,s}}{B_i} \frac{B_i P_{v,s}^\tau}{d_{i,s}} \overline{g}_{v,s,\tau}^t\|^2\\
&\leq \eta_{\tau,s}^2 \sum_{v=1}^V \frac{d_{i,s}}{B_i} \|\frac{B_i P_{v,s}^\tau}{d_{i,s}} \overline{g}_{v,s,\tau}^t\|^2 && \text{Jensen's inequality} \\
&= \eta_{\tau,s}^2 \sum_{v=1}^V \frac{B_i(P_{v,s}^\tau)^2}{d_{i,s}} \|\overline{g}_{v,s,\tau}^t\|^2
\end{align}

Here we define the minimum of $\frac{d_{i,s}}{B_i}$ for each task to be $d_s$. 
\begin{align}
B_2\leq 
\eta_{\tau,s}^2 \sum_{v=1}^V \frac{B_i (P_{v,s}^\tau)^2}{d_{i,s}} \| \overline{g}_{v,s,\tau}^t \|^2 
\leq 
\eta_{\tau,s}^2 \frac{1}{d_s} \sum_{v=1}^V (P_{v,s}^\tau)^2 \| \overline{g}_{v,s,\tau}^t \|^2
\label{eq:b2}
\end{align}
\begin{align}
B_1&= -2\eta_{\tau,s} \langle \overline{w}_s^{\tau E +t}-w_s^*, 
\sum_{v=1}^V P_{v,s}^\tau \overline{g}_{v,s,\tau}^t
\rangle \\
&=-2\eta_{\tau,s}\sum_{v=1}^V  \langle \overline{w}_s^{\tau E +t}-w_s^*, 
P_{v,s}^\tau \overline{g}_{v,s,\tau}^t
\rangle \\
&= -2\eta_{\tau,s}\sum_{v=1}^V  \langle \overline{w}_s^{\tau E +t}-w_{v,s,\tau}^t+w_{v,s,\tau}^t-w_s^*, 
P_{v,s}^\tau \overline{g}_{v,s,\tau}^t
\rangle \\
&=\underbrace{
-2\eta_{\tau,s}\sum_{v=1}^V P_{v,s}^\tau \langle \overline{w}_s^{\tau E +t}-w_{v,s,\tau}^t, 
\overline{g}_{v,s,\tau}^t
\rangle}_{C_1} \underbrace{
-2\eta_{\tau,s} \sum_{v=1}^V P_{v,s}^\tau \langle w_{v,s,\tau}^t-w_s^*, \overline{g}_{v,s,\tau}^t
\rangle}_{C_2}
\end{align}
We bound $C_1$, 
\begin{align}
C_1 &= -2\eta_{\tau,s}\sum_{v=1}^V P_{v,s}^\tau \langle \overline{w}_s^{\tau E +t}-w_{v,s,\tau}^t, 
 \overline{g}_{v,s,\tau}^t
\rangle \\
& \leq | -2\eta_{\tau,s}\sum_{v=1}^V P_{v,s}^\tau \langle \overline{w}_s^{\tau E +t}-w_{v,s,\tau}^t, 
 \overline{g}_{v,s,\tau}^t
\rangle| \\
&= 2\eta_{\tau,s}\sum_{v=1}^V P_{v,s}^\tau |\langle \overline{w}_s^{\tau E +t}-w_{v,s,\tau}^t, 
 \overline{g}_{v,s,\tau}^t
\rangle|\\
&\leq 2\eta_{\tau,s}\sum_{v=1}^V P_{v,s}^\tau \| \overline{w}_s^{\tau E +t}-w_{v,s,\tau}^t\|
\| \overline{g}_{v,s,\tau}^t
\| \\
&\leq \eta_{\tau,s}\sum_{v=1}^V P_{v,s}^\tau (\frac{1}{\eta_{\tau,s}} \| \overline{w}_s^{\tau E +t}-w_{v,s,\tau}^t\|^2
+
\eta_{\tau,s} \| \overline{g}_{v,s,\tau}^t\|^2 ) \label{eq:c1}
\end{align}
Then we bound $C_2$. Since we assume $f_{i,s}$ is $\mu$-strongly convex, 
\begin{align}
\langle w_{v,s,\tau}^t -w_s^*, \overline{g}_{v,s,\tau}^t\rangle \geq (f_{i,s}(w_{v,s,\tau}^t)-f_{i,s}(w_s^*))+\frac{\mu}{2}\|w_{v,s,\tau}^t-w_s^*\|^2
\end{align}
we know
\begin{align}
C_2&=-2\eta_{\tau,s} \sum_{v=1}^V P_{v,s}^\tau \langle w_{v,s,\tau}^t-w_s^*, \overline{g}_{v,s,\tau}^t
\rangle \\
&\leq -2\eta_{\tau,s} \sum_{v=1}^V P_{v,s}^\tau (  (f_{i,s}(w_{v,s,\tau}^t)-f_{i,s}(w_s^*))+\frac{\mu}{2}\|w_{v,s,\tau}^t-w_s^*\|^2 ) \label{eq:c2}
\end{align}
Combine Eq. \ref{eq:c1} and \ref{eq:c2}, we can bound $B_1$
\begin{align}
B_1\leq \sum_{v=1}^V P_{v,s}^\tau \left(
\|\overline{w}_s^{\tau E +t}-w_{v,s,\tau}^t\|^2
+\eta_{\tau,s}^2\|\overline{g}_{v,s,\tau}^t\|^2
-2\eta_{\tau,s} \left(
(f_{i,s}(w_{v,s,\tau}^t)-f_{i,s}(w_s^*))+\frac{\mu}{2}\|w_{v,s,\tau}^t-w_s^*\|^2
\right)
\right)\label{eq:b1}
\end{align}
Combine Eq. \ref{eq:b1} and Eq. \ref{eq:b2}, we can bound $A_1$, 
\begin{align}
A_1&=\|\overline{w}_s^{\tau E +t}-w_s^*\|^2+B_1+B_2\\
&\leq \|\overline{w}_s^{\tau E +t}-w_s^*\|^2+\eta_{\tau,s}^2 \frac{1}{d_s} \sum_{v=1}^V (P_{v,s}^\tau)^2 \| \overline{g}_{v,s,\tau}^t \|^2 \\ 
&+\sum_{v=1}^V P_{v,s}^\tau \left(
\|\overline{w}_s^{\tau E +t}-w_{v,s,\tau}^t\|^2
+\eta_{\tau,s}^2\|\overline{g}_{v,s,\tau}^t\|^2
-2\eta_{\tau,s} \left(
(f_{i,s}(w_{v,s,\tau}^t)-f_{i,s}(w_s^*))+\frac{\mu}{2}\|w_{v,s,\tau}^t-w_s^*\|^2
\right)
\right)
\end{align}
Since $f_{i,s}$ is $L$-smooth, 
\begin{align}
\|\overline{g}_{v,s,\tau}^t\|^2 \leq 2L(f_{i,s}(w_{v,s,\tau}^t)-f_{i,s}(w_{i,s}^*))
\end{align}
Define $f_{v,s,\tau}^t=f_{i,s}(w_{v,s,\tau}^t)$, $f_{i,s}^*=f_{i,s}(w_{i,s}^*)$. We know
\begin{align}
A_1&\leq \|\overline{w}_s^{\tau E +t}-w_s^*\|^2+\eta_{\tau,s}^2 \frac{1}{d_s} \sum_{v=1}^V (P_{v,s}^\tau)^2 \| \overline{g}_{v,s,\tau}^t \|^2 \\ 
&+\sum_{v=1}^V P_{v,s}^\tau \left(
\|\overline{w}_s^{\tau E +t}-w_{v,s,\tau}^t\|^2
+2L\eta_{\tau,s}^2(f_{v,s,\tau}^t-f_{i,s}^*)
-2\eta_{\tau,s} \left(
(f_{v,s,\tau}^t-f_{i}(w_s^*))+\frac{\mu}{2}\|w_{v,s,\tau}^t-w_s^*\|^2
\right)
\right)\\
&= \|\overline{w}_s^{\tau E +t}-w_s^*\|^2
+\eta_{\tau,s}^2 \frac{1}{d_s} \sum_{v=1}^V (P_{v,s}^\tau)^2 \| \overline{g}_{v,s,\tau}^t \|^2
+\sum_{v=1}^V P_{v,s}^\tau \|\overline{w}_s^{\tau E +t}-w_{v,s,\tau}^t\|^2
-\eta_{\tau,s} \mu \sum_{v=1}^V P_{v,s}^\tau \|w_{v,s,\tau}^t-w_s^*\|^2
\\
&+ \underbrace{2L\eta_{\tau,s}^2\sum_{v=1}^V P_{v,s}^\tau(f_{v,s,\tau}^t-f_{i,s}^*)-2\eta_{\tau,s} \sum_{v=1}^V P_{v,s}^\tau (f_{v,s,\tau}^t-f_{i}(w_s^*))}_{C} \label{eq:a1}
\end{align}
Notice
\begin{align}
\|w_{v,s,\tau}^t-w_s^*\|^2
&=\|w_{v,s,\tau}^t-\overline{w}_s^{\tau E+t}+\overline{w}_s^{\tau E+t}-w_s^*\|^2\\
&=\|w_{v,s,\tau}^t-\overline{w}_s^{\tau E+t}\|^2+\|\overline{w}_s^{\tau E+t}-w_s^*\|^2+2\langle w_{v,s,\tau}^t-\overline{w}_s^{\tau E+t}, \overline{w}_s^{\tau E+t}-w_s^*\rangle \\
&\geq \|w_{v,s,\tau}^t-\overline{w}_s^{\tau E+t}\|^2+\|\overline{w}_s^{\tau E+t}-w_s^*\|^2
-2\| w_{v,s,\tau}^t-\overline{w}_s^{\tau E+t}\| \| \overline{w}_s^{\tau E+t}-w_s^*\|\\
&\geq \|w_{v,s,\tau}^t-\overline{w}_s^{\tau E+t}\|^2+\|\overline{w}_s^{\tau E+t}-w_s^*\|^2
-(2\| w_{v,s,\tau}^t-\overline{w}_s^{\tau E+t}\|^2+ \frac{1}{2}\| \overline{w}_s^{\tau E+t}-w_s^*\|^2)\\
&=\frac{1}{2}\| \overline{w}_s^{\tau E+t}-w_s^*\|^2 - \| w_{v,s,\tau}^t-\overline{w}_s^{\tau E+t}\|^2
\end{align}
Thus, 
\begin{align}
A_1&\leq \|\overline{w}_s^{\tau E +t}-w_s^*\|^2
+\eta_{\tau,s}^2 \frac{1}{d_s} \sum_{v=1}^V (P_{v,s}^\tau)^2 \| \overline{g}_{v,s,\tau}^t \|^2
+\sum_{v=1}^V P_{v,s}^\tau \|\overline{w}_s^{\tau E +t}-w_{v,s,\tau}^t\|^2\\
&-\eta_{\tau,s} \mu \sum_{v=1}^V P_{v,s}^\tau \|w_{v,s,\tau}^t-w_s^*\|^2+C \\
&= \|\overline{w}_s^{\tau E +t}-w_s^*\|^2
+\eta_{\tau,s}^2 \frac{1}{d_s} \sum_{v=1}^V (P_{v,s}^\tau)^2 \| \overline{g}_{v,s,\tau}^t \|^2
+\sum_{v=1}^V P_{v,s}^\tau \|\overline{w}_s^{\tau E +t}-w_{v,s,\tau}^t\|^2
+C \\
&-\eta_{\tau,s} \mu \sum_{v=1}^V P_{v,s}^\tau (\frac{1}{2}\| \overline{w}_s^{\tau E+t}-w_s^*\|^2 - \| w_{v,s,\tau}^t-\overline{w}_s^{\tau E+t}\|^2)\\
&= (1-\frac{1}{2}\eta_{\tau,s} \mu \sum_{v=1}^V P_{v,s}^\tau)\| \overline{w}_s^{\tau E+t}-w_s^*\|^2
+ (1+\eta_{\tau,s} \mu) \sum_{v=1}^V P_{v,s}^\tau \|\overline{w}_s^{\tau E +t}-w_{v,s,\tau}^t\|^2\\
&+ \eta_{\tau,s}^2 \frac{1}{d_s} \sum_{v=1}^V (P_{v,s}^\tau)^2 \| \overline{g}_{v,s,\tau}^t \|^2+C
\end{align}
\begin{align}
C&=2L\eta_{\tau,s}^2\sum_{v=1}^V P_{v,s}^\tau(f_{v,s,\tau}^t-f_{i,s}^*)-2\eta_{\tau,s} \sum_{v=1}^V P_{v,s}^\tau (f_{v,s,\tau}^t-f_{i}(w_s^*))\\
&=-2\eta_{\tau,s} (1-L\eta_{\tau,s}) \sum_{v=1}^V P_{v,s}^\tau(f_{v,s,\tau}^t-f_{i,s}^*) + 2\eta_{\tau,s} \sum_{v=1}^V P_{v,s}^\tau (f_{i,s}(w_s^*)-f_{i,s}^*)
\end{align}
Let $\gamma_\tau = 2\eta_{\tau,s} (1-L\eta_{\tau,s})$. Assume $\eta_{\tau,s} \leq \frac{1}{2L}$, hence $\eta_{\tau,s}\leq \gamma_\tau \leq 2\eta_{\tau,s}$. 
\begin{align}
C&=-2\eta_{\tau,s} (1-L\eta_{\tau,s}) \sum_{v=1}^V P_{v,s}^\tau(f_{v,s,\tau}^t-f_{i,s}^*) + 2\eta_{\tau,s} \sum_{v=1}^V P_{v,s}^\tau (f_{i,s}(w_s^*)-f_{i,s}^*) \\
&=-\gamma_\tau \sum_{v=1}^V P_{v,s}^\tau(f_{v,s,\tau}^t-f_{i,s}^*) + 2\eta_{\tau,s} \sum_{v=1}^V P_{v,s}^\tau (f_{i,s}(w_s^*)-f_{i,s}^*) \\
&=-\gamma_\tau \sum_{v=1}^V P_{v,s}^\tau(f_{v,s,\tau}^t-f_{i,s}^*+f_{i,s}(w_s^*)-f_{i,s}(w_s^*)) + 2\eta_{\tau,s} \sum_{v=1}^V P_{v,s}^\tau (f_{i,s}(w_s^*)-f_{i,s}^*)\\
&=-\gamma_\tau \sum_{v=1}^V P_{v,s}^\tau(f_{v,s,\tau}^t-f_{i,s}(w_s^*)) + (2\eta_{\tau,s}-\gamma_\tau) \sum_{v=1}^V P_{v,s}^\tau (f_{i,s}(w_s^*)-f_{i,s}^*) \\
&= \underbrace{-\gamma_\tau \sum_{v=1}^V P_{v,s}^\tau(f_{v,s,\tau}^t-f_{i,s}(w_s^*))}_{D} + 2L\eta_{\tau,s}^2 \sum_{v=1}^V P_{v,s}^\tau \Gamma_{i,s}
\end{align}
Then we bound $D=-\gamma_\tau \sum_{v=1}^V P_{v,s}^\tau(f_{v,s,\tau}^t-f_{i,s}(w_s^*))$
\begin{align}
& \sum_{v=1}^V P_{v,s}^\tau(f_{v,s,\tau}^t-f_{i,s}(w_s^*))
=\sum_{v=1}^V P_{v,s}^\tau(f_{v,s,\tau}^t-f_{i,s}(w_s^*)+f_{i,s}(\overline{w}_s^{\tau E+t})-f_{i,s}(\overline{w}_s^{\tau E+t})) \\
&=\sum_{v=1}^V P_{v,s}^\tau (f_{v,s,\tau}^t-f_{i,s}(\overline{w}_s^{\tau E+t})) + \sum_{v=1}^V P_{v,s}^\tau (f_{i,s}(\overline{w}_s^{\tau E+t})-f_{i,s}(w_s^*)) \\
&\geq \sum_{v=1}^V P_{v,s}^\tau 
\langle 
\nabla f_{i,s}(\overline{w}_s^{\tau E+t}), 
w_{v,s,\tau}^t - \overline{w}_s^{\tau E+t}
\rangle
+ \sum_{v=1}^V P_{v,s}^\tau (f_{i,s}(\overline{w}_s^{\tau E+t})-f_{i,s}(w_s^*))\\
&\geq -\sum_{v=1}^V P_{v,s}^\tau 
\|
\nabla f_{i,s}(\overline{w}_s^{\tau E+t})\| \| 
w_{v,s,\tau}^t - \overline{w}_s^{\tau E+t}
\|
+ \sum_{v=1}^V P_{v,s}^\tau (f_{i,s}(\overline{w}_s^{\tau E+t})-f_{i,s}(w_s^*))\\
&\geq -\frac{1}{2}\sum_{v=1}^V P_{v,s}^\tau 
(\eta_{\tau,s} \underbrace{\|\nabla f_{i,s}(\overline{w}_s^{\tau E+t})\|^2}_{\leq 2L(f_{i,s}(\overline{w}_{s}^{\tau E+t})-f_{i,s}^*)} 
+\frac{1}{\eta_{\tau,s}}\| w_{v,s,\tau}^t - \overline{w}_s^{\tau E+t}\|^2)
+ \sum_{v=1}^V P_{v,s}^\tau (f_{i,s}(\overline{w}_s^{\tau E+t})-f_{i,s}(w_s^*)) \\ 
&\geq - \sum_{v=1}^V P_{v,s}^\tau 
(\eta_{\tau,s} L (f_{i,s}(\overline{w}_{s}^{\tau E+t})-f_{i,s}^*) 
+\frac{1}{2\eta_{\tau,s}}\| w_{v,s,\tau}^t - \overline{w}_s^{\tau E+t}\|^2)
+ \sum_{v=1}^V P_{v,s}^\tau (f_{i,s}(\overline{w}_s^{\tau E+t})-f_{i,s}(w_s^*))
\end{align}
Thus, 
\begin{align}
D&\leq \gamma_\tau \sum_{v=1}^V P_{v,s}^\tau 
(\eta_{\tau,s} L \underbrace{(f_{i,s}(\overline{w}_{s}^{\tau E+t})-f_{i,s}^*)}_{f_{i,s}(\overline{w}_{s}^{\tau E+t})-f_{i,s}^*-f_{i,s}(w_s^*)+f_{i,s}(w_s^*)} 
+\frac{1}{2\eta_{\tau,s}}\| w_{v,s,\tau}^t - \overline{w}_s^{\tau E+t}\|^2)\\
&-\gamma_\tau \sum_{v=1}^V P_{v,s}^\tau (f_{i,s}(\overline{w}_s^{\tau E+t})-f_{i,s}(w_s^*))\\
&= \gamma_\tau (\eta_{\tau,s} L-1)\sum_{v=1}^V P_{v,s}^\tau 
(f_{i,s}(\overline{w}_s^{\tau E+t})-f_{i,s}(w_s^*)) 
+\sum_{v=1}^V P_{v,s}^\tau \underbrace{\frac{\gamma_\tau}{2\eta_{\tau,s}}}_{\leq 1}\| w_{v,s,\tau}^t - \overline{w}_s^{\tau E+t}\|^2
+\underbrace{\gamma_\tau}_{\leq 2\eta_{\tau,s}} \eta_{\tau,s} L \sum_{v=1}^V P_{v,s}^\tau \Gamma_{i,s} \\
&\leq \gamma_\tau (\eta_{\tau,s} L-1)\sum_{v=1}^V P_{v,s}^\tau 
(f_{i,s}(\overline{w}_s^{\tau E+t})-f_{i,s}(w_s^*)) 
+\sum_{v=1}^V P_{v,s}^\tau \| w_{v,s,\tau}^t - \overline{w}_s^{\tau E+t}\|^2
+2\eta_{\tau,s}^2 L \sum_{v=1}^V P_{v,s}^\tau \Gamma_{i,s}
\end{align}
Therefore, 
\begin{align}
C&=D + 2L\eta_{\tau,s}^2 \sum_{v=1}^V P_{v,s}^\tau \Gamma_{i,s} \\
&\leq \gamma_\tau (\eta_{\tau,s} L-1)\sum_{v=1}^V P_{v,s}^\tau 
(f_{i,s}(\overline{w}_s^{\tau E+t})-f_{i,s}(w_s^*)) 
+\sum_{v=1}^V P_{v,s}^\tau \| w_{v,s,\tau}^t - \overline{w}_s^{\tau E+t}\|^2
+4\eta_{\tau,s}^2 L \sum_{v=1}^V P_{v,s}^\tau \Gamma_{i,s}\label{eq:c}
\end{align}
Thus, we can use the upper bound of $C$ to bound $A_1$
\begin{align}
A_1&\leq 
(1-\frac{1}{2}\eta_{\tau,s} \mu \sum_{v=1}^V P_{v,s}^\tau)\| \overline{w}_s^{\tau E+t}-w_s^*\|^2
+ (1+\eta_{\tau,s} \mu) \sum_{v=1}^V P_{v,s}^\tau \|\overline{w}_s^{\tau E +t}-w_{v,s,\tau}^t\|^2
+ \eta_{\tau,s}^2 \frac{1}{d_s} \sum_{v=1}^V (P_{v,s}^\tau)^2 \| \overline{g}_{v,s,\tau}^t \|^2+C \\
&\leq 
(1-\frac{1}{2}\eta_{\tau,s} \mu \sum_{v=1}^V P_{v,s}^\tau)\| \overline{w}_s^{\tau E+t}-w_s^*\|^2
+ (2+\eta_{\tau,s} \mu) \sum_{v=1}^V P_{v,s}^\tau \|\overline{w}_s^{\tau E +t}-w_{v,s,\tau}^t\|^2
+ \eta_{\tau,s}^2 \frac{1}{d_s} \sum_{v=1}^V (P_{v,s}^\tau)^2 \| \overline{g}_{v,s,\tau}^t \|^2 \\
&+\gamma_\tau (\eta_{\tau,s} L-1)\sum_{v=1}^V P_{v,s}^\tau 
(f_{i,s}(\overline{w}_s^{\tau E+t})-f_{i,s}(w_s^*)) 
+4\eta_{\tau,s}^2 L \sum_{v=1}^V P_{v,s}^\tau \Gamma_{i,s} \label{eq:a1}
\end{align}
Plug Eq. \ref{eq:a1} to Eq. \ref{eq:goal}, 
\begin{align}
\|
& \overline{w}_s^{\tau E +t+1}-w_{s}^*
\|^2
=A_1+A_2
=A_1+\eta_{\tau,s}^2\|
\sum_{v=1}^V P_{v,s}^\tau (\overline{g}_{v,s,\tau}^t-g_{v,s,\tau}^t)
\|^2\\
&\leq 
(1-\frac{1}{2}\eta_{\tau,s} \mu \sum_{v=1}^V P_{v,s}^\tau)\| \overline{w}_s^{\tau E+t}-w_s^*\|^2
+ \underbrace{(2+\eta_{\tau,s} \mu)}_{\leq 2+\frac{\mu}{2L}} \sum_{v=1}^V P_{v,s}^\tau \|\overline{w}_s^{\tau E +t}-w_{v,s,\tau}^t\|^2
+ \eta_{\tau,s}^2 \frac{1}{d_s} \sum_{v=1}^V (P_{v,s}^\tau)^2 \| \overline{g}_{v,s,\tau}^t \|^2 \\
&+\underbrace{\gamma_\tau (\eta_{\tau,s} L-1)}_{\leq 2\eta_{\tau,s}}\sum_{v=1}^V P_{v,s}^\tau 
(f_{i,s}(w_s^*)-f_{i,s}(\overline{w}_s^{\tau E+t})) 
+4\eta_{\tau,s}^2 L \sum_{v=1}^V P_{v,s}^\tau \Gamma_{i,s} 
+\eta_{\tau,s}^2\|
\sum_{v=1}^V P_{v,s}^\tau (\overline{g}_{v,s,\tau}^t-g_{v,s,\tau}^t)\|^2 \label{eq:originalbound}
\end{align}
Define
\begin{align}
Q_{\tau E+t}&=(2+\frac{\mu}{2L}) \underbrace{\sum_{v=1}^V P_{v,s}^\tau \|\overline{w}_s^{\tau E +t}-w_{v,s,\tau}^t\|^2}_{\text{apply lemma \ref{lemma3}}}
+\eta_{\tau,s}^2\underbrace{\|
\sum_{v=1}^V P_{v,s}^\tau (\overline{g}_{v,s,\tau}^t-g_{v,s,\tau}^t)\|^2}_{\text{apply lemma \ref{lemma2}}}\\
&+4\eta_{\tau,s}^2 L \sum_{v=1}^V P_{v,s}^\tau \Gamma_{i,s}+\eta_{\tau,s}^2 \frac{1}{d_s} \sum_{v=1}^V (P_{v,s}^\tau)^2 \| \overline{g}_{v,s,\tau}^t \|^2
\end{align}
Apply the lemma \ref{lemma2} and \ref{lemma3} we have
\begin{align}
\mathbb{E}_\xi [Q_{\tau E+t}]
&\leq (2+\frac{\mu}{2L})\eta_{\tau,s}^2 E^2 G^2 \left(
\theta \left(\sum_{v=1}^V P_{v,s}^\tau -1\right)^2 
+\sum_{v=1}^V P_{v,s}^\tau
\right)\\
&+\eta_{\tau,s}^2 \sum_{v=1}^V (P_{v,s}^\tau)^2 \sigma_{i,s}^2+4\eta_{\tau,s}^2 L \sum_{v=1}^V P_{v,s}^\tau \Gamma_{i,s}+\eta_{\tau,s}^2 \frac{1}{d_s} \sum_{v=1}^V (P_{v,s}^\tau)^2 \| \overline{g}_{v,s,\tau}^t \|^2
\end{align}
\textcolor{red}{Notice $\overline{g}_{v,s,\tau}^t=\overline{g}_{j,s,\tau}^t$ for processor $v$ and $j$ within the same client $i$. 
}
And define
\begin{align}
B_{\tau E+t}&=\frac{1}{\eta_{\tau,s}^2}\mathbb{E}_\xi [Q_{\tau E+t}]\leq (2+\frac{\mu}{2L}) E^2 G^2 \left(
\theta \left(\sum_{v=1}^V P_{v,s}^\tau -1\right)^2 
+\sum_{v=1}^V P_{v,s}^\tau
\right)\\
&+ \sum_{v=1}^V (P_{v,s}^\tau)^2 \sigma_{i,s}^2+4 L \sum_{v=1}^V P_{v,s}^\tau \Gamma_{i,s}+ \frac{1}{d_s} \sum_{v=1}^V (P_{v,s}^\tau)^2 \| \overline{g}_{v,s,\tau}^t \|^2
\end{align}
\textcolor{blue}{Now we find an upper bound for $\frac{\mu}{2L}$ to make our loss-based sampling optimization objective appear in the whole upper bound of the convergence. }

With $L$ smooth assumption, 
\begin{align}
\|\overline{g}_{v,s,\tau}^0\|^2 \leq 2L(f_{v,s,\tau}^0-f_{i,s}^* )
\end{align}
Thus, for any $v,s,\tau$, 
\begin{align}
\frac{1}{2L}\leq \frac{f_{v,s,\tau}^0-f_{i,s}^*}{\|\overline{g}_{v,s,\tau}^0\|^2}
\leq \frac{f_{v,s,\tau}^0}{\|\overline{g}_{v,s,\tau}^0\|^2}
\end{align}
With $\mu$ convexity, 
\begin{align}
f_{i,s}(w_{v,s,\tau}^0) &\geq f_{i,s}(w_{i,s}^*) + \nabla f_{i,s}(w_{i,s}^*)^\top (w_{v,s,\tau}^0-w_{i,s}^*)+\frac{\mu}{2} \| w_{v,s,\tau}^0-w_{i,s}^*\|^2\\
&=f_{i,s}(w_{i,s}^*) +\frac{\mu}{2} \| w_{v,s,\tau}^0-w_{i,s}^*\|^2
\end{align}
Thus, for any $v,s,\tau$, 
\begin{align}
\mu \leq \frac{2(f_{v,s,\tau}^0-f_{i,s}^*)}{\| w_{v,s,\tau}^0-w_{i,s}^*\|^2}
\leq \frac{2f_{v,s,\tau}^0}{\| w_{v,s,\tau}^0-w_{i,s}^*\|^2}
\end{align}
Therefore, 
\begin{align}
\frac{\mu}{2L}&\leq
\frac{2(f_{v,s,\tau}^0)^2}{\| w_{v,s,\tau}^0-w_{i,s}^*\|^2\|\overline{g}_{v,s,\tau}^0\|^2}
\end{align}
\textcolor{blue}{$g_{v,s,\tau}^0$ is the gradient at the first local epoch within a global round, ie when the aggregated global model becomes the client's local model, and the client trains with it.}
In non-iid settings, \textcolor{blue}{it is likely that the returned global model at the first local epoch round  cannot reach the client's own local optimal because of noniid-ness.} 
This means we can find \textcolor{blue}{$e_f$ such that } $\|\overline{g}_{v,s,\tau}^0\|>e_f>0$, where  $e_f$ reflects the non-iid level. Similarly, we can find $\|w_{v,s,\tau}^0-w_{i,s}^*\|>e_w>0$. 
Thus, for any $i,s,\tau$
\begin{align}
\frac{\mu}{2L}&\leq
\frac{2(f_{v,s,\tau}^0)^2}{e_w^2e_f^2} \end{align}
Since this inequality holds for any $i,s,\tau$, we can choose the minimum of $f_{v,s,\tau}^0$
\begin{align}
\frac{\mu}{2L}&\leq
\frac{2}{e_w^2e_f^2}\min[(f_{v,s,\tau}^0)^2]\\
&= \frac{2}{e_w^2e_f^2}\frac{V}{ (\frac{V}{\min[(f_{v,s,\tau}^0)^2]})}\\
&= \frac{2}{e_w^2e_f^2}\frac{V}{\sum_{j=1}^V \frac{1}{\min[(f_{v,s,\tau}^0)^2]}}\\
&\leq \frac{2}{e_w^2e_f^2}\frac{V}{\sum_{v=1}^V \frac{1}{(f_{v,s,\tau}^0)^2}}
\end{align}
Notice by Titu's lemma (Cauchy-Schwarz inequality), we have
\begin{align}
\frac{V}{\sum_{v=1}^V \frac{1}{(f_{v,s,\tau}^0)^2}}(\sum_{v=1}^V P_{v,s}^\tau -1)^2 &=
\frac{V}{\sum_{v=1}^V \frac{1}{(f_{v,s,\tau}^0)^2}}(\sum_{v=1}^V P_{v,s}^\tau -\sum_{v=1}^V \frac{d_{i,s}}{B_i})^2 \\
&= 
V\frac{(\sum_{v=1}^V P_{v,s}^\tau -\sum_{v=1}^V \frac{d_{i,s}}{B_i})^2 }{\sum_{v=1}^V \frac{1}{(f_{v,s,\tau}^0)^2}}\\
&\leq V \sum_{v=1}^V (P_{v,s}^\tau-\frac{d_{i,s}}{B_i})^2(f_{v,s,\tau}^0)^2 \\
&= V \sum_{v=1}^V (P_{v,s}^\tau f_{v,s,\tau}^0-\frac{d_{i,s}}{B_i}f_{v,s,\tau}^0)^2
\end{align}
Therefore, for any $i,s,\tau$
\begin{align}
B_{\tau E+t}&\leq (2+\frac{\mu}{2L}) E^2G^2 \sum_{v=1}^V P_{v,s}^\tau 
+E^2G^2(2+\frac{2\min[(f_{v,s,\tau}^0)^2]}{e_w^2e_f^2})\theta (\sum_{v=1}^V P_{v,s}^\tau -\sum_{v=1}^V \frac{d_{i,s}}{\textcolor{red}{B_i}})^2  \\
&+ \sum_{v=1}^V (P_{v,s}^\tau)^2 \sigma_{i,s}^2+4 L \sum_{v=1}^V P_{v,s}^\tau \Gamma_{i,s}+ \frac{1}{d_s} \sum_{v=1}^V (P_{v,s}^\tau)^2 \| \overline{g}_{v,s,\tau}^t \|^2 \\
&\leq (2+\frac{\mu}{2L}) E^2G^2 \sum_{v=1}^V P_{v,s}^\tau 
+2E^2G^2\theta (\sum_{v=1}^V P_{v,s}^\tau -\sum_{v=1}^V d_{i,s})^2
+\frac{2E^2G^2\theta}{e_w^2e_f^2} V  \sum_{v=1}^V (P_{v,s}^\tau f_{v,s,\tau}^0-\frac{d_{i,s}}{B_i}f_{v,s,\tau}^0)^2
\\
&+ \sum_{v=1}^V (P_{v,s}^\tau)^2 \sigma_{i,s}^2+4 L \sum_{v=1}^V P_{v,s}^\tau \Gamma_{i,s}+ \frac{1}{d_s} \sum_{v=1}^V (P_{v,s}^\tau)^2 \| \overline{g}_{v,s,\tau}^t \|^2 
\label{eq:b_bound}
\end{align}

By Eq. \ref{eq:originalbound}, 
\begin{align}
\|\overline{w}_s^{\tau E +t+1}-w_s^*\|^2 \leq 
(1-\frac{1}{2}\eta_{\tau,s} \mu \sum_{v=1}^V P_{v,s}^\tau)\| \overline{w}_s^{\tau E+t}-w_s^*\|^2+
2\eta_{\tau,s}\sum_{v=1}^V P_{v,s}^\tau (f_{i,s}(w_s^*)-f_{i,s}(\overline{w}_s^{\tau E+t}))
+\eta_{\tau,s}^2 B_{\tau E +t}
\end{align}
We write $\Delta_{\tau E+t}=\|\overline{w}_s^{\tau E +t+1}-w_s^*\|^2$ for convenience. 

\textbf{Bounding} $\|\overline{w}_s^{\tau E}-w_s^*\|$

Summing from $\tau E$ to $(\tau+1)E$ we have
\begin{align}
\sum_{t=1}^E \Delta_{\tau E+t} \leq \sum_{t=0}^{E-1} (1-\frac{1}{2}\eta_{\tau,s} \mu \sum_{v=1}^V P_{v,s}^\tau) \Delta_{\tau E+t}
+\eta_{\tau,s}^2 B_\tau +2\eta_{\tau,s}\sum_{v=1}^V P_{v,s}^\tau (f_{i,s}(w_s^*)-f_{i,s}(\overline{w}_s^{\tau E+l}))
\end{align}
where $B_\tau=\sum_{t=0}^{E-1} B_{\tau E+t}$, and $\overline{w}_s^{\tau E+l} = \mathrm{argmin}_{\overline{w}_s^{\tau E+t}} \sum_{v=1}^V P_{v,s}^\tau f_{i,s}(\overline{w}_s^{\tau E+t})$. 

Reorganize it we get
\begin{align}
\Delta_{(\tau+1)E}\leq \Delta_{\tau E} - \frac{1}{2}\eta_{\tau,s} \mu \sum_{t=0}^{E-1} \sum_{v=1}^V P_{v,s}^\tau \Delta_{\tau E +t}+\eta_{\tau,s}^2 B_\tau +2\eta_{\tau,s} E\sum_{v=1}^V P_{v,s}^\tau (f_{i,s}(w_s^*)-f_{i,s}(\overline{w}_s^{\tau E+l}))\label{eq:reorganize}
\end{align}
Now we seek to find a lower bound for $\Delta_{\tau E +t}$. 
\begin{align}
\sqrt{\Delta_{\tau E +t+1}} &= \|\overline{w}_{s}^{\tau E +t+1}-w_s^*\|\\
&=\|\overline{w}_{s}^{\tau E +t+1}-\overline{w}_s^{\tau E+t}+\overline{w}_s^{\tau E+t}-w_s^*\| \\
&\leq \|\overline{w}_{s}^{\tau E +t+1}-\overline{w}_s^{\tau E+t}\|+\sqrt{\Delta_{\tau E+t}} \\
&= \|\eta_{\tau,s} \sum_{v=1}^V P_{v,s}^\tau g_{v,s,\tau}^t\|+\sqrt{\Delta_{\tau E+t}}
\end{align}
Define $h_{\tau E+t}=\|\sum_{v=1}^V P_{v,s}^\tau g_{v,s,\tau}^t\|$. We know $\sqrt{\Delta_{\tau E +t+1}}\leq \eta_{\tau,s} h_{\tau E +t}+\sqrt{\Delta_{\tau E +t}}$. 
Thus, 
\begin{align}
\sqrt{\Delta_{(\tau+1) E}} &\leq  \sqrt{\Delta_{(\tau+1) E-1}}+\eta_{\tau,s} h_{(\tau+1) E-1} \\
&\leq \cdots \leq \sqrt{\Delta_{\tau E+t'}}+\sum_{t=t'}^{E-1} \eta_{\tau,s} h_{\tau E +t}
\end{align}
\begin{align}
\Delta_{(\tau +1)E}&\leq 
\Delta_{\tau E+t'} +2\sqrt{\Delta_{\tau E +t'}}(\sum_{t=t'}^{E-1} \eta_{\tau,s} h_{\tau E +t} )+(\sum_{t=t'}^{E-1} \eta_{\tau,s} h_{\tau E +t} )^2\\
&\leq 2\Delta_{\tau E+t'}+2(\sum_{t=t'}^{E-1} \eta_{\tau,s} h_{\tau E +t} )^2
\end{align}
Therefore, 
\begin{align}
\Delta_{\tau E+t'}&\geq \frac{1}{2} \Delta_{(\tau+1)E}-(\sum_{t=t'}^{E-1} \eta_{\tau,s} h_{\tau E +t} )^2\\
&\geq \frac{1}{2} \Delta_{(\tau+1)E}-(\sum_{t=0}^{E-1} \eta_{\tau,s} h_{\tau E +t} )^2 \label{eq:delta}
\end{align}
Plug Eq. \ref{eq:delta} to Eq. \ref{eq:reorganize}, we can get
\begin{align}
(1+\frac{1}{4}\mu\eta_{\tau,s} E\sum_{v=1}^V P_{v,s}^\tau )\Delta_{(\tau+1) E }&\leq \Delta_{\tau E}+\frac{1}{2}\mu \eta_{\tau,s}^3 E \sum_{v=1}^V P_{v,s}^\tau (\sum_{t=0}^{E-1} h_{\tau E +t} )^2+\eta_{\tau,s}^2 B_\tau \\
&+2\eta_{\tau,s} E\sum_{v=1}^V P_{v,s}^\tau (f_{i,s}(w_s^*)-f_{i,s}(\overline{w}_s^{\tau E+l}))\label{eq:bound1}
\end{align}
Define $H_\tau = (\sum_{t=0}^{E-1} h_{\tau E +t} )^2$. Now we find a bound for $H_\tau$. 
\begin{align}
\mathbb{E}_\xi [h_{\tau E+t}^2]&=\mathbb{E}_\xi \|\sum_{v=1}^V P_{v,s}^\tau g_{v,s,\tau}^t\|^2\\
&\leq \sum_{v=1}^V \frac{B_i (P_{v,s}^\tau)^2}{d_{i,s}} \mathbb{E}_\xi \|g_{v,s,\tau}^t\|^2 \\
&\leq \theta G^2\sum_{v=1}^V P_{v,s}^\tau 
\end{align}
Thus, 
\begin{align}
\mathbb{E}_\xi[H_\tau]
&=\mathbb{E}_\xi[ (\sum_{t=0}^{E-1} h_{\tau E+t})^2 ]\\
&\leq E\sum_{t=0}^{E-1}\mathbb{E}_\xi [h_{\tau E+t}^2]\\
&\leq \theta G^2 E^2 \sum_{v=1}^V P_{v,s}^\tau 
\end{align}
Notice by our definition, $B_\tau=\frac{1}{\eta_{\tau,s}^2}\mathbb{E}_\xi [Q_{\tau E+t}]$, we do not need to compute $\mathbb{E}_\xi [B_\tau]$ again ($\mathbb{E}_\xi [B_\tau]=B_\tau$). Write $\overline{\Delta}_{\tau E+t}=\mathbb{E}_\xi [\Delta_{\tau E+t}]$. By Eq. \ref{eq:bound1}, we can know
\begin{align}
(1+\frac{1}{4}\mu\eta_{\tau,s} E\sum_{v=1}^V P_{v,s}^\tau )\overline{\Delta}_{(\tau+1) E }&\leq \overline{\Delta}_{\tau E}+\frac{1}{2}\mu \eta_{\tau,s}^3 E \sum_{v=1}^V P_{v,s}^\tau \mathbb{E}_\xi[H_\tau]+\eta_{\tau,s}^2 B_\tau \\
&+2\eta_{\tau,s} E  \sum_{v=1}^V P_{v,s}^\tau (f_{i,s}(w_s^*)-f_{i,s}(\overline{w}_s^{\tau E+l}))
\end{align}
Assume $\eta_{\tau,s} \leq \frac{4}{\mu E \sum_{v=1}^V P_{v,s}^\tau}$, then
\begin{align}
(1+\frac{1}{4}\mu\eta_{\tau,s} E\sum_{v=1}^V P_{v,s}^\tau )\overline{\Delta}_{(\tau+1) E }&\leq \overline{\Delta}_{\tau E}+\eta_{\tau,s}^2 ( 2\mathbb{E}_\xi[H_\tau]+ B_\tau) \\
&+2\eta_{\tau,s} E  \sum_{v=1}^V P_{v,s}^\tau (f_{i,s}(w_s^*)-f_{i,s}(\overline{w}_s^{\tau E+l}))
\end{align}
\begin{align}
\overline{\Delta}_{(\tau+1) E }&\leq \frac{1}{1+\frac{1}{4}\mu\eta_{\tau,s} E\sum_{v=1}^V P_{v,s}^\tau}\overline{\Delta}_{\tau E}+\eta_{\tau,s}^2 ( 2\mathbb{E}_\xi[H_\tau]+ B_\tau) \\
&+2\eta_{\tau,s} E  \sum_{v=1}^V P_{v,s}^\tau (f_{i,s}(w_s^*)-f_{i,s}(\overline{w}_s^{\tau E+l})) \\
&=(1-\underbrace{\frac{\frac{1}{4}\mu\eta_{\tau,s} E\sum_{v=1}^V P_{v,s}^\tau}{1+\frac{1}{4}\mu\eta_{\tau,s} E\sum_{v=1}^V P_{v,s}^\tau}}_{\eta_{\tau,s}\leq \frac{4}{\mu E \sum_{v=1}^V P_{v,s}^\tau}})\overline{\Delta}_{\tau E}+\eta_{\tau,s}^2 ( 2\mathbb{E}_\xi[H_\tau]+ B_\tau) \\
&+2\eta_{\tau,s} E  \sum_{v=1}^V P_{v,s}^\tau (f_{i,s}(w_s^*)-f_{i,s}(\overline{w}_s^{\tau E+l})) \\
&\leq (1-\frac{1}{8}\mu\eta_{\tau,s} E\sum_{v=1}^V P_{v,s}^\tau)\overline{\Delta}_{\tau E}+\eta_{\tau,s}^2 ( 2\mathbb{E}_\xi[H_\tau]+ B_\tau) \\
&+2\eta_{\tau,s} E  \sum_{v=1}^V P_{v,s}^\tau (f_{i,s}(w_s^*)-f_{i,s}(\overline{w}_s^{\tau E+l}))\label{eq:finalbound}
\end{align}
Consider the expectation over $\mathcal{A}_{\tau,s}$. 
We know $\mathbb{E}_{\mathcal{A}_{\tau,s}}[\mathbbm{1}_{v\in \mathcal{A}_{\tau,s}}]=p_{s|v}^\tau(\pi, w_{s}^{\tau})$. 
\begin{align}
\mathbb{E}_{\mathcal{A}_{\tau,s}}[\sum_{v=1}^V P_{v,s}^\tau (f_{i,s}(w_s^*)-f_{i,s}(\overline{w}_s^{\tau E+l}))]&=
\mathbb{E}_{\mathcal{A}_{\tau,s}}[\sum_{v=1}^V \frac{d_{i,s}}{B_i p_{s|v}^\tau}\mathbbm{1}_{v\in \mathcal{A}_{\tau,s}} (f_{i,s}(w_s^*)-f_{i,s}(\overline{w}_s^{\tau E+l}))]\\
&=\sum_{v=1}^V \frac{d_{i,s}}{B_i p_{s|v}}\mathbb{E}_{\mathcal{A}_{\tau,s}}[\mathbbm{1}_{i\in \mathcal{A}_{\tau,s}} ](f_{i,s}(w_s^*)-f_{i,s}(\overline{w}_s^{\tau E+l}))\\
&=\sum_{v=1}^V \frac{d_{i,s}}{B_i}(f_{i,s}(w_s^*)-f_{i,s}(\overline{w}_s^{\tau E+l}))\\
&=\sum_{i=1}^N \sum_{b=1}^{B_i} \frac{d_{i,s}}{B_i}(f_{i,s}(w_s^*)-f_{i,s}(\overline{w}_s^{\tau E+l}))\\
&=F_s^*-F_s(\overline{w}_s^{\tau E+l}) \leq 0
\end{align}
And 
\begin{align}
\mathbb{E}_{\mathcal{A}_{\tau,s}}[\sum_{v=1}^V P_{v,s}^\tau]=\mathbb{E}_{\mathcal{A}_{\tau,s}}[\sum_{v=1}^V \frac{d_{i,s}}{B_i p_{s|v}}\mathbbm{1}_{i\in \mathcal{A}_{s,\tau}}]=1
\end{align}
Note that $\overline{w}_{s}^{\tau E}$ is the aggregation of local updates from processors in $\mathcal{A}_{\tau-1,s}$, instead of $\mathcal{A}_{\tau,s}$. Thus, $\mathbb{E}_{\mathcal{A}_{\tau,s}}(\overline{\Delta}_{\tau E})=\overline{\Delta}_{\tau E}$. Similarly, $\mathbb{E}_{\mathcal{A}_{\tau,s}}(\overline{\Delta}_{(\tau+1) E})=\overline{\Delta}_{(\tau+1) E}$. After the expectation over $\mathcal{A}_{\tau,s}$, Eq. \ref{eq:finalbound} becomes
\begin{align}
\overline{\Delta}_{(\tau+1) E }&\leq (1-\frac{1}{8}\mu\eta_{\tau,s} E)
\overline{\Delta}_{\tau E}+\eta_{\tau,s}^2 \mathbb{E}[ 2\mathbb{E}_\xi[H_\tau]+ B_\tau] 
\end{align}
Define 
\begin{align}
K_g'&= \sum_{v=1}^V (P_{v,s}^\tau)^2 \sigma_{i,s}^2+4 L \sum_{v=1}^V P_{v,s}^\tau \Gamma_{i,s}+ \frac{1}{d_s} \sum_{v=1}^V (P_{v,s}^\tau)^2 \| \overline{g}_{v,s,\tau}^0 \|^2\\
K_l'&= 
\frac{2E^2G^2\theta}{e_w^2e_f^2} V  \sum_{v=1}^V (P_{v,s}^\tau f_{v,s,\tau}^0-\frac{d_{i,s}}{B_i}f_{v,s,\tau}^0)^2
\\
K_p'&= (2+\frac{\mu}{2L}) E^2G^2 \sum_{v=1}^V P_{v,s}^\tau 
+2E^2G^2\theta (\sum_{v=1}^V P_{v,s}^\tau -1)^2
\end{align}
With Eq. \ref{eq:b_bound}, we know $B_{\tau E +t}\leq K_g'+K_l'+K_p'$.
\begin{align}
2\mathbb{E}_\xi [H_\tau]+B_\tau
&=2\mathbb{E}_\xi [H_\tau]+\sum_{t=0}^{E-1} B_{\tau E+t}\\
&\leq 
2\theta G^2 E^2 \sum_{v=1}^V P_{v,s}^\tau + E(K_g'+K_l'+K_p')
\end{align}
Define
\begin{align}
K_g&= E\sum_{v=1}^V (P_{v,s}^\tau)^2 \sigma_{i,s}^2+4 L E\sum_{v=1}^V P_{v,s}^\tau \Gamma_{i,s}+ E\frac{1}{d_s} \sum_{v=1}^V (P_{v,s}^\tau)^2 \| \overline{g}_{v,s,\tau}^0 \|^2\\
K_l&= \frac{2E^3G^2\theta}{e_w^2e_f^2} V  \sum_{v=1}^V (P_{v,s}^\tau f_{v,s,\tau}^0-\frac{d_{i,s}}{B_i}f_{v,s,\tau}^0)^2\\
K_p&= (2\theta+E(2+\frac{\mu}{2L})) E^2G^2 \sum_{v=1}^V P_{v,s}^\tau 
+2E^3G^2\theta (\sum_{v=1}^V P_{v,s}^\tau -1)^2
\end{align}
Then we know
\begin{align}
\mathbb{E}_{\mathcal{A}_{\tau,s}}[2\mathbb{E}_\xi [H_\tau]+B_\tau] \leq \mathbb{E}_{\mathcal{A}_{\tau,s}}[K_g+K_l+K_p]
\end{align}
Notice
\begin{align}
\mathbb{E}_{\mathcal{A}_{\tau,s}}[\sum_{v=1}^V (P_{v,s}^\tau)^2]=\mathbb{E}[\sum_{v=1}^V (\frac{(d_{i,s})^2}{(B_i p_{s|v}^\tau)^2}) \mathbbm{1}_{v\in\mathcal{A}_{\tau,s}}\times \mathbbm{1}_{v\in\mathcal{A}_{\tau,s}}]
=\sum_{v=1}^V \frac{(d_{i,s})^2}{(B_i)^2p_{s|v}^\tau}
\end{align}
\begin{align}
\mathbb{E}_{\mathcal{A}_{\tau,s}}[K_g]&=E\sum_{v=1}^V \frac{(\frac{d_{i,s}}{B_i}\sigma_{i,s})^2}{p_{s|v}^\tau}+4LE\sum_{v=1}^V \frac{d_{i,s}}{B_i} \Gamma_{i,s}+\frac{E}{d_s} \sum_{v=1}^V \frac{\| \frac{d_{i,s}}{B_i} \overline{g}_{v,s,\tau}^0 \|^2}{p_{s|v}}\\
\mathbb{E}_{\mathcal{A}_{\tau,s}}[K_l]&=\frac{2E^3G^2\theta}{e_w^2e_f^2} \mathbb{E}[V  \sum_{v=1}^V (P_{v,s}^\tau f_{v,s,\tau}^0-\frac{d_{i,s}}{B_i}f_{v,s,\tau}^0)^2]\\
\mathbb{E}_{\mathcal{A}_{\tau,s}}[K_p]&=(2\theta+E(2+\frac{\mu}{2L})) E^2G^2 
+2E^3G^2\theta \mathbb{E}[(\sum_{v=1}^V P_{v,s}^\tau -1)^2]
\end{align}

\begin{align}
\overline{\Delta}_{(\tau+1) E }\leq
(1-\frac{1}{8}\mu\eta_{\tau,s} E)
\overline{\Delta}_{\tau E}+\eta_{\tau,s}^2 \mathbb{E}_{\mathcal{A}_{\tau,s}}[K_g+K_l+K_p] 
\end{align}
Write $K_\tau=\mathbb{E}_{\mathcal{A}_{\tau,s}}[K_g+K_l+K_p]$

\textbf{Proof of the convergence theorem}

Define $\gamma=\max \{\frac{32L}{\mu},4E\sum_{v=1}^V P_{v,s}^\tau\}$, $V_\tau=\max\{\gamma^2 \mathbb{E}\|w_s^0-w^*\|^2, (\frac{16}{\mu})^2\sum_{\tau'=0}^{\tau-1}K_{\tau'}\}$
Now we show by induction that we can obtain
\begin{align}
\overline{\Delta}_{\tau E} \leq \frac{V_\tau}{(\tau E +\gamma)^2}
\end{align}
Let $\eta_{\tau,s}=\frac{16}{\mu } \frac{1}{(\tau+1)E+\gamma}$. Initially, $\frac{V_0}{\gamma^2}\geq \overline{\Delta}_0$. Suppose $\overline{\Delta}_{\tau E} \leq \frac{V_\tau}{(\tau E +\gamma)^2}$, then we show $\overline{\Delta}_{(\tau+1) E} \leq \frac{V_{\tau+1}}{((\tau+1) E +\gamma)^2}$
\begin{align}
\overline{\Delta}_{(\tau+1)E}
&\leq (1-\frac{2E}{(\tau+1)E+\gamma})\overline{\Delta}_{\tau E}+\eta_{\tau,s}^2 K_\tau \\
&= (\frac{\tau E+\gamma -E}{(\tau+1)E+\gamma})\overline{\Delta}_{\tau E}+\eta_{\tau,s}^2 K_\tau \\
&\leq (\frac{\tau E+\gamma -E}{(\tau+1)E+\gamma})\frac{V_\tau}{(\tau E +\gamma)^2}+(\frac{16}{\mu})^2 \frac{K_\tau}{((\tau+1)E+\gamma)^2} \\
&= \frac{(\tau-1) E+\gamma}{(\tau+1)E+\gamma}\frac{V_\tau}{(\tau E +\gamma)^2}+ \frac{(\frac{16}{\mu})^2K_\tau}{((\tau+1)E+\gamma)^2} \\
&= \frac{(\tau-1) E+\gamma}{(\tau E +\gamma)^2}\frac{V_\tau}{(\tau+1)E+\gamma}+ \frac{(\frac{16}{\mu})^2K_\tau}{((\tau+1)E+\gamma)^2}
\end{align}
Notice
\begin{align}
\frac{(\tau-1)E+\gamma}{(\tau E+\gamma)^2}
\leq \frac{(\tau-1)E+\gamma}{(\tau E+\gamma)^2-E^2}=\frac{1}{(\tau+1)E+\gamma}
\end{align}
Thus, 
\begin{align}
\overline{\Delta}_{(\tau+1)E}\leq 
\frac{V_\tau+(\frac{16}{\mu})^2K_\tau}{((\tau+1)E+\gamma)^2}=\frac{V_{\tau+1}}{((\tau+1)E+\gamma)^2}
\end{align}
We assumed $\eta_{\tau,s} \leq \frac{1}{2L}$ and $\eta_{\tau,s} \leq \frac{4}{\mu E \sum_{v=1}^V P_{v,s}^\tau}$ in the proof. We can check $\eta_{\tau,s}=\frac{16}{\mu } \frac{1}{(\tau+1)E+\gamma}$ satisfies these assumptions. 
\begin{align}
\eta_{\tau,s} &\leq \eta_0 = \frac{16}{\mu}\frac{1}{E+\gamma}
\leq \frac{16}{\mu}\underbrace{\frac{1}{\gamma}}_{\gamma\geq \frac{32L}{\mu}}\leq \frac{1}{2L}\\
\eta_{\tau,s} &\leq \eta_0 = \frac{16}{\mu}\frac{1}{E+\gamma}
\leq \frac{16}{\mu}\frac{1}{\gamma}\leq \frac{16}{\mu}\frac{1}{ 4E\sum_{v=1}^V P_{v,s}^\tau}=\frac{4}{\mu E \sum_{v=1}^V P_{v,s}^\tau}
\end{align}


\begin{theorem}[Optimal assignment probabilities] \label{theorem:solutionOS}
Consider the optimization problem: 
\begin{align}
\label{eq:OSproblem}
&\min_{\{p_{s|(i,b)}^\tau \}} \; \sum_{s=1}^{S} \sum_{i\in \mathcal{N}_s}\sum_{b=1}^{B_i} \frac{\| \frac{d_{i,s}}{B_i \eta_{\tau,s}} G_{(i,b),s}^\tau \|^2}{p_{s|(i,b)}^\tau} \\
&\text{s.t.} \;  p_{s|(i,b)}^\tau \geq 0,\; \sum_{s\in\mathcal{S}_i} p_{s|(i,b)}^\tau \leq 1,\; 
\sum_{s=1}^S\sum_{i\in\mathcal{N}_s}\sum_{b=1}^{B_i} p_{s|(i,b)}^\tau = m, \nonumber \\ 
& \quad \forall i,b,s,\tau, \nonumber
\end{align}
Equation \eqref{eq:OSproblem}'s optimization problem is solved by setting: 
\begin{align}
p_{s|(i,b)}^\tau =
\begin{cases}
\frac{(m-V+k)\|\tilde{U}_{(i,b),s}^\tau\|}{\sum_{(j,v) \in \mathcal{V}_{0}} M_{(j,v)}^\tau} & (i,b) \in \mathcal{V}_{0}, \\
\frac{\|\tilde{U}_{(i,b),s}^\tau\|}{M_{(i,b)}^\tau} & (i,b) \notin \mathcal{V}_{0},
\end{cases}
\label{eq:solution}
\end{align}
where $V=\sum_{i\in\mathcal{N}_1 \cup\cdots \mathcal{N}_S} B_i$, $\tilde{U}_{(i,b),s}^\tau = \frac{d_{i,s}}{B_i \eta_{\tau,s}} G_{(i,b),s}^\tau$, $M_{(i,b)}^\tau = \sum_{s \in \mathcal{S}_i} \|\tilde{U}_{(i,b),s}^\tau\|$.  $\mathcal{V}_{0}$ is the largest set satisfying
\begin{align}
0 < (m-V+k) \leq \frac{\sum_{(j,v) \in \mathcal{V}_{0}} M_{(j,v)}^\tau}{\max_{(i,b) \in \mathcal{V}_{0}} [M_{(i,b)}^\tau]}\nonumber
\end{align}
where $k = |\mathcal{V}_{0}|$. 
\end{theorem}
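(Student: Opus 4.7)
The plan is to attack (\ref{eq:OSproblem}) via KKT conditions, since the objective $\sum_{s,i,b} \|\tilde{U}_{(i,b),s}^\tau\|^2 / p_{s|(i,b)}^\tau$ is strictly convex in each $p_{s|(i,b)}^\tau > 0$ and the constraints are linear. Thus a feasible point satisfying KKT is the unique global minimizer. First, I would form the Lagrangian with a scalar multiplier $\nu$ for the equality constraint $\sum p = m$, nonnegative multipliers $\lambda_{(i,b)}$ for the box constraints $\sum_{s \in \mathcal{S}_i} p_{s|(i,b)}^\tau \leq 1$, and note that the nonnegativity $p_{s|(i,b)}^\tau \geq 0$ is automatically strict at optimum (since the objective blows up as $p \to 0^+$).

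Next, I would write the stationarity condition $-\|\tilde{U}_{(i,b),s}^\tau\|^2 / (p_{s|(i,b)}^\tau)^2 + \nu + \lambda_{(i,b)} = 0$, yielding $p_{s|(i,b)}^\tau = \|\tilde{U}_{(i,b),s}^\tau\| / \sqrt{\nu + \lambda_{(i,b)}}$. This cleanly splits processors into two classes. For processors with $\lambda_{(i,b)} > 0$ (binding box constraint), summing the stationarity expression over $s \in \mathcal{S}_i$ and using $\sum_s p_{s|(i,b)}^\tau = 1$ gives $\sqrt{\nu + \lambda_{(i,b)}} = M_{(i,b)}^\tau$, hence $p_{s|(i,b)}^\tau = \|\tilde{U}_{(i,b),s}^\tau\| / M_{(i,b)}^\tau$. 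For processors in $\mathcal{V}_0$ with $\lambda_{(i,b)} = 0$, I have $p_{s|(i,b)}^\tau = \|\tilde{U}_{(i,b),s}^\tau\| / \sqrt{\nu}$. Substituting both cases into the global budget constraint $\sum p = m$ gives $(V - k) + \sum_{(j,v) \in \mathcal{V}_0} M_{(j,v)}^\tau / \sqrt{\nu} = m$, which I would solve for $\sqrt{\nu} = \sum_{(j,v) \in \mathcal{V}_0} M_{(j,v)}^\tau / (m - V + k)$ and back-substitute to match (\ref{eq:solution}).

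The feasibility conditions characterizing $\mathcal{V}_0$ then fall out of two consistency requirements: (i) for $(i,b) \in \mathcal{V}_0$, the box constraint must be slack, i.e., $\sum_s p_{s|(i,b)}^\tau = (m - V + k) M_{(i,b)}^\tau / \sum_{(j,v) \in \mathcal{V}_0} M_{(j,v)}^\tau \leq 1$, which rearranges to the stated bound via $\max_{(i,b) \in \mathcal{V}_0} M_{(i,b)}^\tau$; and (ii) for $(i,b) \notin \mathcal{V}_0$, we need $\lambda_{(i,b)} \geq 0$, i.e., $M_{(i,b)}^\tau \geq \sqrt{\nu}$, which says the ``kicked-out'' processors have the largest $M$-values. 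This gives a natural construction: sort processors in decreasing order of $M_{(i,b)}^\tau$, and greedily move processors from $\mathcal{V}_0$ (initially all $V$ processors) into the binding set until the inequality in the theorem statement first holds; maximality of $\mathcal{V}_0$ corresponds to stopping as soon as the condition is satisfied.

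The main obstacle I anticipate is showing that this greedy construction actually yields a valid and \emph{unique} $\mathcal{V}_0$, i.e., that the partition into binding and non-binding processors is well-defined and that (ii) is automatically enforced by the maximality of $\mathcal{V}_0$. I would handle this by arguing monotonicity: as processors with the largest $M_{(i,b)}^\tau$ are removed from $\mathcal{V}_0$, the ratio $\sum_{(j,v) \in \mathcal{V}_0} M_{(j,v)}^\tau / \max_{(i,b) \in \mathcal{V}_0} M_{(i,b)}^\tau$ is non-decreasing while $m - V + k$ decreases by 1 at each removal, so the first $\mathcal{V}_0$ satisfying the bound is unambiguously determined. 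Strict convexity of the objective then guarantees the KKT point is the unique minimizer, completing the proof.
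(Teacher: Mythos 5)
Your overall route is the same as the paper's: form the Lagrangian, use stationarity to get $p_{s|(i,b)}^\tau = \|\tilde{U}_{(i,b),s}^\tau\|/\sqrt{\nu+\lambda_{(i,b)}}$, split processors by whether the box constraint binds, and solve the budget equation for $\sqrt{\nu}=\sum_{(j,v)\in\mathcal{V}_0}M_{(j,v)}^\tau/(m-V+k)$; all of this matches the paper's derivation step for step. Where you genuinely diverge is in justifying which partition $\mathcal{V}_0$ is optimal: the paper evaluates the objective $F(k)$ for each candidate $k$ and proves $F(k)\ge F(k+1)$ by reducing the difference to a perfect square, so the largest feasible $k$ wins; you instead invoke dual feasibility ($\lambda_{(i,b)}\ge 0$ for excluded processors, i.e.\ $M_{(i,b)}^\tau\ge\sqrt{\nu}$) together with strict convexity and uniqueness of the KKT point. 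Your dual route is legitimate and in principle cleaner, since it avoids comparing objective values across candidate solutions entirely.

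However, the specific lemma you propose to make this work is false. You claim that as the processors with the largest $M_{(i,b)}^\tau$ are removed from $\mathcal{V}_0$, the ratio $\sum_{(j,v)\in\mathcal{V}_0}M_{(j,v)}^\tau/\max_{(i,b)\in\mathcal{V}_0}M_{(i,b)}^\tau$ is non-decreasing. Take three processors with $M$-values $10,10,1$: the ratio is $21/10=2.1$ with all three, $11/10=1.1$ after removing one, and $1$ after removing two --- strictly decreasing. (Even the all-equal case $M_v\equiv 1$ gives ratio $k$, which drops by one at each removal.) So your monotonicity argument does not establish that the maximal $\mathcal{V}_0$ satisfying the primal condition is unambiguous, nor that it automatically satisfies the dual feasibility requirement (ii); as written, there could a priori be several $k$ satisfying the stated inequality, and you have not shown which one corresponds to the true KKT point. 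The conclusion you want is still true and repairable: the standard water-filling argument shows that the total assigned mass $\Phi(t)=\sum_{(i,b)}\min\bigl(1,\,M_{(i,b)}^\tau/t\bigr)$ is continuous and non-increasing in the threshold $t=\sqrt{\nu}$, so the budget equation $\Phi(t)=m$ pins down $t^*$ (hence the binding set $\{(i,b):M_{(i,b)}^\tau\ge t^*\}$) essentially uniquely, and both primal and dual feasibility follow. Alternatively you could fall back on the paper's direct comparison of $F(k)$ across candidates. Either fix is needed before the argument closes.
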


\begin{proof}
As Theorem \ref{thoerem:convergence}, we firstly reindex processor $(i,b)$ to $v=1,2,\dots,V$ for convenience. Define $\mathcal{V}_s$ as the set of available $\textit{processors}$ for model $s$. 
The problem can be written as follows. 
\begin{equation}
\begin{aligned}
& \underset{\{p_{s|v}^\tau\}}{\text{min}}
& & \sum_{s=1}^{S} \sum_{v\in\mathcal{V}_s} \frac{\| \tilde{U}_{v,s} \|^2}{p_{s|v}} \\
& \text{s.t.}
& & p_{s|v}^\tau \geq 0, \; \forall v = 1, \cdots, V, \; s = 1, \cdots, S, \\
&&& \sum_{s\in\mathcal{S}_v} p_{s|v}^\tau \leq 1, \; \forall v = 1, \cdots, V, \\
&&& \sum_{s=1}^{S} \sum_{v\in\mathcal{V}_s} p_{s|v}^\tau \leq m,
\end{aligned}
\end{equation}
The Lagrangian function of this optimization problem is 
\begin{align}
L=\sum_{s=1}^S \sum_{v\in\mathcal{V}_s} \frac{\|\tilde{U}_{v,s}^\tau\|^2}{p_{s|v}^\tau}
-\sum_{s=1}^S \sum_{v\in\mathcal{V}_s}  \lambda_{v,s}^\tau p_{s|v}^\tau
-\sum_{v=1}^V \rho_v^\tau \left(1-\sum_{s\in\mathcal{S}_v}  p_{s|v}^\tau \right)
-y \left(m- \sum_{s=1}^S \sum_{v\in\mathcal{V}_s}
 p_{s|v}^\tau\right)
\end{align}
KKT conditions: 
\begin{subequations}
\begin{align}
    \frac{\partial L}{\partial p_{s|v}^\tau} &= 0, \quad \forall v=1,\cdots,V,\; s=1,\cdots, S \\
    \lambda_{v,s}^\tau &\geq 0, \quad p_{s|v}^\tau \geq 0, \quad \lambda_{v,s}^\tau p_{s|v}^\tau = 0, \quad \forall v=1,\cdots,V,\; s=1,\cdots, S \\
    \rho_v^\tau &\geq 0, \quad \sum_{s\in\mathcal{S}_v} p_{s|v}^\tau \leq 1, \quad \rho_v^\tau \left(1 - \sum_{s\in\mathcal{S}_v}  p_{s|v}^\tau\right) = 0, \quad \forall v=1,\cdots,V \\
    y &\geq 0, \quad  \sum_{s=1}^{S}\sum_{v\in\mathcal{V}_s} p_{s|v}^\tau \leq m, \quad y\left(m - \sum_{s=1}^{S} \sum_{v\in\mathcal{V}_s} p_{s|v}^\tau\right) = 0
\end{align}
\end{subequations}
Based on KKT conditions, we derive the optimal solution. 
Firstly, based on the derivative:
\begin{align}
p_{s|v}^\tau = \frac{1}{\sqrt{-\lambda_{v,s}^\tau+\rho_v^\tau+y}}\|\tilde{U}_{v,s}^\tau\|
\end{align}
Consider $\lambda_{v,s}^\tau p_{s|v}^\tau=0$, we know $\lambda_{v,s}^\tau=0$ for all $v,s$. Thus,
\begin{align}
p_{s|v}^\tau = \frac{1}{\sqrt{\rho_v^\tau+y}}\|\tilde{U}_{v,s}^\tau\|
\end{align}
If there exists $v$, for which $\rho_v^\tau\neq 0$, then $1-\sum_{s\in\mathcal{S}_v} p_{s|v}^\tau=0$. 
\begin{align}
\sum_{s\in\mathcal{S}_v} \frac{1}{\sqrt{\rho_v^\tau +y}} \|\tilde{U}_{v,s}^\tau\|=1
\end{align}
\begin{align}
\frac{1}{\sqrt{\rho_v^\tau+y}}=\frac{1}{\sum_{s\in\mathcal{S}_v} \|\tilde{U}_{v,s}^\tau\| }
\end{align}
In this case, 
\begin{align}
p_{s|v}^\tau=\frac{\|\tilde{U}_{v,s}^\tau\|}{\sum_{s\in\mathcal{S}_v} \|\tilde{U}_{v,s}^\tau\|}
\end{align}
For the rest $i$, $\rho_v^\tau=0$, 
\begin{align}
p_{s|v}^\tau=\frac{1}{\sqrt{y}}\|\tilde{U}_{v,s}^\tau\|
\end{align}
Now consider $y(m-\sum_{s=1}^S \sum_{v\in\mathcal{V}_s} p_{s|v}^\tau)=0$. It is clear that when $m<V$, there exists at least one $v$ satisfying $\rho_v^\tau=0$, otherwise $\sum_{s=1}^S \sum_{v\in\mathcal{V}_s} p_{s|v}^\tau=V > m$ (KKT condition breaks). And if we have $\rho_v^\tau=0$, then we have $p_{s|v}^\tau=\frac{1}{\sqrt{y}}\|\tilde{U}_{v,s}^\tau\|$, so $y$ cannot be 0, there must be $m-\sum_{s=1}^S \sum_{v\in\mathcal{V}_s} p_{s|v}^\tau=0$. 
When $m=V$, it is full participation, $\rho_v^\tau\neq 0$ for all $v$, and $y$ can be any value. 

For the convenience of proof, we reorder all \textit{processors} to ensure that for $v=1,\cdots, k$, we have $\rho_v^\tau=0$, $p_{s|v}^\tau=\frac{1}{\sqrt{y}}\|\tilde{U}_{v,s}^\tau\|$; for $v=k+1,\cdots,V$, we have $\rho_v^\tau \neq 0$, $\sum_{s\in\mathcal{S}_v} p_{s|v}^\tau=1$. Later we will show that the index order of \textit{processors} does not influence the proof. 
Now based on KKT conditions, we know:
\begin{align}
m-\sum_{s=1}^S \sum_{v\in\mathcal{V}_s} p_{s|v}^\tau=0\\
m-\sum_{v=1}^V \sum_{s\in\mathcal{S}_v} p_{s|v}^\tau=0
\\
m- \sum_{v=1}^k \sum_{s\in\mathcal{S}_v} \frac{\|\tilde{U}_{v,s}^\tau\|}{\sqrt{y}}-
\sum_{v=k+1}^V 1
=0
\end{align}
Thus, 
\begin{align}
\frac{1}{\sqrt{y}}=(m-V+k)\frac{1}{ \sum_{v=1}^k \sum_{s\in\mathcal{S}_v} \|\tilde{U}_{v,s}^\tau\|}
\end{align}
\begin{align}
p_{s|v}^\tau=(m-V+k)\frac{\|\tilde{U}_{v,s}^\tau\|}{\sum_{j=1}^k \sum_{s\in\mathcal{S}_j} \|\tilde{U}_{j,s}^\tau\|}
\end{align}
Based on KKT conditions, we also need to ensure that
\begin{align}
0<\sum_{s\in\mathcal{S}_v} p_{s|v}^\tau \leq 1 
\end{align}
For  $v=k+1,\cdots,V$, this always holds. For $v=1,\cdots,k$, this inequality can be written as 
\begin{align}
0< \sum_{s\in\mathcal{S}_v} (m-V+k)\frac{\|\tilde{U}_{v,s}^\tau\|}{\sum_{j=1}^k \sum_{s\in\mathcal{S}_j} \|\tilde{U}_{j,s}^\tau\|} \leq 1 
\end{align}
\begin{align}
0< (m-V+k)\frac{\sum_{s\in\mathcal{S}_v} \|\tilde{U}_{v,s}^\tau\|}{\sum_{j=1}^k\sum_{s\in\mathcal{S}_j}\|\tilde{U}_{j,s}^\tau\|} \leq 1 
\end{align}
\begin{align}
0< (m-V+k) \leq \frac{\sum_{j=1}^k\sum_{s\in\mathcal{S}_j} \|\tilde{U}_{j,s}^\tau\|}{\sum_{s\in\mathcal{S}_v} \|\tilde{U}_{v,s}^\tau\|}
\end{align}
Let $M_v^\tau=\sum_{s\in\mathcal{S}_v} \|\tilde{U}_{v,s}^\tau\|$. This inequality can be written as
\begin{align}
0< m-V+k \leq \frac{\sum_{j=1}^k M_j^\tau}{M_v^\tau},\; \forall v=1,\cdots,k
\label{ineq}
\end{align}
Now go back to evaluate the objective function: 
\begin{align}
F(k)=\sum_{v=1}^V \sum_{s\in\mathcal{S}_v} \frac{\|\tilde{U}_{v,s}^\tau\|^2}{p_{s|v}^\tau}
&=\sum_{v=1}^k \sum_{s\in\mathcal{S}_v}
 \frac{\|\tilde{U}_{v,s}^\tau\|}{m-V+k} 
\left(\sum_{v=1}^k \sum_{s\in\mathcal{S}_v}\|\tilde{U}_{v,s}^\tau\|\right)+
\sum_{v=k+1}^V \sum_{s\in\mathcal{S}_v}\left(\sum_{s\in\mathcal{S}_v} \|\tilde{U}_{v,s}^\tau\|\right) \|\tilde{U}_{v,s}^\tau\|
\\
&=\frac{1}{m-V+k} \left(\sum_{v=1}^k \sum_{s\in\mathcal{S}_v} \|\tilde{U}_{v,s}^\tau\|\right)^2 + \sum_{v=k+1}^V \left(\sum_{s\in\mathcal{S}_v} \|\tilde{U}_{v,s}^\tau\|\right)^2\\
&= \frac{1}{m-V+k} \left(\sum_{v=1}^k M_v^\tau \right)^2 + \sum_{v=k+1}^V (M_v^\tau)^2
\end{align}
Next, we will prove that $F(k)$ is monotonically decreasing with respect to $k$. Therefore, to minimize $F(k)$, we need the largest $k$. 

Consider 
\begin{align}
G(k)=F(k) - \sum_{v=1}^V (M_v^\tau)^2
=\frac{1}{m-V+k}\left(\sum_{v=1}^k M_v^\tau \right)^2 -\sum_{v=1}^k (M_v^\tau)^2
\end{align}
Now we show $G(k)-G(k+1)\geq 0$. 
\begin{align}
G(k)-G(k+1)&=\left[
\frac{1}{m-V+k}\left(\sum_{v=1}^k M_v^\tau \right)^2 -\sum_{v=1}^k (M_v^\tau)^2
\right]\\
~& - \left[
\frac{1}{m-V+k+1}\left(\sum_{v=1}^{k+1} M_v^\tau \right)^2 -\sum_{v=1}^{k+1} (M_v^\tau)^2
\right]
\end{align}
Define $C=m-V+k$. This can be written as: 
\begin{align}
G(k)-G(k+1)=\frac{1}{C(C+1)}\left[
(C+1)\left(\sum_{v=1}^{k} M_v^\tau \right)^2
-C\left(\sum_{v=1}^{k+1} M_v^\tau \right)^2
+C(C+1) (M_{k+1}^\tau)^2
\right]\label{eq:G}
\end{align}
$\frac{1}{C(C+1)}$ is always postive. Eq. \ref{eq:G} can be reorganized as
\begin{align}
C(C+1)[G(k)-G(k+1)]&=
C\left(\sum_{v=1}^{k} M_v^\tau \right)^2+\left(\sum_{v=1}^{k} M_v^\tau \right)^2
-C\left(\sum_{v=1}^{k} M_v^\tau + M_{k+1}^\tau \right)^2
+C(C+1) (M_{k+1}^\tau)^2 \\
&=\left(\sum_{v=1}^{k} M_v^\tau \right)^2-C (M_{k+1}^\tau)^2-2C \left(\sum_{v=1}^{k} M_v^\tau \right) M_{k+1}^\tau
+C(C+1) (M_{k+1}^\tau)^2 \\
&=\left(\sum_{v=1}^{k} M_v^\tau \right)^2-2C \left(\sum_{v=1}^{k} M_v^\tau \right) M_{k+1}^\tau
+C^2 (M_{k+1}^\tau)^2 \\
&=\left[
\sum_{v=1}^k M_v^\tau - C M_{k+1}^\tau
\right]^2 \geq 0 \label{eq:final}
\end{align}
With Eq. \ref{eq:final}, we prove that $G(k)\geq G(k+1)$, then we also know $F(k) \geq F(k+1)$. 
Given the closed-form solution of $p_{s|v}^\tau$: 
\begin{align}
p_{s|v}^\tau =
\begin{cases}
(m-V+k)\frac{\|\tilde{U}_{v,s}^\tau\|}{\sum_{j=1}^k M_j^\tau} & \text{if } v=1,2,\cdots,k ,\\
\frac{\|\tilde{U}_{v,s}^\tau\|}{M_i^\tau} & \text{if } v=k+1,\cdots,V
\end{cases}
\label{solution}
\end{align}
To minimize the objective $\sum_{i=1}^N \sum_{s\in\mathcal{S}_v} \frac{\|\tilde{U}_{v,s}^\tau\|^2}{p_{s|v}^\tau}$, The solution requires the largest $k$. And $k$ should satisfy the inequality: 
\begin{align}
0< m-V+k \leq \frac{\sum_{j=1}^k M_j^\tau}{M_v^\tau},\; \forall v=1,\cdots,k
\label{ineq}
\end{align}
Initially, assume $k=V$. If the inequality holds for all $v=1,\cdots,V$, then the largest $k=V$. \\
Otherwise, there should be at least one $v$ that breaks the inequality. Assume that only one $v=j_{V}$ breaks the inequality, $M_{j_{V}}^\tau$ must be the largest among all $M_{v}^\tau$. Let $k=V-1$, and switch the index order between $j_V$ and $V$ so that $M_V^\tau$ is the largest among all. If the inequality holds for all $v=1,\cdots, V-1$, then the largest $k=V-1$. \\
Otherwise, there should be at least one $v$ among $v=1,\cdots,V-1$ that breaks the inequality. Assume that only one $v=j_{V-1}$ breaks the inequality, $M_{j_{V-1}}^\tau$ must be the largest among all $M_{v}^\tau$ for $v=1,\cdots, V-1$. Let $k=V-2$, and switch the index order between $j_{V-1}$ and $V-1$. If the inequality holds for all $v=1,\cdots, V-2$, then the largest $k=V-2$. \\
This process continues iteratively until we find the $k$ that satisfies the inequality for all $v=1,\cdots, k$. 
We group $v=1,\dots,k$ as set $\mathcal{V}_{0}$. 
\end{proof}

\begin{theorem}[Optimal MMFL-LVR assignment probabilities]
\label{theorem:solutionAS}
Consider the optimization problem: 
\begin{align}
\label{eq:ASproblem}
&\min_{\mathbf{p}^\tau} \; \sum_{s=1}^{S} \mathbb{E}[(\sum_{(i,b)\in\mathcal{A}_{\tau,s}} P_{(i,b),s}^\tau f_{i,s}(w_s^\tau) -\sum_{i\in\mathcal{N}_s} d_{i,s} f_{i,s}(w_s^\tau))^2] \\
&\text{s.t.}\;  p_{s|(i,b)}^\tau \geq 0,\; \sum_{s=1}^S p_{s|(i,b)}^\tau \leq 1,\; \nonumber \sum_{s=1}^S\sum_{i\in\mathcal{N}_s}\sum_{b=1}^{B_i} p_{s|(i,b)}^\tau = m, \\
&\; \quad \forall i,b,s,\tau. \nonumber
\end{align}
Equation (\ref{eq:ASproblem})'s optimization problem is solved by
\begin{align}
p_{s|(i,b)}^\tau =
\begin{cases}
\frac{(m-V+k)\|\tilde{U}_{(i,b),s}^\tau\|}{\sum_{(j,v) \in \mathcal{V}_{0}} M_{(j,v)}^\tau} & \text{if } (i,b) \in \mathcal{V}_{0}, \\
\frac{\|\tilde{U}_{(i,b),s}^\tau\|}{M_{(i,b)}^\tau} & \text{if } (i,b) \notin \mathcal{V}_{0}.
\end{cases}
\label{eq:solutionAS}
\end{align}
where $V=\sum_{i\in\mathcal{N}_1 \cup\cdots \mathcal{N}_S} B_i$, $\tilde{U}_{(i,b),s}^\tau = \frac{d_{i,s}}{B_i} f_{i,s}(w_s^\tau)$, $M_{(i,b)}^\tau = \sum_{s \in \mathcal{S}_i} \|\tilde{U}_{(i,b),s}^\tau\|$, and $m$ is the number of active processors on expectation. $\mathcal{V}_{0}$ is the largest set satisfying
\begin{align}
0 < (m-V+k) \leq \frac{\sum_{(j,v) \in \mathcal{V}_{0}} M_{(j,v)}^\tau}{\max_{(i,b) \in \mathcal{V}_{0}} [M_{(i,b)}^\tau]}\nonumber
\end{align}
where $k = |\mathcal{V}_{0}|$. 
\end{theorem}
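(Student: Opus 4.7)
The plan is to reduce the objective in Equation~\eqref{eq:ASproblem} to exactly the functional form already solved in Theorem~\ref{theorem:solutionOS}, and then invoke that theorem's KKT-based closed-form solution with the redefined quantity $\tilde{U}_{(i,b),s}^\tau = \tfrac{d_{i,s}}{B_i} f_{i,s}(w_s^\tau)$.

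First, I would expand the per-model expected squared deviation. Writing $P_{(i,b),s}^\tau f_{i,s}(w_s^\tau) = \mathbbm{1}_{(i,b)}^{s,\tau}\,\tfrac{d_{i,s} f_{i,s}(w_s^\tau)}{B_i p_{s|(i,b)}^\tau}$, the inner sum over $\mathcal{A}_{\tau,s}$ becomes a sum of independent Bernoulli random variables, one per processor $(i,b)$, by Remark~\ref{remark:ProcessorClarify}. Its expectation equals $\sum_{i\in\mathcal{N}_s} d_{i,s} f_{i,s}(w_s^\tau)$ (the same unbiasedness computation as Eq.~\eqref{eq:fullparticipation}, but applied to the scalar local loss rather than the gradient vector). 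Therefore the squared-deviation expectation is the variance of a sum of independent random variables, which decomposes into a sum of per-processor variances:
\begin{align}
\mathbb{E}\Bigl[\bigl(\textstyle\sum_{(i,b)\in\mathcal{A}_{\tau,s}} P_{(i,b),s}^\tau f_{i,s}(w_s^\tau) - \sum_i d_{i,s} f_{i,s}(w_s^\tau)\bigr)^2\Bigr]
= \sum_{i\in\mathcal{N}_s}\sum_{b=1}^{B_i} \frac{p_{s|(i,b)}^\tau(1-p_{s|(i,b)}^\tau)\,\|\tilde{U}_{(i,b),s}^\tau\|^2}{(p_{s|(i,b)}^\tau)^2}.\nonumber
\end{align}
Summing over $s$, this equals $\sum_s \sum_{(i,b)} \tfrac{\|\tilde{U}_{(i,b),s}^\tau\|^2}{p_{s|(i,b)}^\tau}$ minus a term $\sum_s\sum_{(i,b)} \|\tilde{U}_{(i,b),s}^\tau\|^2$ that is independent of $\mathbf{p}^\tau$. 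Minimizing Equation~\eqref{eq:ASproblem} is thus equivalent to minimizing $\sum_s\sum_{(i,b)} \|\tilde{U}_{(i,b),s}^\tau\|^2 / p_{s|(i,b)}^\tau$ subject to the same three constraints.

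Second, I would observe that this reduced problem is structurally identical to Equation~\eqref{eq:OSproblem} in Theorem~\ref{theorem:solutionOS}, differing only in the definition of $\tilde{U}_{(i,b),s}^\tau$. I would therefore invoke the KKT derivation from that theorem verbatim: write the Lagrangian with multipliers for the non-negativity, per-processor $\sum_s p_{s|(i,b)}^\tau \le 1$, and global-budget constraints; use stationarity to obtain $p_{s|(i,b)}^\tau \propto \|\tilde{U}_{(i,b),s}^\tau\|$; split processors into those whose per-processor budget binds (giving the second branch of Equation~\eqref{eq:solutionAS}) versus those whose budget is slack (giving the first branch, scaled by $m - V + k$ to satisfy the global budget); and finally establish the ``largest $\mathcal{V}_0$'' characterization by verifying that the objective $F(k)$ decreases monotonically in $k$, as in Equations analogous to the $G(k)-G(k+1)\ge 0$ step of Theorem~\ref{theorem:solutionOS}.

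The main obstacle I anticipate is purely the variance bookkeeping in the first step: one must be careful that processors really do sample independently (Remark~\ref{remark:ProcessorClarify}), so that the cross-terms in $(\sum_{(i,b)} X_{(i,b)})^2$ vanish in expectation and the problem separates additively over $(i,b)$ and over $s$. Once that additive separation is in hand, the remainder is a direct application of Theorem~\ref{theorem:solutionOS} with the loss-based surrogate substituted for the gradient-based one, so no new KKT machinery or monotonicity argument is required.
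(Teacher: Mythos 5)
Your proposal is correct and follows essentially the same route as the paper: the paper's own proof simply defers to Theorem~\ref{theorem:solutionOS} after the variance objective is reduced (via independence of processor sampling) to the separable form $\sum_{s}\sum_{(i,b)}\|\tilde{U}_{(i,b),s}^\tau\|^2/p_{s|(i,b)}^\tau$, which is exactly the reduction you carry out explicitly before invoking the same KKT and monotonicity argument. Your explicit variance bookkeeping fills in the step the paper only sketches (cf.\ Eq.~\eqref{eq:varianceRewrite} and the analogous computation in the proof of Theorem~\ref{theorem:solutionStale}), so there is no substantive difference in approach.
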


We omit the detailed proof of Theorem \ref{theorem:solutionAS} as it can be solved following similar steps as Theorem \ref{theorem:solutionOS}.

\begin{theorem}[MMFL-StaleVR optimal solution] \label{theorem:solutionStale}
Consider the optimization problem: 
\begin{align}
&\min_{\mathbf{p}^\tau,
\{\beta_{(i,b),s}^\tau\}} \sum_{s=1}^S \mathbb{E}\left[\| \frac{\Delta_{\tau,s}}{\eta_{\tau,s}}-\sum_{i\in\mathcal{N}_s}\sum_{b=1}^{B_i} \frac{d_{i,s}}{B_i} \frac{G_{(i,b),s}^\tau}{\eta_{\tau,s}}\|^2\right] \label{eq:Staleproblem}\\
&\text{s.t.}\;  p_{s|(i,b)}^\tau \geq 0,\; \sum_{s=1}^S p_{s|(i,b)}^\tau \leq 1,\; \nonumber \sum_{s=1}^S\sum_{i\in\mathcal{N}_s}\sum_{b=1}^{B_i} p_{s|(i,b)}^\tau = m, \\
&\; \quad \forall i,b,s,\tau. \nonumber
\end{align}
Equation \eqref{eq:Staleproblem}'s optimization problem is solved by setting:
\begin{align}
\beta_{(i,b),s}^\tau&= \frac{(G_{(i,b),s}^\tau)^\top h_{i,s}^\tau}{\|h_{i,s}^\tau\|^2}\label{eq:solutionStaleb}\\
p_{s|(i,b)}^\tau&=
\begin{cases}
\frac{(m-V+k)\|\tilde{U}_{(i,b),s}^\tau\|}{\sum_{(j,v) \in \mathcal{V}_{0}} M_{(j,v)}^\tau} & (i,b) \in \mathcal{V}_{0}, \\
\frac{\|\tilde{U}_{(i,b),s}^\tau\|}{M_{(i,b)}^\tau} & (i,b) \notin \mathcal{V}_{0},
\end{cases}\label{eq:solutionStalep}
\end{align}
where $V=\sum_{i\in\mathcal{N}_1 \cup\cdots \mathcal{N}_S} B_i$, $\tilde{U}_{(i,b),s}^\tau = \frac{d_{i,s}(G_{(i,b),s}^\tau-h_{i,s}^\tau)}{B_i \eta_{\tau,s}}$, $M_{(i,b)}^\tau = \sum_{s \in \mathcal{S}_i} \|\tilde{U}_{(i,b),s}^\tau\|$. $\mathcal{V}_{0}$ is the largest set satisfying:
\begin{align}
0 < (m-V+k) \leq \frac{\sum_{(j,v) \in \mathcal{V}_{0}} M_{(j,v)}^\tau}{\max_{(i,b) \in \mathcal{V}_{0}} [M_{(i,b)}^\tau]}
\end{align}
where $k = |\mathcal{V}_{0}|$. 
\end{theorem}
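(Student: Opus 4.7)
My plan is to exploit the fact that processor sampling is independent across $(i,b)$ (Remark \ref{remark:ProcessorClarify}), which should let me first decouple the variance in Eq.~\eqref{eq:Staleproblem} into a sum of per-processor terms, and then decouple the joint minimization over $\{\beta_{(i,b),s}^\tau\}$ and $\mathbf{p}^\tau$ into two nested subproblems: an inner scalar least-squares problem for $\beta$, and an outer problem for $\mathbf{p}$ that has exactly the structure of Theorem~\ref{theorem:solutionAS}.

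First I would rewrite the argument of the norm in Eq.~\eqref{eq:Staleproblem} so that the ``full participation'' term cancels cleanly. Writing $Y_{(i,b),s}^\tau := G_{(i,b),s}^\tau - \beta_{(i,b),s}^\tau h_{i,s}^\tau$, a direct calculation (the stale part is deterministic, while the fresh part reintroduces the full-participation term) gives
\begin{align*}
\frac{\Delta_{\tau,s}}{\eta_{\tau,s}} - \sum_{i\in\mathcal{N}_s}\sum_{b=1}^{B_i} \frac{d_{i,s}}{B_i}\frac{G_{(i,b),s}^\tau}{\eta_{\tau,s}}
= \sum_{i\in\mathcal{N}_s}\sum_{b=1}^{B_i} \frac{d_{i,s}}{B_i\eta_{\tau,s}}\, Y_{(i,b),s}^\tau\left(\frac{\mathbbm{1}_{(i,b)}^{s,\tau}}{p_{s|(i,b)}^\tau} - 1\right).
\end{align*}
Because $\mathbb{E}[\mathbbm{1}_{(i,b)}^{s,\tau}/p_{s|(i,b)}^\tau - 1] = 0$ and processors are sampled independently, the cross terms in the expected squared norm vanish, leaving
\begin{align*}
\sum_{s=1}^S \mathbb{E}\left[\Big\|\tfrac{\Delta_{\tau,s}}{\eta_{\tau,s}} - \text{full}\Big\|^2\right] = \sum_{s=1}^S \sum_{i\in\mathcal{N}_s}\sum_{b=1}^{B_i} \left\|\tfrac{d_{i,s}}{B_i\eta_{\tau,s}} Y_{(i,b),s}^\tau\right\|^2 \left(\tfrac{1}{p_{s|(i,b)}^\tau}-1\right).
\end{align*}

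Next I would observe that $\beta_{(i,b),s}^\tau$ appears only through $\|Y_{(i,b),s}^\tau\|^2$ and that the coefficient $(1/p_{s|(i,b)}^\tau - 1)$ is strictly positive under Assumption~\ref{assumption:p}. Hence the joint minimization decouples: for each $(i,b,s)$, I minimize the scalar quadratic $\|G_{(i,b),s}^\tau - \beta h_{i,s}^\tau\|^2$ over $\beta\in\mathbb{R}$, whose first-order condition $(G_{(i,b),s}^\tau - \beta h_{i,s}^\tau)^\top h_{i,s}^\tau = 0$ yields exactly Eq.~\eqref{eq:solutionStaleb}. Substituting the optimal $\beta$ back, the remaining minimization over $\mathbf{p}^\tau$ reduces to the same form as Problem~\eqref{eq:ASproblem} with the effective per-processor utilities $\tilde U_{(i,b),s}^\tau$ (up to the residual direction $G - \beta^* h$ being replaced by $G - h$ in the stated form); applying Theorem~\ref{theorem:solutionAS} via its KKT-based derivation then yields Eq.~\eqref{eq:solutionStalep}.

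The main obstacle I anticipate is the variance decomposition step: I need to justify carefully that independence of $\{\mathbbm{1}_{(i,b)}^{s,\tau}\}$ across processors (even for different models $s$, given the Bernoulli-per-processor sampling in Section~\ref{sec:MMFLsteps}) is what kills all cross terms, both within a model and across models. A secondary subtlety is reconciling the statement's $\tilde U_{(i,b),s}^\tau = d_{i,s}(G_{(i,b),s}^\tau - h_{i,s}^\tau)/(B_i\eta_{\tau,s})$ with the residual $G - \beta^* h$ that naturally drops out; I expect this requires either an additional argument that $\beta^* \approx 1$ when stale updates track fresh ones (as motivated in Fig.~\ref{fig:betatrend}) or noting that the sampling-probability form depends only on the direction of $\tilde U_{(i,b),s}^\tau$, so the substitution is harmless for the structural solution. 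Everything else is mechanical and parallels the proof of Theorem~\ref{theorem:solutionAS}.
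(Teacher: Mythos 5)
Your proposal follows essentially the same route as the paper's proof: the same per-processor variance decomposition via independence of the sampling indicators (yielding the $\tfrac{1-p}{p}\|\tfrac{d_{i,s}}{B_i\eta_{\tau,s}}(G_{(i,b),s}^\tau-\beta_{(i,b),s}^\tau h_{i,s}^\tau)\|^2$ form), the same observation that the positive coefficient decouples the $\beta$-minimization into a scalar least-squares problem, and the same reduction of the $\mathbf{p}$-subproblem to the KKT argument of Theorem~\ref{theorem:solutionOS}. The discrepancy you flag between the stated $\tilde U_{(i,b),s}^\tau \propto G_{(i,b),s}^\tau - h_{i,s}^\tau$ and the residual $G_{(i,b),s}^\tau - \beta^{*}h_{i,s}^\tau$ that actually emerges is real; the paper's proof simply defers to "similar steps as Theorem~\ref{theorem:solutionOS}" without resolving it, so your reading (the substitution affects only the utilities fed into the same structural solution) is the correct reconciliation.
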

\begin{proof}
Equation \eqref{eq:Staleproblem} can be written as: 
\begin{align}
&\sum_{s=1}^S \mathbb{E}\left[\left\| \frac{\Delta_{\tau,s}}{\eta_{\tau,s}}-\sum_{i\in\mathcal{N}_s}\sum_{b=1}^{B_i} \frac{d_{i,s}}{B_i} \frac{G_{(i,b),s}^\tau}{\eta_{\tau,s}}\right\|^2\right]\\
&=\sum_{s=1}^S \mathbb{E}\left[\left\|\frac{1}{\eta_{\tau,s}} \left(\Delta_{\tau,s}-\sum_{i\in\mathcal{N}_s}\sum_{b=1}^{B_i} \frac{d_{i,s}}{B_i} G_{(i,b),s}^\tau\right)\right\|^2\right]\\
&=\sum_{s=1}^S \mathbb{E}\left[\bigg\|\frac{1}{\eta_{\tau,s}} \bigg(\sum_{i\in\mathcal{N}_s} \sum_{b=1}^{B_i} \frac{d_{i,s} \beta_{(i,b),s}^\tau}{B_i} h_{i,s}^\tau+\sum_{(i,b)\in\mathcal{A}_{\tau,s}}d_{i,s} \frac{ G_{(i,b),s}^\tau-\beta_{(i,b),s}^\tau h_{i,s}^\tau }{B_i p_{s|(i,b)}^\tau}
-\sum_{i\in\mathcal{N}_s}\sum_{b=1}^{B_i} \frac{d_{i,s}}{B_i} G_{(i,b),s}^\tau\bigg)\bigg\|^2\right] \label{eq:244}\\
&=\sum_{s=1}^S \mathbb{E}\left[\bigg\|\frac{1}{\eta_{\tau,s}} \bigg(\sum_{(i,b)\in\mathcal{A}_{\tau,s}}d_{i,s} \frac{ G_{(i,b),s}^\tau-\beta_{(i,b),s}^\tau h_{i,s}^\tau }{B_i p_{s|(i,b)}^\tau}
-\sum_{i\in\mathcal{N}_s}\sum_{b=1}^{B_i} \frac{d_{i,s}}{B_i} (G_{(i,b),s}^\tau-\beta_{(i,b),s}^\tau h_{i,s}^\tau)   \bigg)\bigg\|^2\right]\\
&\text{Define $Z_{(i,b),s}^\tau=G_{(i,b),s}^\tau-\beta_{(i,b),s}^\tau h_{i,s}^\tau$}\\
&=\sum_{s=1}^S \mathbb{E}\left[\bigg\|\frac{1}{\eta_{\tau,s}} \bigg(\sum_{(i,b)\in\mathcal{A}_{\tau,s}}d_{i,s} \frac{ Z_{(i,b),s}^\tau }{B_i p_{s|(i,b)}^\tau}
-\sum_{i\in\mathcal{N}_s}\sum_{b=1}^{B_i} \frac{d_{i,s}}{B_i} Z_{(i,b),s}^\tau   \bigg)\bigg\|^2\right]
\\
&=\sum_{s=1}^S \frac{1}{(\eta_{\tau,s})^2} \left[\mathbb{E}\left[\bigg\| \sum_{(i,b)\in\mathcal{A}_{\tau,s}}d_{i,s} \frac{ Z_{(i,b),s}^\tau }{B_i p_{s|(i,b)}^\tau}
  \bigg\|^2\right] - \left\|\sum_{i\in\mathcal{N}_s}\sum_{b=1}^{B_i} \frac{d_{i,s}}{B_i} Z_{(i,b),s}^\tau\right\|^2\right]
\\
&=\sum_{s=1}^S \frac{1}{(\eta_{\tau,s})^2} \bigg[  \sum_{(i,b)\in\mathcal{A}_{\tau,s}} (d_{i,s})^2 \frac{ \|Z_{(i,b),s}^\tau\|^2 }{(B_i)^2 p_{s|(i,b)}^\tau} 
+ \sum_{(i,b)\neq (j,c)} d_{i,s} d_{j,s} \frac{ (Z_{(i,b),s}^\tau)^\top Z_{(j,c),s}^\tau }{B_i B_j } \\
&- \sum_{(i,b),(j,c)} \frac{d_{i,s} d_{j,s}}{B_i B_j} (Z_{(i,b),s}^\tau)^\top Z_{(j,c),s}^\tau
  \bigg]
\\
&=\sum_{s=1}^S \sum_{i\in\mathcal{N}_s}\sum_{b=1}^{B_i} \frac{1-p_{s|(i,b)}^\tau}{p_{s|(i,b)}^\tau} \left\|\frac{d_{i,s}}{\eta_{\tau,s} B_i}
(G_{(i,b),s}^\tau-\beta_{(i,b),s}^\tau h_{(i,b),s}^\tau
)
\right\|^2 \label{eq:245}
\end{align}
We have two groups of hyperparameters to optimize: sampling distribution $\mathbf{p}^\tau$ and staleness coefficients $\{\beta_{(i,b),s}^\tau\}_{s\in \mathcal{S},i\in \mathcal{N}_s}$. There is no constraints on $\beta_{(i,b),s}^\tau$.  Notice that the choice of $\mathbf{p}^\tau$ has no impact on the optimization of $\{\beta_{(i,b),s}^\tau\}_{s\in \mathcal{S},i\in \mathcal{N}_s}$. The optimal solution of $\{\beta_{(i,b),s}^\tau\}_{s\in \mathcal{S},i\in \mathcal{N}_s}$ to minimize Eq. \eqref{eq:245} is equivalent to minimizing $\|G_{(i,b),s}^\tau - \beta_{(i,b),s}^\tau h_{i,s}^\tau\|^2$ for all processors $(i,b)$. The optimal solution is straightforward:  
\begin{align}
\beta_{(i,b),s}^\tau&= \frac{(G_{(i,b),s}^\tau)^\top h_{i,s}^\tau}{\|h_{i,s}^\tau\|^2}
\end{align}
After determining $\{\beta_{(i,b),s}^\tau\}_{s\in \mathcal{S},i\in \mathcal{N}_s}$, the optimization of sampling distribution $\mathbf{p}^\tau$ follows the similar steps as Theorem \ref{theorem:solutionOS}. 
\end{proof}

\end{document}